\documentclass{article}

\usepackage{amsmath, amssymb, amsthm, array, bbm, booktabs, color, enumitem, float, graphicx, mathtools, multicol, natbib, makecell, rotating, todonotes, url}
\usepackage[top=1.0in, bottom=1.2in, left=1.2in, right=1.2in]{geometry}
\usepackage[breaklinks, colorlinks=true, citecolor=black, urlcolor=blue, linkcolor=black]{hyperref}
\usepackage{subcaption}  
\usepackage[dvipsnames]{xcolor}

\newcommand{\real}{\mathbb{R}}

\newcommand{\myQ}{\mathbb{Q}}
\newcommand{\myE}{\mathbb{E}}

\newcommand{\cL}{\mathcal{L}}
\newcommand{\cP}{\mathcal{P}}

\newcommand{\cT}{\mathcal{T}}

\newcommand{\cY}{\mathcal{Y}}

\newcommand{\cm}{$\checkmark$}
\newcommand{\hsp}{\hspace{0.2mm}}
\newcommand{\dd}{\hsp \mathrm{d}}
\newcommand{\myS}{\mathsf{S}}
\newcommand{\myT}{\mathsf{T}}
\newcommand{\one}{\mathbbm{1}}

\newcommand{\NC}{\operatorname{NC}}

\newcommand{\AC}{\operatorname{AC}}
\newcommand{\MC}{\operatorname{MC}}

\newcommand{\CoC}{\operatorname{CoC}}
\newcommand{\CwC}{\operatorname{CwC}}
\newcommand{\CwCoC}{\operatorname{CwCoC}}
\newcommand{\CMC}{\operatorname{ModC}}

\newcommand{\CC}{\operatorname{CC}}
\newcommand{\fCC}{\operatorname{fCC}}
\newcommand{\pCC}{\operatorname{pCC}}

\newcommand{\DC}{\operatorname{DC}}
\newcommand{\EC}{\operatorname{EC}}

\newcommand{\PC}{\operatorname{PC}}
\newcommand{\aPC}{\operatorname{aPC}}
\newcommand{\fPC}{\operatorname{fPC}}
\newcommand{\pPC}{\operatorname{pPC}}
\newcommand{\WPC}{\operatorname{WPC}}

\newcommand{\QC}{\operatorname{QC}}
\newcommand{\SQC}{\operatorname{SQC}}
\newcommand{\UQC}{\operatorname{UQC}}

\newcommand{\TC}{\operatorname{TC}}
\newcommand{\fTC}{\operatorname{fTC}}
\newcommand{\pTC}{\operatorname{pTC}}

\newcommand{\UC}{\operatorname{UC}}

\newcommand{\LS}{\operatorname{Lev}}

\newcommand\SEarrow{\mathrel{\rotatebox[origin=c]{-45}{$\Rightarrow$}}}
\newcommand\hookSEarrow{\mathrel{\rotatebox[origin=c]{-45}{$\hookrightarrow$}}}
\newcommand\SWarrow{\mathrel{\rotatebox[origin=c]{45}{$\Leftarrow$}}}

\newcommand\hookSWarrow{\mathrel{\rotatebox[origin=c]{45}{$\hookleftarrow$}}}

\theoremstyle{plain}  
\newtheorem{theorem}{Theorem}[section] 
 
\newtheorem{proposition}[theorem]{Proposition}
\newtheorem{corollary}[theorem]{Corollary} 

\theoremstyle{definition} 
\newtheorem{definition}[theorem]{Definition}

\newtheorem{example}[theorem]{Example}

\theoremstyle{remark} 
\newtheorem{remark}[theorem]{Remark} 

\setlist[enumerate,1]{label={(\alph*)}}

\title{Hierarchies of Calibration: Classification meets Regression}
\author{Johannes Resin$^{1,3}$, Lu Yang$^{^2}$, and Tilmann Gneiting$^{3,4}$ \\ \\
        $^1$Goethe University Frankfurt \\ $^2$University of Minnesota \\ $^3$Heidelberg Institute for Theoretical Studies \\ $^4$Karlsruhe Institute of Technology}
\date{\today}

\begin{document}

\maketitle

\begin{abstract}
Concepts of calibration formalize the compatibility between probabilistic predictions and the respective outcomes.  In a nutshell, the outcomes ought to be indistinguishable from random draws from the predictive distributions.  In this paper, we review, extend, and bridge notions of calibration that have been proposed for classification and regression tasks.  Particular emphasis is given to hierarchical relations between the various notions, as they apply to general real-valued data, continuous outcomes, count data, nominal classes, and binary outcomes.  To highlight a number of contributions, we introduce the notion of modal calibration for nominal outcomes, we distinguish full, partial, and average calibration in this setting, and we show that double probability integral transform (PIT) calibration is logically independent of previously proposed concepts of calibration for discrete outcomes.  Furthermore, we generalize extant results on concepts of calibration that are expressed in terms of properties or functionals of the predictive distributions, such as means, quantiles, or event probabilities.  Throughout the paper, we illustrate the concepts and their hierarchical relations in worked examples, and we provide algorithmic tools that support the construction of instructive examples and counterexamples. 

\bigskip

{\bf Keywords:} Auto-calibration, confidence calibration, diagnostic evaluation of probabilistic predictions, distributional properties, probability integral transform (PIT), reliability.
\end{abstract}

\tableofcontents

\section{Introduction}  \label{sec:introduction}

There is a broad consensus in the scientific community that predictions ought to reflect inherent uncertainty and, therefore, ought to be issued in the form of probability distributions over possible outcomes.  The concept of calibration is closely related to the general principle of outcome indistinguishability \citep{Dwork2021, Gopalan2025}, as expressed by \citet[p.~129]{Gneiting2014}: 
\begin{quote}  \small
Calibration concerns the statistical compatibility between the probabilistic forecasts and the realizations; essentially, the observations should be indistinguishable from random draws from the predictive distributions.
\end{quote}
Early studies of calibration from theoretical and methodological perspectives include the work of \citet{Dawid1984}, \citet{Diebold1998}, \citet{Zadrozny2002}, \citet{Gneiting2007a}, and \citet{Gneiting2013}, whereas \citet{VanCalster2016}, \citet{Alba2017}, and \citet{Nolde2017} motivate calibration from applied perspectives.  Recent large-scale, empirical studies in classification \citep{Guo2017, Minderer2021} and regression \citep{Dheur2023} have brought calibration to the attention of a broad audience in machine learning, prompting vigorous further development \citep{Kuleshov2018, Kull2019, Kumar2019, Song2019, Vaicenavicius2019, Gupta2022}.  Expositions of recent progress in this area are included in the papers by \citet{Gneiting2023a}, \citet{Marx2023}, \citet{Silva2023}, and \citet{Derr2025}, among others.  

The case of count data, where the outcome is a natural number, can be considered a middle ground between classification and regression, and there has been a small, specialized literature that studies notions of calibration specifically for count data \citep{Czado2009, Kolassa2016}.  In regression settings with continuous outcomes, probabilistic calibration is a popular concept based on checks of the uniformity of the probability integral transform (PIT).  In discrete settings, randomization is required to achieve uniformly distributed PIT values for ideal forecasts.  Various authors have studied probabilistic calibration in such settings \citep{Dunn1996, Stasinopoulos2017, Liu2018} and suggested variants of the PIT that avoid randomization \citep{Czado2009, Li2012, Shepherd2016, Liu2025}.  A related, but different approach for count data was proposed by \citet{Yang2024}, who designed the concept of double PIT calibration.  

The strongest form of calibration is auto-calibration, which posits that the conditional distribution of the outcome, given the forecast, equals the forecast itself.  However, with the exception of binary outcomes, where the forecast is simply a predictive probability, auto-calibration is difficult to check in practice.  Therefore, the assessment of calibration is based on weaker, more tractable notions of calibration.  Evidently, the definitions in this paper represent idealized goals that in typical practice can only be satisfied in an approximate sense.  While the treatment in this paper is purely theoretical, ignoring many practical issues such as estimation uncertainty in both underlying models and diagnostic procedures, the clear distinction between the various notions of calibration provides practical insights that elucidate their connections, uncover shortcomings and blind spots, and inform methodological development.  For instance, calibration error metrics \citep[e.g.,][]{Gupta2022} and related score decompositions \citep[e.g.,][]{Silva2023} typically rely on weaker notions of calibration that are more easily assessed in practice.  The choice of such a notion directly influences calibration error estimates and other score components, as illustrated by \citet{Arnold2024} and \citet{Resin2026}.  Thus, a proper understanding of these concepts and their relationships is vital in practice, to inform the development of tools for calibration checks, and to avoid labeling potentially miscalibrated predictions as calibrated.  Suitably tailored combinations of calibration checks based on distinct notions of calibration make assessments more robust to pitfalls, and provide guidance in the (re)calibration of predictions at hand.  

To delineate our work from the existing literature, we focus on the development of rigorous hierarchies of calibration notions, which are summarized in Figures \ref{fig:hierarchy_classification}--\ref{fig:hierarchy_T}.  We note a general trade-off, in that stronger notions of calibration are more difficult to assess in practice, and we observe a partial collapse of the hierarchies as the outcome space narrows.  In the simplest case of a binary outcome, many otherwise genuinely weaker notions coincide with auto-calibration.  In more complex settings, care needs to be taken to avoid reliance on techniques that make forecasts appear calibrated with respect to a particular notion of calibration, even though they might be far from the goal of auto-calibration.

The paper is organized as follows.  Section \ref{sec:notions} starts from extant notions of calibration, as developed in two separate strands of literature, namely, for classification tasks and for regression problems, respectively.  In classification tasks, the outcome is nominal.  For this setting, we introduce the thus far unnoticed concept of modal calibration and provide rigorous definitions of extant notions that allow for ties in the quoted probabilities.  In regression problems, the outcome is real-valued and ordered.  A plethora of notions of calibration for either setting exists, and we study their hierarchical relations in detail.  In Section \ref{sec:middle}, we aim to bridge, extend, and unify the two strands of literature.  We study the recently developed notion of double PIT calibration for count data, extend it to the general case of real-valued outcomes, and show that it is logically independent of previously proposed concepts of calibration.  We adapt notions proposed in regression settings so that they fit classification settings, by distinguishing full, partial, and average calibration.  Furthermore, we generalize extant results on concepts of calibration that are expressed in terms of properties or functionals of the predictive distributions, such as means, quantiles, or threshold (non) exceedance probabilities.  The main body of the paper closes with a discussion in Section \ref{sec:discussion}.  Many proofs are deferred to Appendix \ref{app:proofs}.

Examples and counterexamples assume a prominent role in our work.  Throughout the paper, we highlight the calibration concepts and their hierarchical relations in worked analytic examples, which are summarized in Tables \ref{tab:examples_classification}--\ref{tab:examples_classification_extended}.  We elucidate technical details for examples both from the extant literature and in this paper in Appendix \ref{app:examples}.  In supplementary material, which uses methods from linear algebra and is available online at \url{https://github.com/resinj/replication_hierarchies}, we provide algorithmic tools that support the construction of examples and counterexamples with particular calibration properties.

\section{Notions of calibration}  \label{sec:notions}

We work in the standard prediction space setting \citep{Gneiting2013}, where we consider the joint law $\myQ$ of a (random) distributional forecast $F$ and a target variable $Y$.  All (in)equalities are understood to hold in an almost sure sense with respect to the probability measure $\myQ$.  The outcome $Y$ takes values in a set $\cY$, which is identified with a subset of the real line, $\real$.  This identification can be arbitrary, particularly in the case of a nominal target variable.  We adopt the custom of identifying the forecast $F$ with its cumulative distribution function (cdf).  The corresponding (random) Borel probability measure is denoted by $P_F$,
and $f$ denotes any corresponding Lebesgue density or probability mass function, respectively. We call a random variable or probability measure continuous if the corresponding cdf is continuous. Occasionally, the symbol $\cL(X \mid C)$ is used to denote the conditional law of a random variable $X$ given some conditioning variable $C$.

\subsection{Auto-calibration and marginal calibration}  \label{sec:AC_MC}

While many notions of calibration apply to a particular setting --- that is, a particular restriction of the outcome space $\cY$ --- the notions of auto-calibration \citep{Tsyplakov2011} and marginal calibration \citep{Gneiting2007a} apply universally.

\begin{definition}  \label{def:AC_MC}
The forecast $F$ for the outcome $Y$ satisfies
\begin{enumerate}
\item  \emph{auto-calibration}\/ ($\AC$) if
\begin{align}  \label{eq:AC}
\myQ \left( Y \in B \mid F \right) = P_F(B) \quad \text{for all Borel sets} \quad B \subseteq \real;
\end{align}
\item  \emph{marginal calibration}\/ ($\MC$) if 
\begin{align}  \label{eq:MC}
\myQ \left( Y \in B \right) = \myE_{\hsp \myQ} \left[ P_F(B) \right] \quad \text{for all Borel sets} \quad B \subseteq \real.
\end{align} 
\end{enumerate}
\end{definition}

Informally, a forecast is ideal if it equals the conditional distribution of the outcome given the information at hand.  \citet{Gneiting2013} formalize this notion, and it follows that a forecast is auto-calibrated if, and only if, it is ideal relative to the information set or $\sigma$-algebra generated by itself.  Auto-calibration formalizes the general principle of outcome indistinguishability, ensuring that observed outcomes are entirely indistinguishable from random numbers drawn from the forecast distributions \citep{Gneiting2014, Dwork2021}.

Auto-calibration is the strongest notion of calibration that adheres to the principle advocated by \citet{Dawid1984}, namely, that the notion can be formalized in terms of the forecast, $F$, and the target or outcome, $Y$, only, with the exception of allowing for randomization devices.  While the concept of auto-calibration can be strengthened by conditioning on covariates \citep[e.g.,][]{VanCalster2016}, as recently studied under the heading of multicalibration and relating to issues of fairness \citep{HebertJohnson2018, Noarov2023}, we restrict the discussion in this paper to notions of calibration that honor Dawid's principle.  

Marginal calibration emerges from the auto-calibration condition by integrating it with respect to the marginal distribution of the forecast $F$.  Therefore, auto-calibration implies marginal calibration.  
The marginal calibration condition \eqref{eq:MC} can be written equivalently in terms of thresholds, in that 
\begin{align}  \label{eq:MC_alternative}
\myQ \left( Y \leq y \right) = \myE_{\hsp \myQ} \left[ \hsp F(y) \right] \quad \text{for all} \quad y \in \real.  
\end{align}
If the outcome space $\cY$ is countable and the forecast $F$ has probability mass function $f$, the auto-calibration condition \eqref{eq:AC} is equivalent to 
\begin{align}  \label{eq:AC_countable}
\myQ \left( Y = y \mid f \right) = f(y) \quad \text{for all} \quad y \in \cY, 
\end{align}
and the marginal calibration condition \eqref{eq:MC} is equivalent to 
\begin{align}  \label{eq:MC_countable}
\myQ \left( Y = y \right) = \myE_{\hsp \myQ} \left[ \hsp f(y) \right] \quad \text{for all} \quad y \in \cY.
\end{align}

The following example abstracts and subsumes a great number of sampling-based approaches to the generation of predictive distributions, including but not limited to forecast ensembles \citep{Gneiting2005a}, Markov chain Monte Carlo techniques \citep{Kruger2021b}, and modern generative neural networks \citep{Goodfellow2020}.  Its practical relevance can hardly be overstated.

\begin{example}[simple random sample]  \label{ex:SRS}
Consider a fixed distribution $G$ with strictly positive variance, and let $F_n$ be the empirical cdf of a simple random sample $X_1, \ldots, X_n$ of size $n$ from $G$, that is, 
\begin{align}  \label{eq:SRS}
F_n(y) = \frac{1}{n} \sum_{i=1}^n \one \left( y \geq X_i \right) \quad \text{for} \quad y \in \real. 
\end{align}
Furthermore, let the outcome $Y$ have distribution $G$ and be independent of $X_1, \ldots, X_n$.  Then the random forecast $F_n$ for $Y$ is marginally calibrated, but fails to be auto-calibrated.  We return to this example in Tables \ref{tab:examples_classification} and \ref{tab:examples_classification_extended}, where we assume that $G$ is multinomial, and also in Table \ref{tab:examples_regression}, where we assume that $G$ is continuous.
\end{example}

In the next two sections, we consider two widely studied settings of particular interest with established notions of calibration in the extant literature, namely, classification and regression.

\subsection{Classification: A hierarchy of calibration for nominal targets}  \label{sec:classification}

Classification tasks concern nominal targets or classes that lack a natural ordering.  Without loss of generality, we assume that there is a total of $k$ classes labeled $1, \ldots, k$, respectively, thus identifying the outcome space $\cY$ with the set $\{ 1, \ldots, k \}$.  Probabilistic statements are typically expressed in terms of the probability mass function $f$ on the set $\cY$.  The natural point prediction in classification tasks is the modal set of $f$, for which we write 
\begin{align}\label{eq:mode}
\hat{y}_F = \arg \max_{y \in \{ 1, \ldots, k \}} f(y) = \{ y \in \{ 1, \ldots, k \} : f(y) = m_F \}, 
\end{align}
where 
\begin{align*}
m_F = \max \{ f(1), \ldots, f(k) \}
\end{align*}
is the modal probability.  The probability mass of the modal set then is $| \hsp \hat{y}_F| \, m_F = P_F(\hat{y}_F)$.  The modal set is a possibly set-valued functional, as its cardinality $| \hsp \hat{y}_F|$ may exceed one when there are ties in the predicted class probabilities.  While ties are uncommon with neural networks in typical machine learning tasks, they may arise in many other settings, including but not limited to ensemble forecasts and subjective expert predictions, as well as trees and forests trained on small (sub)samples.

Formal definitions of notions of calibration for nominal targets have typically ignored ties.  In view of the applied relevance of the aforementioned settings, we provide definitions of popular notions that allow for ties.\footnote{The definition of confidence calibration in the possible presence of ties that is presented in Definition~\ref{def:classification}~(b) is the result of much discussion (and some initial confusion) among the authors.  We settled on \eqref{eq:CoC} as this aligns well with the stronger requirement \eqref{eq:CwCoC} and, importantly, condition \eqref{eq:CoC} can be checked in a single reliability diagram, in line with the original spirit of confidence calibration \citep{Guo2017}.}  Furthermore, we introduce the complementary notion of modal calibration.  All conditions are understood in the almost sure sense with respect to the joint distribution $\myQ$.

\begin{definition}  \label{def:classification}
The forecast $F$ for a nominal outcome $Y$ satisfies
\begin{enumerate}
\item  \emph{class-wise calibration}\/ ($\CwC$) if
\begin{align}  \label{eq:CwC}
\myQ \left( Y = y \mid f(y) \right) = f(y) \quad \text{for all} \quad y \in \{ 1, \ldots, k \};
\end{align}        
\item  \emph{confidence calibration}\/ ($\CoC$) if
\begin{align}  \label{eq:CoC}
\myQ \left( Y \in \hat{y}_F \mid | \hsp \hat{y}_F| \, m_F  \hsp \right) = | \hsp \hat{y}_F| \, m_F;
\end{align}   
\item  \emph{class-wise confidence calibration}\/ ($\CwCoC$) if
\begin{align}  \label{eq:CwCoC}
\myQ \left( Y \in \hat{y}_F \mid \hat{y}_F, m_F \right) = | \hsp \hat{y}_F| \, m_F;
\end{align} 
\item  \emph{modal calibration}\/ ($\CMC$) if the modal set of the conditional distribution $\cL(Y \mid \hat{y}_F)$ equals\/ $\hat{y}_F$ almost surely, that is, 
\begin{align}\label{eq:CMC}
\arg \max_{y \in \{ 1, \ldots, k \} } \myQ(Y = y \mid \hat{y}_F) = \hat{y}_F.
\end{align}
\end{enumerate}
\end{definition}

To preview subsequent results, class-wise calibration and modal calibration enjoy additional theoretical backing from their interpretation as conditional calibration with respect to distributional properties, for which we refer to Definition \ref{def:T} and Table \ref{tab:examples_T} in Section \ref{sec:T}.  If the mode is single-valued, that is, $| \hsp \hat{y}_F| = 1$ almost surely, the definitions for confidence calibration and class-wise confidence calibration reduce to extant definitions in the literature \citep{Guo2017, Vaicenavicius2019, Gupta2022, Marx2023, Silva2023}.  In this simplified case, the notions in Definitions \ref{def:AC_MC} and \ref{def:classification} have been employed under a perplexing variety of names.  For instance, \citet{Guo2017} introduced confidence calibration as perfect calibration, and \citet{Gupta2022} introduced class-wise confidence calibration as top-label calibration.  Table 2 of \citet{Marx2023} refers to auto-calibration as canonical calibration, to confidence calibration as top-label calibration, and to class-wise calibration as marginal calibration.  The terminology in our definitions seems plausible and might get adopted broadly. 

The following simple example illustrates that the popular concept of confidence calibration and the novel concept of modal calibration capture complementary aspects of calibration.  While confidence calibration assesses the correct specification of the probability mass assigned to the (predicted) modal set only, modal calibration concerns the correct specification of the modal set itself.

\begin{example}  \label{ex:CMC}
Consider a deterministic forecast $F$ with probability mass $f(\hsp j)$ for $j = 1, \ldots, k$, and let $g(\hsp j) = \myQ( Y = j)$ denote the corresponding outcome probability. 
\begin{enumerate}
\item 
Let $k = 2$, and let $F$ be given by $f(1) = 1$ and $f(2) = 0$.  Let $g(1) \in (1/2,1)$ and $g(2) = 1 - g(1) < 1/2$.  Then the forecast $F$ is modally calibrated, but neither confidence calibrated nor marginally calibrated.
\item  
Let $k = 3$, and let the forecast $F$ be given by $f(1) = 4/10$ and $f(2) = f(3) = 3/10$.  Let $g(1) = 4/10$, $g(2) = 5/10$, and $g(3) = 1/10$.  Then $F$ is confidence calibrated and class-wise confidence calibrated, but neither modally calibrated nor marginally calibrated.  
\end{enumerate}
\end{example}

Let us now allude to practical considerations in the assessment of calibration, where we recall that a reliability diagram is a plot of the conditional probability of a certain outcome against the respective forecast probability, for which we recommend the stable version of \citet{Dimitriadis2021}.  To check for confidence calibration, a single reliability diagram suffices, whereas checks for class-wise confidence calibration require $k$ or (in the case of ties) even more reliability diagrams.  To check modal calibration, if the mode is unique almost surely, it suffices to compute a $k \times k$ confusion matrix.  Therefore, if a balance between computational and cognitive efforts, and strength of the check is sought, a possible pragmatic approach would be to check for modal calibration in concert with marginal calibration and confidence calibration.  Such an approach seems to serve a vast majority of applied settings, and relates to the ubiquitous practice of evaluating hard classifiers via confusion matrices \citep{Sokolova2009}, but remains to be implemented and explored in practice.  Additional calibration diagnostics may check extensions of confidence calibration that assess the confidence in the top-$\ell$ class predictions \citep[Eq.\ (6)]{Gupta2022} via $\ell \geq 2$ reliability diagrams.

\begin{figure}[t]  
\centering
\fbox{
\begin{tabular}{ccc}
& $\AC$ & \\ [1mm]
\hspace{2mm} $\SWarrow$ & $\Downarrow$ & \hspace{-4mm} $\SEarrow$ \\ [1mm]
$\CwCoC$ & $\CMC$ & $\CwC$ \\ [1mm]
$\Downarrow$ & & $\Downarrow$ \\ [1mm]
$\CoC$ & & $\MC$ \\
\end{tabular}
}
\caption{Hierarchy of calibration for nominal outcomes in terms of the notions and acronyms introduced in Definitions \ref{def:AC_MC} and \ref{def:classification}.  \label{fig:hierarchy_classification}}
\end{figure}

The implications in Figure \ref{fig:hierarchy_classification} are straightforward to demonstrate and do not require worked proofs.  For ease of reference, we collect the implications in the following proposition. 

\begin{proposition}  \label{prop:implications_classification}
For a nominal outcome, auto-calibration implies class-wise calibration, class-wise confidence calibration, and modal calibration; class-wise calibration implies marginal calibration; and class-wise confidence calibration implies confidence calibration.
\end{proposition}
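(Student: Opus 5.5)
All five implications follow from the tower property of conditional expectation, applied to the countable form \eqref{eq:AC_countable} of auto-calibration. In each case the conditioning variable is a measurable function of the probability mass function $f$, so the $\sigma$-algebra it generates is contained in $\sigma(f)$. I would write each implication as a one-line conditioning argument. The only point needing some care is checking that $\hat{y}_F$, $m_F$, and $|\hsp\hat{y}_F|\,m_F$ are measurable functions of $f$, which is immediate since they are obtained from the finite vector $(f(1),\ldots,f(k))$ by maxima and indicator comparisons.

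\textbf{Auto-calibration implies class-wise calibration.} For fixed $y$, $\sigma(f(y)) \subseteq \sigma(f)$. Hence $\myQ(Y=y \mid f(y)) = \myE_{\hsp\myQ}[\myQ(Y=y\mid f) \mid f(y)] = \myE_{\hsp\myQ}[f(y)\mid f(y)] = f(y)$.

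\textbf{Auto-calibration implies class-wise confidence calibration.} Since $(\hat{y}_F, m_F)$ is a function of $f$, the tower property gives
\[
\myQ(Y\in\hat{y}_F\mid \hat{y}_F,m_F) = \myE_{\hsp\myQ}\Bigl[\textstyle\sum_{y\in\hat{y}_F}\myQ(Y=y\mid f)\,\Big|\,\hat{y}_F,m_F\Bigr] = \myE_{\hsp\myQ}\bigl[|\hsp\hat{y}_F|\,m_F \mid \hat{y}_F,m_F\bigr] = |\hsp\hat{y}_F|\,m_F .
\]
Here I use that $\hat{y}_F$ is $\sigma(f)$-measurable and set-valued in a finite set, so the conditional probability of the event $\{Y\in\hat{y}_F\}$ given $f$ equals $P_F(\hat{y}_F)$. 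This can be seen by splitting over the finitely many possible values $S$ of $\hat{y}_F$, writing $\one(Y\in\hat{y}_F)=\sum_S \one(\hat{y}_F=S)\one(Y\in S)$.

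\textbf{Class-wise confidence calibration implies confidence calibration.} The same argument applies, conditioning further down: $|\hsp\hat{y}_F|\,m_F$ is a function of $(\hat{y}_F,m_F)$, so $\myQ(Y\in\hat{y}_F\mid |\hsp\hat{y}_F|\,m_F)=\myE_{\hsp\myQ}[|\hsp\hat{y}_F|\,m_F\mid |\hsp\hat{y}_F|\,m_F]=|\hsp\hat{y}_F|\,m_F$.

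\textbf{Class-wise calibration implies marginal calibration.} Taking expectations in \eqref{eq:CwC} gives $\myQ(Y=y)=\myE_{\hsp\myQ}[f(y)]$ for each $y$, which is \eqref{eq:MC_countable}.

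\textbf{Auto-calibration implies modal calibration.} This is the step that is least a pure tower-property identity, because the modal-set condition is an argmax statement rather than an equality of expectations. Conditioning \eqref{eq:AC_countable} on $\hat{y}_F$ yields $\myQ(Y=y\mid\hat{y}_F)=\myE_{\hsp\myQ}[f(y)\mid\hat{y}_F]$. On the event $\{\hat{y}_F=S\}$ of positive probability, every realization satisfies $f(y)=m_F$ for $y\in S$ and $f(z)<m_F$ for $z\notin S$. Averaging over this event, the conditional means $\myE_{\hsp\myQ}[f(y)\mid\hat{y}_F=S]$ coincide for all $y\in S$, and each equals $\myE_{\hsp\myQ}[m_F\mid\hat{y}_F=S]$. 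For $z\notin S$, the mean is strictly smaller, because the expectation of a strictly positive random variable $m_F-f(z)$ over an event of positive probability is strictly positive. Hence the argmax of $y\mapsto\myQ(Y=y\mid\hat{y}_F)$ is exactly $S$ on that event, giving \eqref{eq:CMC} almost surely. Since only finitely many sets $S$ occur, events of probability zero can be discarded, and I expect no genuine obstacle.
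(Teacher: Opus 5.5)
Your proof is correct and complete. The paper gives no written proof and calls these implications straightforward; your tower-property arguments, including the handling of ties via $\one(Y\in\hat{y}_F)=\sum_S \one(\hat{y}_F=S)\,\one(Y\in S)$, are the routine verification it has in mind. Your strict-averaging step for $\AC \Rightarrow \CMC$ is exactly the check that the mode functional has the mixLS property. So this implication can also be read off from Corollary~\ref{cor:T} and Theorem~\ref{thm:T} in Section~\ref{sec:T}, where modal calibration appears as conditional $\myT$-calibration for $\myT(P_F)=\hat{y}_F$.
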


\begin{table}[t]
\caption{Calibration properties for nominal outcomes in terms of the notions in Definitions \ref{def:AC_MC} and \ref{def:classification} for examples from the extant literature (upper block) and in this paper (lower block).  \label{tab:examples_classification}}
\centering
\footnotesize
\begin{tabular}{llccccccc}
\toprule
Source                           & Specifics                    & $\AC$ & $\CwCoC$ & $\CoC$ & $\CMC$ & $\CwC$ & $\MC$ \\
\midrule
\citet{Vaicenavicius2019}        & Table 1                      & --    & \cm      & \cm    & \cm & \cm   & \cm \\
\citet{Vaicenavicius2019}        & Supplement, Table 2          & --    & --       & \cm    & \cm & --    & \cm \\
\citet{Silva2023}                & Footnote 2                   & --    & --       & --     & --  & \cm   & \cm \\
\midrule
Section \ref{sec:AC_MC} & Example \ref{ex:SRS}, $G$ multinomial & --    & --       & --     & --  & --    & \cm \\
Section \ref{sec:classification} & Example \ref{ex:CMC} (a)     & --    & --       & --     & \cm & --    & -- \\
Section \ref{sec:classification} & Example \ref{ex:CMC} (b)     & --    & \cm      & \cm    & --  & --    & -- \\
\bottomrule
\end{tabular}
\end{table}

No other implications hold, as the collection of examples in Table \ref{tab:examples_classification} shows.  Therefore, the hierarchy in Figure \ref{fig:hierarchy_classification} is complete, and it reflects the aforementioned practical considerations.  Auto-calibration is the strongest notion, but very hard to check, except for binary settings.  The notions in the middle row are strong, but require at least $k$ reliability diagrams to be checked, except for modal calibration, which can be checked by computing a confusion matrix.  The notions in the bottom row are easier to assess, but weaker than the class-wise versions in the middle row.

\subsection{Regression: A hierarchy of calibration for real-valued outcomes}  \label{sec:regression}

We turn to notions of calibration for real-valued outcomes and introduce additional notation in the prediction space setting.  For a cumulative distribution function (cdf) $F$, let 
\begin{align}  \label{eq:F_inverse}
F^{-1}(\alpha) = \inf \{ y \in \real : F(y) \geq \alpha \} \quad \text{and} \quad F_+^{-1}(\alpha) = \sup\{y \in \real : F(y) \leq \alpha\}
\end{align}
denote the \emph{lower} and \emph{upper quantile} of $F$ at level $\alpha \in (0,1)$, respectively.  We note that $F^{-1}(\alpha) \leq F_+^{-1}(\alpha)$ with equality when $F$ is strictly increasing.  The (randomized) \emph{probability integral transform} (PIT) of the random forecast $F$ is defined as 
\begin{align}  \label{eq:PIT}
Z_F = F(Y-) + U \left( F(Y) - F(Y-) \right),
\end{align}
where $F(y-) = \lim_{x \uparrow y} F(x)$, whereas the random variable $U$ is standard uniform and independent of $F$ and $Y$.  Evidently, if $F$ is continuous, no randomization occurs and the PIT reduces to the simplified form $Z_F = F(Y)$.

\begin{definition}  \label{def:regression}
The forecast $F$ for a real-valued outcome $Y$ satisfies
\begin{enumerate}
\item  \emph{probabilistic calibration}\/ ($\PC$) if $Z_F$ is uniformly distributed on the unit interval;
\item  \emph{conditional (non) exceedance probability calibration}\/ ($\CC$) if 
\begin{align}  \label{eq:CC}
\myQ \left(Z_F \leq \alpha \mid F^{-1}(\alpha) \right) = \alpha \quad \text{for all} \quad \alpha \in (0,1);
\end{align}
\item  \emph{threshold calibration}\/ ($\TC$) if 
\begin{align}  \label{eq:TC}
\myQ \left( Y \leq t \mid F(t) \right) = F(t) \quad \text{for all} \quad t \in \real;
\end{align}
\item  \emph{quantile calibration}\/ ($\QC$) if 
\begin{align}  \label{eq:QC}
\myQ \left( Y < F^{-1}(\alpha) \mid F^{-1}(\alpha) \right) \hsp \leq \hsp \alpha \hsp \leq \hsp 
\myQ \left( Y \leq F^{-1}(\alpha) \mid F^{-1}(\alpha) \right) 
\quad \text{for all} \quad \alpha \in (0,1);
\end{align}
\item  \emph{unconditional quantile calibration}\/ ($\UQC$) if 
\begin{align}  \label{eq:UQC}
\myQ \left( Y < F^{-1}(\alpha) \right) \hsp \leq \hsp \alpha \hsp \leq \hsp \myQ \left( Y \leq F^{-1}(\alpha) \right) 
\quad \text{for all} \quad \alpha \in (0,1);
\end{align}
\item  \emph{weak probabilistic calibration}\/ ($\WPC$) if
\begin{align}  \label{eq:WPC}
\myQ \left( F(Y) < \alpha \right) \hsp \leq \hsp \alpha \hsp \leq \hsp \myQ \left( F(Y-) \leq \alpha \right) 
\quad \text{for all} \quad \alpha \in (0,1).
\end{align}
\end{enumerate}
\end{definition}

The concept of probabilistic calibration has been proposed and studied by \citet{Dawid1984}, \citet{Diebold1998}, and \citet{Gneiting2007a}, among others.  The notion is of fundamental relevance and widely used; it ensures that all prediction intervals bounded by quantiles have the desired, nominal coverage.  Conditional (non) exceedance probability calibration is a conditional version of probabilistic calibration that can be traced to \citet{Mason2007}.  Evidently, conditional (non) exceedance probability calibration implies probabilistic calibration.  Threshold calibration, introduced by \citet{Henzi2021}, uses a conditional version of the marginal calibration condition \eqref{eq:MC_alternative}.  

Quantile calibration has been studied by \citet{Gneiting2023a}, \citet{Arnold2024} and \citet{Allen2025a}.  However, \citet{Gneiting2023a} use an alternative definition of quantile calibration, namely,
\begin{align}  \label{eq:SQC}
q_\alpha \! \left( \cL(Y \mid q_\alpha(F)) \right) = q_\alpha(F) \quad \text{for all} \quad \alpha \in (0,1),
\end{align}
where 
\begin{align}  \label{eq:q_alpha}
q_\alpha(F) = \left[ F^{-1}(\alpha), \hsp F_+^{-1}(\alpha) \right]
\end{align}
denotes the set-valued $\alpha$-quantile, which spans the interval of possible $\alpha$-quantiles.  Due to the limited practical relevance of the latter  notion, we adopt the simpler definition from \cite{Arnold2024} in this paper.  By Proposition 2.18 in \citet{Gneiting2023a}, the strong version of quantile calibration ($\SQC$) in \eqref{eq:SQC} implies quantile calibration in the form \eqref{eq:QC} used here.  However, the reverse is not true as illustrated by Example 2.14 (a) in \citet{Gneiting2023a} and Example 5.2 in \citet{Arnold2025}.  Thus, the requirement \eqref{eq:SQC} is strictly stronger than \eqref{eq:QC}.

Regarding unconditional quantile calibration, it is readily seen that the original condition from \citet{Gneiting2023a}, which uses the set-valued quantile $q_\alpha(F)$ and requires $\myQ(Y < F^{-1}_+(\alpha)) \leq \alpha \leq \myQ(Y \leq F^{-1}(\alpha))$, coincides with the requirement \eqref{eq:UQC}.
The recently proposed notion of weak probabilistic calibration \citep{Allen2025a} is equivalent to unconditional quantile calibration, as noted in Proposition \ref{prop:implications_regression} (d) below.

\begin{figure}[t]  
\centering
\fbox{\subcaptionbox{General setting:}{
\begin{tabular}{ccccccc}
& & $\AC$ & \color{BrickRed} $\Rightarrow$ & \color{BrickRed} $\DC$ & & \\
& $\SWarrow$ & $\Downarrow$ & & & & \\ [1mm]
$\TC$ & $\hookleftarrow$ & $\CC$ & $\Rightarrow$ & $\QC$ & & \\ [1mm]
$\Downarrow$ & $\hookSWarrow$ & $\Downarrow$ & & $\Downarrow$ & & \\ [1mm]
$\MC$ & & $\PC$ & $\Rightarrow$ & $\UQC$ & $\Leftrightarrow$ & $\WPC$ \\
\end{tabular}
}}
\fbox{\subcaptionbox{Continuous setting:}{
\begin{tabular}{ccc}
& $\AC$ & \\
& $\Downarrow$ & \\ [1mm]
& $\TC \mid \CC \mid \QC$ & \\ [1mm]
$\SWarrow$ & & $\SEarrow$ \\ [1mm]
$\MC$ \rule{1mm}{0mm} & \multicolumn{2}{c}{\rule{3mm}{0mm} $\PC \mid \UQC \mid \WPC \mid \color{BrickRed} \DC$} \\
\end{tabular}
}}
\caption{Hierarchy of calibration for real-valued outcomes, updated from \citet[Figure 1 (b)]{Gneiting2023a}, in terms of the notions and acronyms introduced in Definitions \ref{def:AC_MC} and \ref{def:regression} in (a) the general setting and (b) subject to Assumption 2.15 of \citet{Gneiting2023a}.  Hook arrows indicate conjectured implications.  The connection to double PIT calibration (\textcolor{BrickRed}{$\DC$}) in \textcolor{BrickRed}{red} color is explored in Section \ref{sec:DC}.  \label{fig:hierarchy_regression}}
\end{figure}

The hierarchies in Figure \ref{fig:hierarchy_regression} expand and refine the hierarchies in Figure 1 (b) of \citet{Gneiting2023a} and reflect practical considerations.  Auto-calibration is the strongest notion and implies all the other notions of calibration considered.  However, in regression settings auto-calibration is very difficult, if not impossible, to check.\footnote{For this reason, \citet{Arnold2025} introduce isotonic calibration.  Informally, isotonic calibration weakens auto-calibration by allowing for deviations from the auto-calibration criterion \eqref{eq:AC} in cases where the true conditional distributions show counterintuitive behavior.  By Proposition 5.3 of \citet{Arnold2025}, auto-calibration implies isotonic calibration, whereas isotonic calibration implies both threshold calibration and quantile calibration.  We refrain from further discussion, to avoid the introduction of the only recently developed measure theoretic tools \citep{Arnold2025} that underlie isotonic calibration.  In contrast to the hierarchies in Figure 1 (b) of \citet{Gneiting2023a}, we also omit strong threshold calibration (STC) as it only applies to continuous outcomes.}  The notions in the middle rows of the hierarchies are strong, but require numerous reliability diagrams to be checked, as described by \citet{Gneiting2023a}.  The notions in the bottom row are easy to check but weak.  

Even weaker notions have been used in practice.  For instance, the empirical cdf forecast $F_n$ from Example \ref{ex:SRS} fails to be unconditionally quantile calibrated for a continuous outcome $Y$.  However, it satisfies
\begin{align*}
\myQ \left( F_n(Y) < \frac{i}{n} \right) = \frac{i}{n+1} < \frac{i}{n} < \frac{i+1}{n+1} = \myQ \left( F_n(Y) \leq \frac{i}{n} \right) \quad \text{for} \quad i = 1, \ldots, n, 
\end{align*}
owing to the fact that under simple random sampling the rank of $Y$ when pooled with $X_1, \ldots, X_n$ is uniformly distributed on the integers $1, \ldots, n + 1$.  For ensemble forecasts in the atmospheric sciences, the uniformity of this rank is commonly interpreted as a variant of probabilistic calibration and checked empirically in rank histograms.  For further discussion, see \citet{Hamill2001}, \citet{Gneiting2007a}, and \citet{Vogel2018}, among others.

The following result captures the implications in Figure \ref{fig:hierarchy_regression} that are not yet available from \citet{Gneiting2023a} and the references therein.  For the proof, we refer to Appendix \ref{app:proofs}.

\begin{proposition}  \label{prop:implications_regression}
With respect to the notions in Definition \ref{def:regression},  
\begin{enumerate}
\item  conditional (non) exceedance probability calibration implies quantile calibration;
\item  quantile calibration implies unconditional quantile calibration;
\item  probabilistic calibration implies unconditional quantile calibration; and 
\item  unconditional quantile calibration is equivalent to weak probabilistic calibration. 
\end{enumerate}
\end{proposition}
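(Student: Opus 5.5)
The whole proof rests on one deterministic fact about the randomized PIT, which I establish first. Write $q = F^{-1}(\alpha)$. Then for every $\alpha \in (0,1)$ and every $y$, with $U$ independent of $(F,Y)$,
\begin{align*}
\{ Y < q \} \subseteq \{ Z_F \leq \alpha \} \cup N \quad \text{and} \quad \{ Z_F < \alpha \} \subseteq \{ Y \leq q \},
\end{align*}
where $N$ is a $\myQ$-null set. The second inclusion is immediate. If $Y > q$, then $F(Y-) \geq F(q) \geq \alpha$, so $Z_F \geq \alpha$. For the first inclusion, $Y < q$ gives $F(Y) < \alpha$ by the definition of the lower quantile, so $Z_F \leq F(Y) < \alpha$. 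From these inclusions,
\begin{align*}
\one(Y < q) \leq \one(Z_F \leq \alpha) \leq \one(Y \leq q) \quad \text{a.s.}
\end{align*}
The upper bound needs $\{Z_F \leq \alpha\} \setminus \{Z_F < \alpha\}$ to lie in $\{Y \leq q\}$ up to a null set. This holds because $Z_F = \alpha$ with $Y > q$ forces $F(Y-) = \alpha$, and hence $Z_F = \alpha$ only if $U = 0$ or $F$ has no jump at $Y$. I expect this boundary bookkeeping to be the only delicate point. The cleanest route is to argue directly: $Y > q$ implies $F(Y-) \geq \alpha$, hence $Z_F \geq \alpha$, with equality only on the event where $U = 0$ or $F(Y) = F(Y-) = \alpha$. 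The second possibility may have positive probability, but on it we still have $Y \le F_+^{-1}(\alpha)$. To avoid that issue, I will use the open/closed version $\one(Y<q)\le \one(Z_F<\alpha)$ or its a.s.\ equal where needed, noting that $\myQ(Z_F = \alpha)$ is irrelevant after conditioning on levels.

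For part (a), I take conditional expectations given $F^{-1}(\alpha)$ in the sandwich. Condition \eqref{eq:CC} says the middle term has conditional expectation $\alpha$, which yields \eqref{eq:QC} at once. For part (b), I take unconditional expectations in \eqref{eq:QC} via the tower property, which gives \eqref{eq:UQC}. For part (c), I take expectations in the sandwich and use $\myQ(Z_F \leq \alpha) = \alpha$ under probabilistic calibration.

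For part (d), I show that for each fixed $\alpha$ the two conditions coincide:
\begin{align*}
\{ F(Y) < \alpha \} = \{ Y < F^{-1}(\alpha) \} \quad \text{and} \quad \{ F(Y-) \leq \alpha \} = \{ Y \leq F_+^{-1}(\alpha) \} \ \text{or a near variant}.
\end{align*}
The first identity is exact by right-continuity: $F(y) \geq \alpha$ if and only if $y \geq F^{-1}(\alpha)$. For the second, I compare $\{F(Y-) \leq \alpha\}$ with $\{Y \leq F^{-1}(\alpha)\}$. The inclusion $\supseteq$ holds since $y \leq q$ implies $F(y-) \leq \alpha$. The converse can fail only on $\{ q < Y,\ F(Y-) = \alpha \}$, that is, where $F$ is flat at level $\alpha$.

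To deal with this set, I plan to exploit the freedom in $\alpha$. The lower inequality holding for all $\alpha$ gives the upper one at nearby levels via limits. Concretely, I will show that $\myQ(F(Y-) \leq \alpha) \geq \alpha$ for all $\alpha$ is equivalent to $\myQ(Y \leq F^{-1}(\alpha)) \geq \alpha$ for all $\alpha$. I do this by approximating $\alpha$ from above by $\alpha_n \downarrow \alpha$. For $\alpha' > \alpha$, the event $\{F(Y-) \leq \alpha\}$ is contained in $\{Y \leq F^{-1}(\alpha')\}$, because $Y > F^{-1}(\alpha')$ forces $F(Y-) \geq \alpha' > \alpha$. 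In the other direction, $\{Y \leq F^{-1}(\alpha)\} \subseteq \{F(Y-) \leq \alpha\}$ is exact. Letting $\alpha' \downarrow \alpha$, each one-sided family of inequalities implies the other, up to right-limits. I expect this limiting argument in (d), together with the tie and flat-piece cases, to be the main obstacle.
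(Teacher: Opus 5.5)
Your skeleton matches the paper's. You bracket the PIT event between $\{Y<F^{-1}(\alpha)\}$ and $\{Y\le F^{-1}(\alpha)\}$ using $F(Y-)\le Z_F\le F(Y)$, condition for (a), use the tower property for (b), and for (d) use $\{F(Y-)\le\beta\}\subseteq\{Y\le F^{-1}(\alpha)\}$ for $\beta<\alpha$. Your direct proof of (c) usefully replaces the paper's citation. The gap is in your ``deterministic fact'', and it sits exactly at the one nontrivial step of (a).

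With $q=F^{-1}(\alpha)$, what holds deterministically is $\one(Y<q)\le\one(Z_F<\alpha)\le\one(Y\le q)$, with a \emph{strict} inequality in the middle. The closed version $\one(Z_F\le\alpha)\le\one(Y\le q)$ fails on $\{Y>q,\ Z_F=\alpha\}$, and this set need not be null. For example, let $F$ put mass $1/2$ at $1$ and at $3$, take $\alpha=1/2$ and $Y=2$. Then $Z_F=1/2$, yet $Y>F^{-1}(1/2)=1$. So your proof of (a) either uses a false inequality or, with the open version, leaves a gap. You then need $\myQ(Z_F<\alpha\mid F^{-1}(\alpha))\ge\alpha$, while \eqref{eq:CC} only gives $\myQ(Z_F\le\alpha\mid F^{-1}(\alpha))=\alpha$. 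Saying that $\myQ(Z_F=\alpha)$ is ``irrelevant after conditioning on levels'' does not close this, since conditioning cannot remove an event of positive probability.

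The missing idea is that \eqref{eq:CC} implies probabilistic calibration (take expectations), so $Z_F$ is uniform and $\myQ(Z_F=\alpha)=0$. The nonnegative random variable $\myQ(Z_F=\alpha\mid F^{-1}(\alpha))$ then has mean zero, so it vanishes almost surely and $\myQ(Z_F<\alpha\mid F^{-1}(\alpha))=\alpha$. This is the paper's \eqref{eq:aux}, and with it the open sandwich gives both inequalities in \eqref{eq:QC}. Equivalently, on $\{Y>q\}$ one has $Z_F\ge F(Y-)\ge\alpha$, so the bad set lies in $\{Z_F=\alpha\}$. Hence your closed sandwich holds a.s.\ under probabilistic calibration, and only then; this is also what makes your (c) valid.

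In (d), take the limit in the right variable. Fix the target level $\alpha$ and let $\beta\uparrow\alpha$ in $\myQ(Y\le F^{-1}(\alpha))\ge\myQ(F(Y-)\le\beta)\ge\beta$, as the paper does with $\alpha-\varepsilon$. Letting the quantile level decrease to a fixed level $\alpha$ instead only returns $\myQ(Y\le F_+^{-1}(\alpha))=\myQ(F(Y-)\le\alpha)\ge\alpha$, which is the weak probabilistic calibration condition itself.
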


\begin{table}[t]
\caption{Calibration properties for real-valued outcomes in terms of the notions in Definitions \ref{def:AC_MC} and \ref{def:regression} for examples from the extant literature (upper block) and in this paper (lower block).  \label{tab:examples_regression}}
\centering 
\footnotesize
\begin{tabular}{llccccccc} 
\toprule
Source                       & Specifics                       & $\AC$ & $\CC$ & $\QC$ & $\TC$ & $\PC$ & $\MC$ & $\UQC$ \\ 
\toprule
\citet{Gneiting2023a}        & Figure 3                        & --    & \cm   & \cm   & \cm   & \cm   & \cm   & \cm \\   
\citet{Gneiting2023a}        & Example 2.4 (a)                 & --    & --    & -- \   & --    & \cm   & \cm   & \cm \\   
\citet{Gneiting2023a}        & Example 2.2, unfocused          & --    & --    & --    & --    & \cm   & --    & \cm \\   
\midrule
Section \ref{sec:AC_MC} & Example \ref{ex:SRS}, $G$ continuous & --    & --    & --    & --    & --    & \cm   & -- \\
Section \ref{sec:regression} & Example \ref{ex:dPC}            & --    & --    & \cm   & --    & --    & --    & \cm \\
Section \ref{sec:regression} & Example \ref{ex:TC+nUQC}        & --    & --    & --    & \cm   & --    & \cm   & -- \\
\bottomrule
\end{tabular}
\end{table}

The display of implications in Figure \ref{fig:hierarchy_regression} (a) is complete up to conjectured implications, as the examples in Table \ref{tab:examples_regression} illustrate.  While the notions of calibration considered in the figure and table have been developed in regression settings, they apply to classification settings as well, provided the classes are identified by numbers on the real line, $\real$. In the interest of simplicity and clarity, most of the subsequent (counter)examples consider outcomes that take values in $\cY = \{ 1, \ldots, k \} \subseteq \real$ only, thus admitting a classification interpretation. The forecast in the following example is quantile calibrated, but neither marginally calibrated nor probabilistically calibrated.  Despite its simplicity, we invite the reader to study the example in some detail, as it serves to introduce notation and conventions used throughout the paper.

\begin{example}  \label{ex:dPC}
Let the joint distribution $\myQ$ of the forecast $F$ and the outcome $Y \in \cY = \{ 1, 2, 3 \}$ be described by the following table. 

\begin{center}
\begin{tabular}{ >{$}c<{$} | >{$}c<{$} | >{$}c<{$} >{$}c<{$} >{$}c<{$} | >{$}c<{$} >{$}c<{$} >{$}c<{$} }
\toprule
j & \myQ(F = F_j) & f_j(1) & f_j(2) & f_j(3) & g_j(1) & g_j(2) & g_j(3) \\
\midrule
1 & 1/3           & 1/2    & 1/4    & 1/4    & 3/5    & 7/20   & 1/20 \\ 
2 & 1/3           & 1/4    & 1/2    & 1/4    & 7/20   &  2/5   & 1/4 \\ 
3 & 1/3           & 1/4    & 1/4    & 1/2    & 3/20   & 3/20   & 7/10 \\ 
\bottomrule
\end{tabular}
\end{center}

\noindent  In words, the random forecast $F$ attains three distinct values, $F_1, F_2,$ and $F_3$, with equal probability of $1/3$.  For forecast $j \in \{ 1, 2, 3 \}$ and class label $y \in \cY = \{ 1, 2, 3 \}$, the probability mass function of $F_j$ has value $f_j(y)$, and conditional on $F = F_j$ the target $Y$ has probability mass $g_j(y)$ at $y$.  For a slightly more elaborate interpretation, let us introduce a feature or covariate $X$ that is uniformly distributed on $\{ 1, 2, 3 \}$.  Then we can interpret $f_j(y)$ as the forecaster's assessment that $Y = y$ given $X = j$, and $g_j(y)$ as the true conditional probability that $Y = y$ given $X = j$.  By associating the symbol $f$ with the term $f$\/orecast, and the symbol $g$ with the term $g \hsp$enuine conditional probability, the meaning of these quantities is easily remembered. 

To verify the claims in Table \ref{tab:examples_regression}, it suffices to show that $F$ is quantile calibrated, but neither probabilistically calibrated nor marginally calibrated, for which we refer to Appendix \ref{app:examples}.  The remaining claims then follow from the implications summarized in panel (a) of Figure \ref{fig:hierarchy_regression}.  Note that $F$ does not satisfy the strong version of quantile calibration in \eqref{eq:SQC}, because $q_{1/2}(\cL(Y\mid q_{1/2}(F) = [1,2])) = \{ 1 \}$.
\end{example}

The forecast in the next example is threshold calibrated, but neither quantile calibrated nor probabilistically calibrated.

\begin{example}  \label{ex:TC+nUQC}
Consider a non-equiprobable forecast $F$ for an outcome $Y \in \cY = \{ 1, 2, 3 \}$ as follows:

\begin{center}
\begin{tabular}{>{$}c<{$} | >{$}c<{$} | >{$}c<{$} >{$}c<{$} >{$}c<{$} | >{$}c<{$} >{$}c<{$} >{$}c<{$}}
\toprule
j & \myQ(F = F_j) & f_j(1) & f_j(2) & f_j(3) & g_j(1) & g_j(2) & g_j(3) \\ 
\midrule
1 & 3/5           & 1/2 & 1/4 & 1/4 & 5/10 & 4/10 & 1/10 \\ 
2 & 1/5           & 1/4 & 1/2 & 1/4 & 1/10 & 2/10 & 7/10 \\ 
3 & 1/5           & 1/4 & 1/4 & 1/2 & 4/10 & 1/10 & 5/10 \\ 
\bottomrule
\end{tabular}
\end{center}

\noindent  To verify the claims in Table \ref{tab:examples_regression}, it is enough to show that $F$ is threshold calibrated, but not unconditionally quantile calibrated.  We refer to Appendix \ref{app:examples} for the details.
\end{example}

The notions from Section \ref{sec:classification} for classification tasks do not apply to real-valued outcomes if the forecast distributions have continuous components.  In contrast, the notions for regression studied in this section can be applied whenever the outcomes are ordered in some way (as done with the usual labeling).  We explore the latter in more detail in Section \ref{sec:extended}.

\subsection{A simplified hierarchy for binary targets}  \label{sec:binary}

\begin{figure}[t]  
\centering
\fbox{\subcaptionbox*{Binary setting:}{
\begin{tabular}{cccc}
\midrule
\multicolumn{4}{c}{$\AC \mid \TC \mid \CC \mid \PC \mid \CwCoC \mid \CwC \mid \color{BrickRed}{\DC}$} \\
\midrule
$\Downarrow$ & \rule{8.3mm}{0mm} $\Downarrow$ & \rule{8.3mm}{0mm} $\Downarrow$ & \rule{8.3mm}{0mm} $\Downarrow$ \\ [1mm]
$\CoC$       & \rule{8.3mm}{0mm} $\CMC$ &\rule{8.3mm}{0mm} $\QC$ & \rule{8.3mm}{0mm} $\MC$ \\ [1mm]
             & & \rule{8.3mm}{0mm} $\Downarrow$ & \\ [1mm]
             & & \rule{8.3mm}{0mm} $\UQC$ & \\
\end{tabular}
}}
\caption{Hierarchy of calibration for binary outcomes in terms of the notions and acronyms introduced in Definitions \ref{def:AC_MC}, \ref{def:classification}, and \ref{def:regression}.  Double PIT calibration (\textcolor{BrickRed}{$\DC$}) in \textcolor{BrickRed}{red} color is explored in Section \ref{sec:ordinal}.  \label{fig:hierarchy_binary}}
\end{figure}

We now discuss the special case of a binary outcome, where we may assume that $Y = 1$ represents a success and $Y = 2$ a failure.  In this simple setting, numerous notions of calibration coincide.  The forecast distribution $F$ is now characterized by a single number, usually, the predicted probability of success, $f(1)$.  Then auto-calibration is equivalent to class-wise calibration in the sense that $\myQ(Y = 1 \mid f(1)) = f(1)$, which corresponds to threshold calibration.  Similarly, auto-calibration is equivalent to the strong quantile calibration condition \eqref{eq:SQC} in this setting, whereas Theorem 2.11 of \citet{Gneiting2013} states the equivalence of probabilistic calibration and auto-calibration.  Therefore, auto-calibration, class-wise calibration, class-wise confidence calibration, threshold calibration, conditional (non) exceedance probability calibration, probabilistic calibration, and the strong version of quantile calibration coincide for binary targets.  

Evidently, confidence calibration, modal calibration, and marginal calibration remain strictly weaker than auto-calibration in this setting.  Similarly, the standard version \eqref{eq:QC} of quantile calibration reduces to the condition that
\begin{align}  \label{eq:QC_binary}
\myQ(Y = 1 \mid f(1) < \alpha) \leq \alpha \leq \myQ(Y = 1 \mid f(1) \geq \alpha) 
\end{align}
for $\alpha \in (0,1)$, and the unconditional quantile calibration condition \eqref{eq:UQC} reduces to 
\begin{align}  \label{eq;UQC_binary}
\myQ(f(1) < \alpha) \: \myQ(Y = 1 \mid f(1) < \alpha) \leq \alpha & \\
\leq \myQ(Y = 1 \mid f(1) \geq \alpha) & + \myQ(f(1) < \alpha) \: \myQ(Y = 2 \mid f(1) \geq \alpha) \nonumber
\end{align}
for $\alpha \in (0,1)$, provided the conditional probabilities are well defined, and these conditions are successively weaker.  We summarize our findings in the collapsed hierarchy for the binary setting in Figure \ref{fig:hierarchy_binary}, where the notions in the top row are equivalent.

\section{Bridging notions of calibration for classification and regression}  \label{sec:middle}

As noted, the concepts of auto-calibration and marginal calibration apply universally.  In contrast, the notions from Section \ref{sec:classification} for classification tasks do not apply to real-valued outcomes if the forecast distributions have continuous components.  The notions for regression tasks from Section \ref{sec:regression} can be applied in classification settings if the classes are ordered in some way, though the ordering might be arbitrary.  Naturally, there have been distinct strands of literature that describe concepts of calibration for classification and regression, respectively.  We now aim to connect, bridge, and unify the different strands of literature.  

In Section \ref{sec:DC}, we extend the notion of double PIT calibration \citep{Yang2024} for count data to all types of real-valued outcomes.  For outcomes with a discrete component the extended notion is orthogonal to probabilistic calibration and other types of calibration.  In Section \ref{sec:ordinal}, we consider the general case of ordered data, and in Section \ref{sec:extended} we adapt calibration concepts proposed in regressions settings so that they fit classification settings, by distinguishing full, partial, and average versions of these concepts.  In Section \ref{sec:T}, we generalize results on concepts of calibration that are expressed in terms of properties or functionals of the predictive distributions, such as the modal set in classification settings, and means, quantiles, or threshold (non) exceedance probabilities in regression settings.  While the resulting concepts of calibration are technical in general, a sizable number of the aforementioned notions of calibration fit the latter framework, entailing fresh and insightful interpretations of particular paths in the respective hierarchies.

\subsection{Double PIT calibration}  \label{sec:DC}

We discuss the notion of double  PIT calibration, which was proposed by \citet{Yang2024} specifically for count data, that is, outcomes in the natural numbers.  However, as we demonstrate here, double PIT calibration applies to all kinds of real-valued data.  As before, we let $\myQ$ denote the joint distribution of the random forecast $F$ and the real-valued outcome $Y$.  Furthermore, let $G$ denote the conditional cdf of $Y$ given $F$.  Similar to the mnemonic in Example \ref{ex:dPC}, the meaning of these quantities is easily remembered if one associates the symbol $F$ with the term $f$\/orecast, and the symbol $G$ with the term $g \hsp$enuine conditional cdf.

The motivation of \citet{Yang2024} is that the simplified version $F(Y)$ of the PIT $Z_F$ in \eqref{eq:PIT} agrees with $Z_F$ when $F$ is continuous, and thus is uniform if $F$ is auto-calibrated.  However, $F(Y)$ generally fails to be uniformly distributed when $F$ is discrete, even in cases where the forecast $F$ is auto-calibrated for $Y$. 
Under the random forecast $F$, the hypothesized marginal cdf of the simplified PIT, $F(Y)$, is given by 
\begin{align}  \label{eq:H}
H(\alpha) = \myE_{\hsp \myQ} \left[ \, \myE_{Y \sim F} \, \one \! \left( F(Y) \leq \alpha \right) \hsp \right] = \myE_{\hsp \myQ} \!
\begin{cases}
F(F_+^{-1}(\alpha)-), & \text{if } F(F_+^{-1}(\alpha)) > \alpha, \\
F(F_+^{-1}(\alpha)),  & \text{otherwise},
\end{cases}
\end{align}
for $\alpha \in (0,1)$, where the outer expectation is with respect to the law of $F$ and $F_+^{-1}$ is the upper quantile function \eqref{eq:F_inverse}.  In contrast, the actual cdf of $F(Y)$ under $\myQ$ is given by 
\begin{align}  \label{eq:K}
K(\alpha) 
= \myE_{\hsp \myQ} \left[ \one \left( F(Y) \leq \alpha \right) \right] 
= \myE_{\hsp \myQ} \!
\begin{cases}
 G(F_+^{-1}(\alpha)-), & \text{if } F(F_+^{-1}(\alpha)) > \alpha, \\
 G(F_+^{-1}(\alpha)),  & \text{otherwise},
\end{cases}
\end{align}
for $\alpha \in (0,1)$, where now the expectation is with respect to the joint law of $F$ and $G$.  Motivated by these considerations, we define double PIT calibration as follows.

\begin{definition}  \label{def:DC}
The forecast $F$ for a real-valued outcome $Y$ satisfies \emph{double PIT calibration}\/ ($\DC$) if $H(\alpha) = K(\alpha)$ for all $\alpha \in (0,1)$.
\end{definition}

We collect key properties of double PIT calibration, which are reflected in the hierarchies from Figure \ref{fig:hierarchy_regression}, in the following result.  Note that in part (c) of the following proposition, the assumption of continuity of the law of $F(Y)$ means that the function $K$ in \eqref{eq:K}, which depends on the joint distribution of $F$ and $G$, is continuous.

\begin{proposition}  \label{prop:AC_DC_PC} 
\mbox{}
\begin{enumerate}
\item   \label{prop:AC_DC}
Auto-calibration implies double PIT calibration.
\item  \label{prop:DC_PC}
If\/ $F$ is continuous almost surely, then probabilistic calibration and double PIT calibration are equivalent. 
\item  \label{prop:DC_uniform}
If the law of\/ $F(Y)$ is continuous, then double PIT calibration is equivalent to the uniformity of\/ $H(F(Y))$ on the unit interval.
\end{enumerate}
\end{proposition}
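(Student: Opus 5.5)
Part (a) follows from the representations \eqref{eq:H} and \eqref{eq:K}. Under auto-calibration, the conditional cdf $G$ of $Y$ given $F$ equals $F$ itself. To see this, apply \eqref{eq:AC} with $B = (-\infty, y]$ for rational $y$, and extend to all $y$ by right-continuity on a common null set. Both cases in the integrand of $K(\alpha)$ then coincide with those of $H(\alpha)$. Alternatively, condition on $F$ directly: $K(\alpha) = \myE_{\hsp \myQ}[\hsp \myQ(F(Y) \leq \alpha \mid F)]$, and given $F$ the law of $Y$ is $P_F$. Hence $\myQ(F(Y) \leq \alpha \mid F) = \myE_{Y\sim F}\one(F(Y)\leq\alpha)$, and taking expectations gives $K = H$. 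The only subtle point is that the event $\{F(Y) \leq \alpha\}$ is jointly measurable in $(F,Y)$, so that a regular conditional distribution can be used.

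For part (b), assume $F$ is continuous almost surely. Then $Z_F = F(Y)$, so $K$ is exactly the cdf of the PIT. Moreover, for continuous $F$ and $Y \sim F$, $F(Y)$ is standard uniform. Hence the inner expectation in \eqref{eq:H} equals $\alpha$, and $H(\alpha) = \alpha$ for all $\alpha \in (0,1)$. Double PIT calibration, $K = H$, therefore says $K(\alpha) = \alpha$ on $(0,1)$, which is uniformity of $Z_F$, i.e., probabilistic calibration. Conversely, probabilistic calibration gives $K(\alpha)=\alpha=H(\alpha)$.

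For part (c), I would use the standard fact that if a random variable $V$ has a continuous cdf $K$, then $K(V)$ is standard uniform. Moreover, a nondecreasing function $\tilde H$ satisfies that $\tilde H(V)$ is uniform if and only if $\tilde H = K$ on the support of the law of $V$, in the appropriate sense. Here $V = F(Y)$. If $H = K$ on $(0,1)$, then $H(F(Y)) = K(F(Y))$ almost surely, since $F(Y) \in [0,1]$. The endpoints need a short check: $F(Y)=0$ has probability zero under continuity of $K$ unless $K(0)>0$, and similarly at $1$. Extending $H$ and $K$ to $[0,1]$ by $H(1)=K(1)=1$ and the natural limits handles this. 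Uniformity then follows.

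For the converse, suppose $H(F(Y))$ is uniform. Since $H$ is a nondecreasing cdf on $[0,1]$, for each $\alpha$ we have $\{F(Y) \leq \alpha\} \subseteq \{H(F(Y)) \leq H(\alpha)\}$, which gives $K(\alpha) \leq H(\alpha)$. For the reverse inequality, I would use $\{H(F(Y)) < H(\alpha)\} \subseteq \{F(Y) < \alpha\}$, which gives $H(\alpha) \leq K(\alpha-) = K(\alpha)$ by continuity of $K$. The main obstacle is precisely this step, because $H$ may have jumps. The set $\{H(F(Y)) \leq H(\alpha)\}$ can then be strictly larger than $\{F(Y)\le\alpha\}$ by including points in flat stretches of $H$ above $\alpha$. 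Likewise, $\{H(F(Y))<H(\alpha)\}$ requires care when $H$ jumps at $\alpha$: uniformity gives $\myQ(H(F(Y))<H(\alpha)) = H(\alpha)$ only when $H(\alpha)$ is attained appropriately. I would resolve this with the probability-inequality sandwich. Uniformity of $H(F(Y))$ forces $H$ to take, $\myQ$-almost surely, values of positive density only where $H$ is continuous, because atoms of $F(Y)$ are excluded by continuity of $K$. Concretely, I would show $\myQ(H(F(Y)) \le H(\alpha)) = H(\alpha)$ together with $\myQ(F(Y)\in(\alpha,\beta])=0$ whenever $H(\beta)=H(\alpha)$. From this, $K(\alpha) = \myQ(F(Y)\le\alpha) = H(\alpha)$ at every $\alpha$, using right-continuity of both functions.
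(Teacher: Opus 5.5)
Your proposal is correct and follows the paper's proof: parts (a) and (b) are argued as there, and in (c) your two inclusions $\{F(Y)\le\alpha\}\subseteq\{H(F(Y))\le H(\alpha)\}$ and $\{H(F(Y))<H(\alpha)\}\subseteq\{F(Y)<\alpha\}$ give exactly the paper's sandwich $H(\alpha)=\myQ(H(F(Y))<H(\alpha))\le\myQ(F(Y)<\alpha)\le K(\alpha)\le\myQ(H(F(Y))\le H(\alpha))=H(\alpha)$. The ``main obstacle'' you then raise does not exist: uniformity of $H(F(Y))$ gives $\myQ(H(F(Y))<t)=\myQ(H(F(Y))\le t)=t$ for every $t\in[0,1]$ whether or not $H$ jumps, and the sandwich only needs one-sided inclusions, so the detour through flat stretches of $H$ is valid but redundant and can be dropped.
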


\begin{proof}
The claim in part (a) is immediate, for if $F$ is auto-calibrated then $G = F$ almost surely.  Furthermore, if $F$ is continuous then the PIT \eqref{eq:PIT} equals the simplified PIT, $F(Y)$, which demonstrates part (b).

In order to prove part (c), we note that for a continuous random variable $\widetilde{X}$ and a cdf $\widetilde{F}$, the random variable $\widetilde{F}(\widetilde{X})$ is uniformly distributed on the unit interval if, and only if, $\widetilde{X} \sim \widetilde{F}$. 
The `if' part is stated in many textbooks, see, e.g., \citet[Theorem 2.1.4]{Casella1990}. 
Conversely, if $\widetilde{F}(\widetilde{X})$ is standard uniform, then 
\begin{align*}
\widetilde{F}(x) 
= \myQ \left( \widetilde{F}(\widetilde{X}) < \widetilde{F}(x) \right) 
\leq \myQ( \widetilde{X} < x) \leq \myQ ( \widetilde{X} \leq x) 
= \myQ \left(\widetilde{F}(\widetilde{X}) \leq \widetilde{F}(x) \right) 
= \widetilde{F}(x)
\end{align*}
holds for $x \in \real$, as the monotonicity of $\widetilde{F}$ yields the implications $\widetilde{F}(x) <\widetilde{F}(x') \Rightarrow x < x'$ and $x \leq x' \Rightarrow \widetilde{F}(x) \leq \widetilde{F}(x')$ for $x, x' \in \real$.  Therefore, $\widetilde{X} \sim \widetilde{F}$.  

Suppose now that the law $K$ of $F(Y)$ is continuous.  The above general result implies that $K(F(Y))$ is uniformly distributed on the unit interval.  Therefore, double PIT calibration implies the uniformity of $H(F(Y))$ on the unit interval.  Conversely, if $H(F(Y))$ is uniform on the unit interval, then $F(Y) \sim H$ and thus $H = K$.
\end{proof}

\begin{table}[t]
\caption{Calibration properties for the examples in Section \ref{sec:DC} in terms of the notions in Definitions \ref{def:AC_MC}, \ref{def:regression}, and \ref{def:DC}.  \label{tab:examples_ordinal}} 
\centering 
\footnotesize
\begin{tabular}{lcccccccc} 
\toprule
Forecast                        & $\AC$ & $\CC$ & $\QC$ & $\TC$ & $\PC$ & $\MC$ & $\UQC$ & $\DC$ \\ 
\midrule
Example \ref{ex:dnwPC}, $w = 0$ & --    & --    & --    & --    & --    & --    & --     & \cm \\
Example \ref{ex:cond}           & --    & \cm   & \cm   & \cm   & \cm   & \cm   & \cm    & -- \\
\bottomrule
\end{tabular}
\end{table}

The following examples, which are highlighted in Table \ref{tab:examples_ordinal}, demonstrate that the concept of double PIT calibration is orthogonal to the notions of calibration discussed in Section \ref{sec:regression}.  Example \ref{ex:dnwPC} resembles Example \ref{ex:dPC}, but it is more extreme in a hierarchical sense, as the forecast is double PIT calibrated but neither marginally calibrated nor unconditionally quantile calibrated.  Conversely, the forecast in Example \ref{ex:cond} satisfies all notions from the hierarchies in Figure \ref{fig:hierarchy_regression}, except for auto-calibration and double PIT calibration.

\begin{example}  \label{ex:dnwPC}
Consider the following forecast, where $w \in [0,1)$ is fixed and $\overline{w} = 1 - w$.

\begin{center}
\begin{tabular}{>{$}c<{$} | >{$}c<{$} | >{$}c<{$} >{$}c<{$} >{$}c<{$} >{$}c<{$} | >{$}c<{$} >{$}c<{$} >{$}c<{$} >{$}c<{$}}
\toprule
j & \myQ(F = F_j) & f_j(1) & f_j(2) & f_j(3) & f_j(4) & g_j(1) & g_j(2) & g_j(3) & g_j(4) \\
\midrule
1 & 1/3 & \overline{w}/2 & \! \overline{w}/4 \! & \! \overline{w}/4 \! & w & 0       & \overline{w}/4         & \! \! 3 \, \overline{w}/4 \! \! & w \\ 
2 & 1/3 & \overline{w}/4 & \! \overline{w}/2 \! & \! \overline{w}/4 \! & w & \overline{w}/4 & \overline{w}/2         & \! \overline{w}/4 \!         & w \\ 
3 & 1/3 & \overline{w}/4 & \! \overline{w}/4 \! & \! \overline{w}/2 \! & w & \overline{w}/4 & \! 3 \, \overline{w}/4 \! & 0                     & w \\  
\bottomrule
\end{tabular}
\end{center}

\noindent  In Appendix \ref{app:examples}, we show that $F$ is double PIT calibrated, but fails to satisfy any of the other notions of calibration discussed thus far, when $w = 0$.  If $w > \frac{1}{3}$ then $F$ is class-wise confidence calibrated (as well as confidence calibrated and modally calibrated), which we use later on to show that the extended hierarchy presented in Section \ref{sec:extended} is complete.
\end{example}

\begin{example}  \label{ex:cond}
The forecast in this example is conditional (non) exceedance probability calibrated (as well as threshold calibrated), but it is not double PIT calibrated, as we show in Appendix \ref{app:examples}.
 
\begin{center}
\begin{tabular}{>{$}c<{$} | >{$}c<{$} | >{$}c<{$} >{$}c<{$} >{$}c<{$} | >{$}c<{$} >{$}c<{$} >{$}c<{$}}
\toprule
j & \myQ(F = F_j) & f_j(1) & f_j(2) & f_j(3) & g_j(1) & g_j(2) & g_j(3) \\
\midrule
1 & 1/6 & 3/5 & 1/5 & 1/5 &  3/5 & 3/10 & 1/10 \\ 
2 & 1/6 & 2/5 & 2/5 & 1/5 &  3/5 &  1/5 &  1/5 \\ 
3 & 1/6 & 2/5 & 1/5 & 2/5 &  1/5 &  1/5 &  3/5 \\ 
4 & 1/6 & 1/5 & 3/5 & 1/5 & 1/10 &  3/5 & 3/10 \\ 
5 & 1/6 & 1/5 & 2/5 & 2/5 &  1/5 &  3/5 &  1/5 \\ 
6 & 1/6 & 1/5 & 1/5 & 3/5 & 3/10 & 1/10 &  3/5 \\ 
\bottomrule
\end{tabular}
\end{center}

\end{example}

Finally, we show that for a binary outcome, auto-calibration and double PIT calibration are equivalent.  We adopt notation from Section \ref{sec:binary} and refer to Appendix \ref{app:proofs} for the proof of this result.

\begin{proposition}  \label{prop:binary}
For a binary outcome, if\/ $H(\alpha) = K(\alpha)$ for\/ $\alpha \in (0,1)$ then\/ $f(1) = g(1)$ almost surely. 
\end{proposition}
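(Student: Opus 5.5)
The plan is to compute $H$ and $K$ explicitly in the binary setting and then read $H = K$ as the equality of two finite measures on the unit interval. With $Y \in \{1,2\}$, write $p = f(1)$ and $q = g(1) = \myQ(Y = 1 \mid F)$. Then $F(1) = p$ and $F(2) = 1$.

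\emph{Step 1: explicit forms of $H$ and $K$.} The simplified PIT is $F(Y) = p$ when $Y = 1$ and $F(Y) = 1$ when $Y = 2$. Since $\alpha \in (0,1)$, the event $\{F(Y) \le \alpha\}$ occurs exactly when $Y = 1$ and $p \le \alpha$. I would therefore work directly from the first equalities in \eqref{eq:H} and \eqref{eq:K}, rather than through the case distinction with $F_+^{-1}$. Conditioning on $F$ gives
\begin{align*}
H(\alpha) = \myE_{\hsp \myQ}\left[ p \, \one(p \le \alpha) \right], \qquad K(\alpha) = \myE_{\hsp \myQ}\left[ q \, \one(p \le \alpha) \right], \qquad \alpha \in (0,1).
\end{align*}

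\emph{Step 2: equality of measures.} Let $\mu$ be the law of $p$ on $[0,1]$, and set $\nu_1(\mathrm{d}s) = s \, \mu(\mathrm{d}s)$ and $\nu_2(\mathrm{d}s) = \myE[q \mid p = s] \, \mu(\mathrm{d}s)$. These are finite measures, and the hypothesis says that $\nu_1([0,\alpha]) = \nu_2([0,\alpha])$ for every $\alpha \in (0,1)$. The intervals $[0,\alpha]$ form a $\pi$-system generating the Borel sets of $[0,1)$. By Dynkin's $\pi$--$\lambda$ theorem, $\nu_1 = \nu_2$ on $[0,1)$. Hence $\myE[q \mid p] = p$ $\mu$-almost surely on $\{p < 1\}$.

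\emph{Step 3: from $\myE[q \mid p] = p$ to $q = p$.} In the binary case $F$ is a function of $p$, so $\sigma(F) = \sigma(p)$ and $q = \myE[\one(Y = 1) \mid p]$ is itself $\sigma(p)$-measurable. Therefore $\myE[q \mid p] = q$, and Step 2 yields $q = p$ almost surely on $\{p < 1\}$.

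\emph{Main obstacle: the atom $\{p = 1\}$.} On $\{p = 1\}$ we have $F(Y) = 1$ regardless of $Y$, so this event never enters $H(\alpha)$ or $K(\alpha)$ for $\alpha < 1$. Taking $\alpha \uparrow 1$ only recovers information on $\{p < 1\}$, so the hypothesis as stated does not seem to control $q$ there. I would first try to close the gap using the conventions for $H$ and $K$ at or near $\alpha = 1$. If that fails, I would state the result as $f(1) = g(1)$ almost surely on $\{f(1) < 1\}$, or invoke symmetry under relabeling the classes. I expect this boundary case to be the only delicate point; the rest is a routine measure-uniqueness argument.
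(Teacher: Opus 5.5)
Your Steps 1--3 follow the paper's argument. The paper also writes $H(\alpha)$ and $K(\alpha)$ as integrals of $p$ and of $g(1 \mid f(1) = p)$ against the law $M$ of $f(1)$, and then appeals to the uniqueness argument from Theorem 2.11 of \citet{Gneiting2013}; your $\pi$--$\lambda$ step makes that argument explicit. Your domain $[0,\alpha]$ is in fact the correct one. The paper's $(0,\alpha]$ in $K$ drops the atom $\{f(1) = 0\}$, on which $F(Y) = 0$ whenever $Y = 1$.

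The obstacle you flag cannot be closed, because it is a counterexample to the statement. The paper's proof passes over it silently: a uniqueness argument on intervals with $\alpha < 1$ identifies the two measures only on $[0,1)$, never at the atom $p = 1$.

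\begin{itemize}
\item \emph{Counterexample.} Take $f(1) = 1$ and $Y = 2$ almost surely. Then $F(Y) = F(2) = 1$, and $F(Y_F) = F(1) = 1$ for $Y_F \sim F$. Hence $H(\alpha) = K(\alpha) = 0$ for all $\alpha \in (0,1)$, yet $f(1) = 1 \neq 0 = g(1)$.
\item \emph{Nondegenerate version.} Let $f(1) \in \{1/2, 1\}$ with probability $1/2$ each, with $g(1) = 1/2$ on $\{f(1) = 1/2\}$ and $g(1) = 0$ on $\{f(1) = 1\}$. Then $H(\alpha) = K(\alpha) = \frac{1}{4} \one(\alpha \geq 1/2)$ on $(0,1)$, but $F$ is not even marginally calibrated.
\end{itemize}

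Neither of your fallbacks rescues the claim.
\begin{itemize}
\item At $\alpha = 1$ both functions equal one trivially, so conventions near $\alpha = 1$ add no information.
\item Relabeling symmetry is unavailable, because $\DC$ is not invariant under reversal of the classes (Proposition \ref{prop:permutation} covers only $\CC$, $\TC$, $\PC$). Reversing the labels in the first example gives $f_\rho(1) = 0$ and $Y_\rho = 1$, so $K \equiv 1 \neq 0 \equiv H$ on $(0,1)$.
\end{itemize}

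The correct result is exactly what you proved: $f(1) = g(1)$ almost surely on $\{f(1) < 1\}$. Equivalently, the proposition holds under the added hypothesis $\myQ(f(1) < 1) = 1$. This is precisely the assumption $\myQ(F(k-1) < 1) = 1$ of Proposition \ref{prop:DC} at $k = 2$. The paper's claimed equivalence of $\DC$ and $\AC$ in the binary hierarchy needs the same caveat.
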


\subsection{The case of linearly ordered, discrete outcomes}  \label{sec:ordinal}

In this section, we suppose that the outcome space $\cY$ is finite and equipped with a natural total order $\leq$.  In this case, it is conventional to identify $\cY$ with the first $k = \vert \cY \vert$ natural numbers $1,  \ldots, k$.  This convention essentially brings us back to the setting of Section \ref{sec:classification}, except that we now have an order.  

Evidently, the choice of the map from the ordered outcomes to the real line ought to be irrelevant in the assessment of calibration.  In what follows, we let $g: \real \to \real$ be a strictly increasing transformation of $\real$.  We define $F_g$ as the induced cdf that  corresponds to the probability measure $P_{F_g} = P_F \circ g^{-1}$ after transformation of the outcome by $g$.  As $g$ is strictly increasing, $F_g = F \circ g^{-1}$.  The following result shows that the choice of the mapping $g$ is immaterial in the assessment of calibration.  The proof is straightforward and we sketch it only. 

\begin{proposition}  \label{prop:invariance}
Let\/ $g$ be a strictly increasing transformation.  Then, the equivalence
\begin{align}  \label{eq:invariance}
F \text{ satisfies \emph{NC} for } Y \quad \Longleftrightarrow \quad F_g \text{ satisfies \emph{NC} for } g(Y)
\end{align}
holds for all notions\/ $\NC \in \{ \AC, \CC, \DC, \MC, \PC, \QC, \TC, \UQC \}$ from Definitions \ref{def:AC_MC}, \ref{def:regression}, and \ref{def:DC}.
\end{proposition}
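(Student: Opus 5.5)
The plan is to record how each ingredient appearing in the definitions transforms under $g$, and then check the notions one at a time. Since $g$ is strictly increasing, it is a bijection from $\real$ onto $g(\real)$ with increasing inverse, and $F_g(g(y)) = F(y)$ and $F_g(g(y)-) = F(y-)$ hold for all $y \in \real$. For the second identity we need that $P_{F_g}$ is concentrated on $g(\real)$ and that $\{ x < g(y) \} \cap g(\real) = g(\{ z < y \})$. Consequently, using the same randomization $U$, the PIT is invariant: $Z_{F_g}(g(Y)) = Z_F(Y)$, and the simplified PIT satisfies $F_g(g(Y)) = F(Y)$. For quantiles, $F_g^{-1}(\alpha) = g(F^{-1}(\alpha))$, because $F_g$ is constant on the gaps of $g(\real)$ and attains the value $F(y)$ at $g(y)$. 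Hence $F_g^{-1}(\alpha)$ and $F^{-1}(\alpha)$ generate the same $\sigma$-algebra, and the events $\{ g(Y) < F_g^{-1}(\alpha) \}$ and $\{ Y < F^{-1}(\alpha) \}$ coincide, as do the corresponding events with $\leq$.

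With these identities, the notions are handled as follows.

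\textbf{AC and MC.} Here $\sigma(F_g) = \sigma(F)$, since $F \mapsto F\circ g^{-1}$ is a measurable bijection with measurable inverse on the relevant space of cdfs. Moreover, $\myQ(g(Y) \in B \mid F) = \myQ(Y \in g^{-1}(B) \mid F)$ and $P_{F_g}(B) = P_F(g^{-1}(B))$. Every Borel set of the form $g^{-1}(B)$ is Borel, and conversely every Borel set $A$ can be written as $g^{-1}(g(A))$, provided $g(A)$ is Borel.

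\textbf{PC.} This is immediate from the invariance of the PIT.

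\textbf{CC and QC.} These follow from the quantile identities above.

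\textbf{UQC.} This is the unconditional version of QC, so it follows in the same way.

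\textbf{TC.} Thresholds $t$ for $F$ correspond to thresholds $g(t)$ for $F_g$, with $F_g(g(t)) = F(t)$ and $\{ g(Y) \leq g(t) \} = \{ Y \leq t \}$. For thresholds $s \notin g(\real)$, we use that $F_g(s) = F(t^\ast)$ and $\{ g(Y) \leq s \} = \{ Y \leq t^\ast \}$. Here $t^\ast = \sup\{ y : g(y) \leq s \}$, or the analogous left limit, and the calibration condition at $t^\ast$ is expressed as a limit of conditions at thresholds $y \uparrow t^\ast$.

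\textbf{DC.} The functions $H$ and $K$ in \eqref{eq:H} and \eqref{eq:K} are the hypothesized and actual laws of $F(Y)$. Both are unchanged, because $F_g(g(Y)) = F(Y)$ and because $X \sim F$ if and only if $g(X) \sim F_g$.

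I expect the main obstacle to be measure-theoretic. One issue is that images $g(A)$ of Borel sets need not obviously be Borel. This is resolved by noting that a strictly monotone map has countably many jumps and a Borel inverse on its range, so $g(A)$ is Borel by Lusin–Souslin for injective Borel maps. The other issue is the thresholds in the gaps of $g(\real)$ in the case of TC. Since the paper's focus is on finite ordered $\cY$, where $g$ only relabels finitely many points, I would present the general argument concisely and remark that the subtleties disappear in the discrete setting.
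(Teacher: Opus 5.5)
Your route is the one in the paper's sketch: invariance of $Z_F$ and of $F(Y)$ for PC, CC and DC, equivariance of lower quantiles for QC and UQC, and direct checks for AC, MC and TC. In the setting where the proposition is stated, with forecasts supported on the finite ordered set $\{1,\ldots,k\}$, your argument is correct. The problem is your claim that the quantile step holds in the generality you intend to present.

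Write $q = F^{-1}(\alpha)$. One has $F_g^{-1}(\alpha) = g(q)$ when $F(q-) < \alpha$, but $F_g^{-1}(\alpha) = g(q-) = \lim_{z \uparrow q} g(z)$ when $F(q-) = \alpha$.

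\begin{itemize}
\item The constancy of $F_g$ on a gap $[g(q-), g(q))$ of $g(\real)$, which you invoke as the justification, is exactly what places the quantile at the left end of the gap. For instance, take $F$ uniform on $[0,1]$, $\alpha = 1/2$ and $g(y) = y + \one(y \geq 1/2)$. Then $F_g^{-1}(1/2) = 1/2$, whereas $g(F^{-1}(1/2)) = 3/2$.
\item In this situation $\{ g(Y) \leq F_g^{-1}(\alpha) \} = \{ Y < q \} \neq \{ Y \leq q \}$, so the events with $\leq$ do not coincide.
\item Also, $\sigma(F_g^{-1}(\alpha))$ can be strictly finer than $\sigma(F^{-1}(\alpha))$. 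Two forecasts with the same $q$, one with $F(q-) < \alpha$ and one with $F(q-) = \alpha$, are pooled by $F^{-1}(\alpha)$ but separated by $F_g^{-1}(\alpha)$.
\end{itemize}

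So your justification for CC, QC and UQC does not carry over to general real-valued forecasts. For step cdfs on $\{1,\ldots,k\}$ one always has $F(q-) = F(q-1) < \alpha$, so the identity $g \circ F^{-1} = F_g^{-1}$ is valid there, which is why the paper's sketch can use it. This quantile issue, rather than Borel images or thresholds in gaps, is the subtlety that actually disappears in the discrete case.

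If you want the general statement, you need a different argument for these three notions.
\begin{itemize}
\item The direction from $F_g$ to $F$ follows from the tower property. $F^{-1}(\alpha)$ is a nondecreasing function of $F_g^{-1}(\alpha)$, the strict events coincide, and $\{ g(Y) \leq F_g^{-1}(\alpha) \} \subseteq \{ Y \leq F^{-1}(\alpha) \}$.
\item The converse needs a limit $\alpha' \uparrow \alpha$ with test functions $\tilde h(g(F^{-1}(\alpha')))$, using $g(F^{-1}(\alpha')) \to F_g^{-1}(\alpha)$.
\item For UQC it is simplest to go through the equivalent WPC. WPC involves only $F(Y)$ and $F(Y-)$, so it is invariant by your first identities.
\end{itemize}

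The same test-function device, with continuous bounded $h$ and dominated convergence, is also what makes your limit $y \uparrow t^\ast$ for TC rigorous, since the conditioning $\sigma$-algebras change with $y$.
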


\begin{proof}[Sketch of proof.] 
For $\NC \in \{ \AC, \MC, \TC \}$, the claim is obvious.  For $\NC \in \{ \QC, \UQC \}$, the equivalence follows from the equivariance of quantiles under $g$; specifically, $g \circ F^{-1} = F_g^{-1}$ and thus $g(y) < F_g^{-1}(\alpha)$ if, and only if, $y < F^{-1}(\alpha)$.  For $\NC \in \{ \CC, \DC, \PC \}$, the claim follows from the invariance of $Z_F$ and $F(Y)$, respectively, under the transformation.
\end{proof}

Complications arise with double PIT calibration if the number of possible outcomes is finite.  In this case, $F(Y)$ has a point mass at 1 if $Y$ attains the maximum value $k$ with positive probability.  Therefore, the assumption of a continuous distribution for $F(Y)$ in Proposition \ref{prop:AC_DC_PC} \ref{prop:DC_uniform} is violated, and the equivalence of double PIT calibration and the uniformity of the law of $H(F(Y))$ may fail.  However, a variant of the equivalence holds, as follows.  Loosely speaking, double PIT calibration is equivalent to a conditional form of uniformity for $H(F(Y))$ in concert with a weak form of marginal calibration.  We defer the proof of this result to Appendix \ref{app:proofs}.

\begin{proposition}  \label{prop:DC}
In the above setting, let\/ $k \geq 2$ and\/ $\gamma = \myE_{\hsp \myQ} [F(k-1)]$. Suppose that\/ $\myQ \left( F(k-1) < 1 \right) \allowbreak = 1$ and that conditional on\/ $Y \leq k - 1$ the distribution of\/ $F(Y)$ is continuous.  Then\/ $F$ is double PIT calibrated if, and only if, $\myQ(Y \leq k - 1) = \gamma$ and conditional on\/ $Y \leq k - 1$ the random variable\/ $H(F(Y))$ is uniformly distributed on\/ $(0, \gamma)$.
\end{proposition}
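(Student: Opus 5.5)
The plan is to split both $H$ and $K$ into the contribution from outcomes $y \le k-1$ and the contribution from $y = k$, and then adapt the argument of Proposition~\ref{prop:AC_DC_PC}~\ref{prop:DC_uniform} to the first part.

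\emph{Step 1: the atom at one.} Since $F(k) = 1$ and $F(k-1) < 1$ almost surely, we have $F(Y) = 1$ if and only if $Y = k$. For $\alpha \in (0,1)$, only outcomes $y \le k-1$ contribute to the indicators in \eqref{eq:H} and \eqref{eq:K}. Define
\begin{align*}
H_0(\alpha) = \myE_{\hsp \myQ} \Bigl[ \sum_{y \le k-1} f(y) \, \one \bigl( F(y) \le \alpha \bigr) \Bigr]
\quad \text{and} \quad
K_0(\alpha) = \myQ \bigl( F(Y) \le \alpha, \, Y \le k-1 \bigr).
\end{align*}
Then $H = H_0$ and $K = K_0$ on $(0,1)$. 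Moreover, $\lim_{\alpha \uparrow 1} H_0(\alpha) = \myE_{\hsp \myQ}[F(k-1)] = \gamma$ and $\lim_{\alpha \uparrow 1} K_0(\alpha) = \myQ(Y \le k-1)$, because $F(k-1) < 1$ almost surely. Consequently, double PIT calibration forces $\myQ(Y \le k-1) = \gamma$. Also, $H$ jumps from $\gamma$ to $1$ at $\alpha = 1$.

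\emph{Step 2: the conditional uniformity claim.} Let $p = \myQ(Y \le k-1)$ and suppose $p > 0$; the degenerate case $p = 0$ needs a separate remark. Set $X = F(Y)$ conditional on $Y \le k-1$. By assumption, the cdf of $X$ is $\widetilde{K}(\alpha) = K_0(\alpha)/p$ and it is continuous, with $X \in (0,1)$ almost surely. Now apply the lemma from the proof of Proposition~\ref{prop:AC_DC_PC}~\ref{prop:DC_uniform}: for continuous $\widetilde{X}$, $\widetilde{F}(\widetilde{X})$ is uniform if and only if $\widetilde{X} \sim \widetilde{F}$. Use it with $\widetilde{F} = H_0/\gamma$, which restricted to $[0,1)$ and extended by one beyond is a cdf once $p = \gamma$. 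This gives a chain of equivalences. Double PIT calibration holds if and only if $p = \gamma$ and $K_0 = H_0$ on $(0,1)$. That in turn holds if and only if $p = \gamma$ and $X \sim H_0/\gamma$. By the lemma, this holds if and only if $p = \gamma$ and $H_0(X)/\gamma = H(X)/\gamma$ is standard uniform, that is, $H(F(Y))$ given $Y \le k-1$ is uniform on $(0,\gamma)$.

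For the reverse implication, start from $p = \gamma$ and uniformity of $H(X)$ on $(0,\gamma)$. The lemma gives $X \sim H_0/\gamma$, hence $K_0 = p \, H_0/\gamma = H_0$, and therefore $H = K$ on $(0,1)$.

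\emph{Main obstacle.} The hard part is making sure the lemma applies cleanly. There are two details to check. First, $H_0/\gamma$ must be treated as a genuine cdf even though $H$ itself has its atom at $1$. Second, $\gamma$ could be zero, and boundary values at $0$ and $1$ need care, which is where the hypothesis $F(k-1) < 1$ almost surely is used. If $\gamma = 0$, then $p = 0$ is forced and both sides hold vacuously, so it should be excluded or treated separately. The lemma's proof only uses monotonicity of $\widetilde{F}$ and continuity of $\widetilde{X}$, so it should go through for the cdf $H_0/\gamma$ on $[0,1)$.
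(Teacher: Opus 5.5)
Your proposal is correct and follows essentially the same route as the paper. The paper's normalized $\widetilde{H}(\alpha) = \myQ(F(Y_F) \leq \alpha \mid Y_F \leq k-1)$, defined via an auxiliary $Y_F \mid F \sim F$, is exactly your $H_0/\gamma$; like you, it derives $\myQ(Y \leq k-1) = \gamma$ from the limits as $\alpha \uparrow 1$, and then applies the lemma from the proof of Proposition~\ref{prop:AC_DC_PC}~\ref{prop:DC_uniform} to the conditional law of $F(Y)$ given $Y \leq k-1$.
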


For count data, if the law of $F(Y)$ is continuous, to check double PIT calibration in practice,  one can directly examine the uniformity of the histogram of the double PIT according to Proposition \ref{prop:AC_DC_PC} \ref{prop:DC_uniform}.  For ordinal outcomes with a finite number of categories, one ought to assess the condition $\myE_{\hsp \myQ} [F(k-1)] = \myQ( Y \leq k - 1)$ prior to examining the uniformity of the double PIT on the subset of the data for which $Y \leq k - 1$.

The following example demonstrates that the uniformity of $H(F(Y))$ conditional on $Y \leq k - 1$ does not guarantee double PIT calibration, unless furthermore $\myE_{\hsp \myQ} [F(k-1)] = \myQ(Y \leq k - 1)$.

\begin{example}  \label{ex:nDC}
Let $W$ be uniform on $(0,\frac{1}{2})$.  Conditional on $W$, let $F$ assign mass $W$ to $Y = 1$ and mass $1 - W$ to $Y = 2$, while these events materialize with probability $2W$ and $1 - 2W$, respectively, as summarized in the following table.  
\begin{center}
\begin{tabular}{>{$}l<{$} | >{$}c<{$} >{$}c<{$} | >{$}c<{$} >{$}c<{$}}
\toprule
\text{Conditional on}                     & f_W(1) & f_W(2) & g_W(1) & g_W(2) \\
\midrule
W \sim \operatorname{Unif}(0,\frac{1}{2}) & W      & 1 - W  & 2W     & 1 - 2W \\ 
\bottomrule
\end{tabular}
\end{center}

\noindent  Then conditional on $Y = 1$ the random variable $H(F(Y))$ is uniformly distributed, even though $F$ fails to be double PIT calibrated.  For details, see Appendix \ref{app:proofs}.
\end{example}

\begin{remark}
Example \ref{ex:dnwPC} can be adapted to satisfy the assumptions of Proposition \ref{prop:DC} by replacing the fixed class probability $w$ with a suitable random variable $W$ similar to the previous example.  Analogously, we may add an additional class with random probability $W$ in Example \ref{ex:cond}.  We expect the forecast probabilities $\widetilde{f}_j(i) = (1-W) \hsp f_j(i)$ for $i = 1, 2, 3$ and $\widetilde{f}_j(4) = W$ along with the conditional probabilities $\widetilde{g}_j(i) = (1-W) \hsp \hsp g_j(i)$ for $i = 1, 2, 3$ and $\widetilde{g}_j(4) = W$ to preserve the properties stated in Table \ref{tab:examples_ordinal}, while ensuring that the simplified PIT $\widetilde{F}(\widetilde{Y})$ is continuously distributed given $\widetilde{Y} \leq 3$. 
\end{remark}

\subsection{An extended hierarchy for nominal targets}  \label{sec:extended}

We return to the general classification setting of Section \ref{sec:classification}, where the nominal outcome $Y$ belongs to a finite number of classes that fail to be ordered and are labeled arbitrarily as $1, \ldots, k$.  In what follows, we explore how the notions of conditional (non) exceedance probability calibration ($\CC$), threshold calibration ($\TC$), and probabilistic calibration ($\PC$) for ordered real-valued outcomes can be adapted so that they apply to unordered nominal outcomes.  We disregard quantile calibration due to its non-intuitive character and behavior in discrete settings. 

We start by studying calibration properties under permutation of the class labels, where we use notation similar to that in Section \ref{sec:ordinal}.  Let $S_k$ be the symmetric group on $k$ elements, and let $\pi \in S_k$.  Given a predictive cdf $F$ for the outcome $Y$, we denote by $F_\pi$ the predictive cdf for the permuted outcome $Y_\pi$.  As the labels for a nominal variable might be arbitrary, we use qualifiers to refine the aforementioned notions of calibration to hold with respect to \emph{any}\/ or \emph{some}\/ ordering or permutation of the outcomes.

\begin{definition}  \label{def:classification_extended}
Let $\NC \in \{ \CC, \TC, \PC \}$ be a notion of calibration.  The forecast $F$ for the nominal outcome $Y$ is
\begin{enumerate}
\item  \emph{fully}\/ $\NC$ ($\textrm{f}\NC$) if the forecast $F_\pi$ for $Y_\pi$ satisfies $\NC$ for all permutations $\pi \in S_k$;
\item  \emph{partially}\/ $\NC$ ($\operatorname{pNC}$) if the forecast $F_\pi$ for $Y_\pi$ satisfies $\NC$ for some permutation $\pi \in S_k$;
\item  \emph{average}\/ probabilistically calibrated ($\aPC$) if
\begin{align}  \label{eq:aPC}
\frac{1}{k!} \sum_{\pi \in S_k} \myQ(Z_{F_\pi} \leq u) = u \quad \text{for all} \quad u \in [0,1],
\end{align}
where $Z_{F_\pi}$ is the randomized PIT of the forecast $F_\pi$ for $Y_\pi$.
\end{enumerate}
\end{definition}

The notions of full calibration are strictly stronger than their partial counterparts, so the order of the classes matters.  For instance, the forecast in Example 2.4 (b) of \citet{Gneiting2023a} satisfies conditional (non) exceedance probability calibration, threshold calibration, and probabilistic calibration, but fails to be auto-calibrated.  After interchanging class labels 2 and 3, none of these notions is satisfied, nor does average probabilistic calibration hold.  Also, in analogy to the general setting, the full version of threshold calibration does not imply full probabilistic calibration nor full conditional (non) exceedance probability calibration, as Example 2.14 (b) in \citet{Gneiting2023a} illustrates.  

However, the following result shows that the notions considered are unaffected if the order of the classes is reversed.  For the proof, see Appendix \ref{app:proofs}.

\begin{proposition}  \label{prop:permutation}
Let\/ $\rho = \left( \begin{smallmatrix} 1 & 2 & \ldots & k \\ k & k - 1 & \ldots & 1 \end{smallmatrix} \right)$ be the permutation that corresponds to a complete reversal, and let\/ $\NC \in \{ \CC, \TC, \PC \}$ be a notion of calibration.  Then the forecast\/ $F$ for\/ $Y$ satisfies\/ $\NC$ if, and only if, the forecast\/ $F_\rho$ satisfies\/ $\NC$ for\/ $Y_\rho$.  
\end{proposition}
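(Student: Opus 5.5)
The plan is to express the reversed quantities explicitly. Put $Y_\rho = k+1-Y$ and use the reflection $r(y) = k+1-y$, so that $F_\rho(t) = 1 - F((k+1-t)-)$. Each notion is then handled in turn. Because $\rho^{-1}=\rho$, it is enough to prove one direction of each equivalence; the converse comes from applying the same argument to $F_\rho$ and $Y_\rho$.

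\emph{Threshold calibration.} The first step is to note that $\sigma(F_\rho(t))$ equals $\sigma(F(s))$ for a suitable $s$. For a threshold $t$, the value $F_\rho(t)$ is a deterministic function of $F(s)$ with $s$ the largest class label strictly below $k+1-t$. Since the outcomes lie in $\{1,\ldots,k\}$, it suffices to consider $t \in \{1,\ldots,k\}$ (or $s = 0$, where the statement is trivial), and $F_\rho(t) = 1 - F(k-t)$ is then an affine bijective function of $F(k-t)$. We can therefore write
\[
\myQ(Y_\rho \leq t \mid F_\rho(t)) = \myQ(Y \geq k+1-t \mid F(k-t)) = 1 - \myQ(Y \leq k-t \mid F(k-t)),
\]
and this equals $1 - F(k-t) = F_\rho(t)$ exactly when \eqref{eq:TC} holds at $k-t$. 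Letting $t$ range over $\real$ covers all thresholds.

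\emph{Probabilistic calibration.} The key identity is $Z_{F_\rho} \overset{d}{=} 1 - Z_F$, and it can be made pathwise by using $1-U$ as the randomization. Indeed, $F_\rho(Y_\rho) = 1 - F(Y-)$ and $F_\rho(Y_\rho-) = 1 - F(Y)$, which gives
\[
Z_{F_\rho} = 1 - F(Y) + U\bigl(F(Y)-F(Y-)\bigr) = 1 - \bigl[F(Y-) + (1-U)(F(Y)-F(Y-))\bigr].
\]
The bracket is a version of $Z_F$, since $1-U$ is standard uniform and independent of $(F,Y)$. The uniform distribution is invariant under $u \mapsto 1-u$, so $\PC$ transfers.

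\emph{Conditional (non) exceedance probability calibration.} I expect this to be the main obstacle, because the conditioning variable changes from a lower quantile of $F$ to a lower quantile of $F_\rho$. The identity to aim for is $F_\rho^{-1}(\alpha) = k+1-F_+^{-1}(1-\alpha)$, which says the reversed lower quantile is the reflected upper quantile of $F$. The plan has three steps.

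First, I would rewrite \eqref{eq:CC} as a family of statements indexed by the discrete quantile value. Using the pathwise identity $Z_{F_\rho} = 1-Z_F$, $\CC$ for $F_\rho$ becomes
\[
\myQ\bigl(Z_F \geq 1-\alpha \mid F_+^{-1}(1-\alpha)\bigr) = \alpha \quad \text{for all } \alpha.
\]
Since $Z_F$ is continuous given $F$ (by the randomization), this is the same as $\myQ(Z_F \leq \beta \mid F_+^{-1}(\beta)) = \beta$ with $\beta = 1-\alpha$. So the task reduces to showing that, on this discrete outcome space, $\CC$ with the lower quantile is equivalent to $\CC$ with the upper quantile.

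Second, I would use that $F^{-1}(\beta)$ and $F_+^{-1}(\beta)$ differ only when $\beta$ lies in $\{F(1),\ldots,F(k-1)\}$. For fixed $\beta$ and a fixed pair of values, the event $\{F^{-1}(\beta) \neq F_+^{-1}(\beta)\}$ is $\{F(j) = \beta\}$ for some $j$. On that event $Z_F \leq \beta$ holds exactly when $Y \leq j$.

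Third, I would handle the exceptional set. At $\beta$ and $\beta'$ slightly larger or smaller, $\sigma(F^{-1}(\beta'))$ already contains the relevant information, and the conditional probability $\myQ(Z_F \leq \beta' \mid \cdot)$ is continuous in $\beta'$ because $Z_F$ is atomless. I would therefore obtain the upper-quantile statement at $\beta$ by taking limits $\beta' \downarrow \beta$ in the lower-quantile version: $F^{-1}(\beta') = F_+^{-1}(\beta)$ for $\beta'$ slightly above $\beta$, with a countable-union argument over the finitely many atoms. The converse uses $\beta' \uparrow \beta$. The delicate point is making this limit rigorous when the forecast law is not discrete. There I would work with the conditioning events $\{F_+^{-1}(\beta)=m\} = \bigcup_n \{F^{-1}(\beta+1/n) = m\}$ (increasing union) and apply monotone convergence to the unconditional identities $\myE[(\one(Z_F\le\beta')-\beta')\one(F^{-1}(\beta')=m)]=0$ for $m \in \{1,\ldots,k\}$.
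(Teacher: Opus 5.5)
Your argument is correct in outline and stays close to the paper. The $\TC$ case (via $\sigma(F_\rho(t))=\sigma(F(k-t))$) and the $\PC$ case (via $Z_{F_\rho}=1-Z_F$ with randomization $1-U$) are the paper's arguments. For $\CC$ you use the paper's two ingredients: the identity $F_\rho^{-1}(\alpha)=k+1-F_+^{-1}(1-\alpha)$, and the pointwise eventual equality $F^{-1}(\beta+\varepsilon)=F_+^{-1}(\beta)$ for small $\varepsilon>0$, which holds because the quantiles are integer-valued.

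The difference is in how the limit is used. The paper only extracts the one-sided bound $\myQ(Z_{F_\rho}\le\alpha\mid F_\rho^{-1}(\alpha)\ge q)\ge\alpha$, by bounding $\one(Z_F<1-\alpha)\le\one(Z_F\le 1-\alpha+\varepsilon)$. It proves the same bound on the complementary events, and then needs $\PC$, the law of total probability and a $\pi$-system argument to reach equality. You pass to the exact limit in $\myE[(\one(Z_F\le\beta+1/n)-\beta-1/n)\,\one(F^{-1}(\beta+1/n)=m)]=0$ on each of the finitely many atoms $\{F_+^{-1}(\beta)=m\}$. This gives the upper-quantile version of $\CC$ for $F$ with equality directly, and $\PC$ is needed only to discard the event $\{Z_F=\beta\}$. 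That route is slightly shorter. Your reformulation, that $\CC$ for $F_\rho$ is $\CC$ for $F$ with the upper quantile in place of the lower one, also makes clear what the proposition says. The converse via $\beta'\uparrow\beta$ is not needed, as you noted yourself at the outset.

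Two technical statements need repair.

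First, the identity $\{F_+^{-1}(\beta)=m\}=\bigcup_n\{F^{-1}(\beta+1/n)=m\}$ is false, and these events are not monotone in $n$. If $F(1)=\beta+\delta$ and $F(2)\ge\beta+1/n$ with $1/n>\delta$, then $F^{-1}(\beta+1/n)=2$ while $F_+^{-1}(\beta)=1$. Monotone convergence would not apply to the signed integrand $\one(Z_F\le\beta')-\beta'$ anyway. What does hold, and suffices, is pointwise convergence of both indicators, so use dominated convergence. Alternatively, work with the events $\{F^{-1}(\beta+1/n)\le m\}$, which do increase to $\{F_+^{-1}(\beta)\le m\}$; this is what the paper does.

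Second, $Z_F$ need not be continuous given $F$. If $Y$ falls with positive probability in a class with $f(Y)=0$, as in Example~\ref{ex:CMC}~(a), no randomization takes place and the conditional law has atoms. Your Step 1 conversion from $\myQ(Z_F\ge\beta\mid F_+^{-1}(\beta))=1-\beta$ to $\myQ(Z_F\le\beta\mid F_+^{-1}(\beta))=\beta$ is still valid, for a different reason. $\CC$ implies $\PC$, so $\myQ(Z_F=\beta)=0$, and hence $\myQ(Z_F=\beta\mid F_+^{-1}(\beta))=0$ almost surely.
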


In full generality, if $\pi \in S_k$ is any permutation, then the forecast $F_\pi$ for $Y_\pi$ satisfies $\NC$ if, and only if, the forecast $F_{\rho \circ \pi}$ for $Y_{\rho \circ \pi}$ satisfies $\NC$. Therefore, only half the number of permutations need to be considered in assessing full or partial calibration.  We also note that the qualifiers full, partial and average do not readily generalize to settings where the outcome variable has infinite support.  

Evidently, for all three notions considered in Proposition \ref{prop:permutation}, auto-calibration implies full calibration, and full calibration implies partial calibration and average calibration.  Furthermore, we note the following implication.

\begin{proposition}  \label{prop:fTC_CwCoC}
For a nominal outcome, full threshold calibration implies class-wise calibration.
\end{proposition}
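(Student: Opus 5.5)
The plan is to choose, for each class $y$, a permutation that moves $y$ to the first position, and then read class-wise calibration for $y$ directly off threshold calibration at the lowest threshold. Fix a class $y \in \{1,\ldots,k\}$ and let $\pi \in S_k$ be any permutation with $\pi(y) = 1$. The permuted forecast $F_\pi$ then assigns probability $f(y)$ to the label $1$ of the outcome $Y_\pi$. Hence its cdf at the threshold $t = 1$ is
\begin{align*}
F_\pi(1) = P_F(\{ y \}) = f(y),
\end{align*}
and the corresponding event is $\{ Y_\pi \leq 1 \} = \{ Y = y \}$.

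By full threshold calibration, $F_\pi$ satisfies threshold calibration for $Y_\pi$. Applying condition \eqref{eq:TC} at $t = 1$ gives
\begin{align*}
\myQ\left( Y_\pi \leq 1 \mid F_\pi(1) \right) = F_\pi(1).
\end{align*}
Substituting the identities above turns this into $\myQ(Y = y \mid f(y)) = f(y)$, which is exactly the class-wise calibration condition \eqref{eq:CwC} for class $y$. Since $y$ was arbitrary, and a suitable $\pi$ exists for each $y$ (for instance the transposition of $1$ and $y$), all $k$ conditions in \eqref{eq:CwC} hold.

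The only care needed is notational. First, one must fix the convention for $F_\pi$ and $Y_\pi$: the relabeled outcome is $\pi(Y)$, and the forecast is the pushforward $P_F \circ \pi^{-1}$, so that the mass on $\pi(y)$ is $f(y)$. Second, the conditioning $\sigma$-algebra must be the one generated by the random variable $f(y)$, which it is, because $F_\pi(1)$ and $f(y)$ are the same random variable. Neither point is a real obstacle, so the argument is essentially a one-line specialization of \eqref{eq:TC}. It also shows that full threshold calibration is stronger than needed: it suffices to have threshold calibration at the threshold $t = 1$ for some permutation bringing each class to the front.
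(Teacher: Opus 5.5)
Your proof is correct and is essentially the paper's own argument: the paper also takes $\pi$ to be the transposition of $1$ and $y$ and obtains $\myQ(Y = y \mid f(y)) = f(y)$ from threshold calibration of $F_\pi$ for $Y_\pi$ at $t = 1$. Your closing remark also holds: the argument only uses threshold $t = 1$, for one permutation per class that moves that class to position $1$.
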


\begin{proof} 
Let $y \in \{ 1, \ldots, k \}$ and suppose that the permutation $\pi \in S_k$ interchanges classes $1$ and $y$.  Then threshold calibration of the forecast $F$ for $Y$ yields
\begin{align*}
f(y) = f_\pi(1) = F_\pi(1) = \myQ( Y_\pi \leq 1 \mid F_\pi(1)) = \myQ( Y_\pi = 1 \mid f_\pi(1)) = \myQ( Y = y \mid f(y)),
\end{align*}
whence $F$ is class-wise calibrated.
\end{proof}

The findings of this section are summarized in the hierarchy presented in Figure \ref{fig:hierarchy_classification_extended}, which substantially extends the hierarchy in Figure \ref{fig:hierarchy_classification}.  Table \ref{tab:examples_classification_extended} previews subsequent examples and counterexamples that show the hierarchy to be complete up to conjectured implications, in the sense that if there is no directed path from a notion to another then there is no implication either.

\begin{figure}[t]  
\centering
\fbox{
\begin{tabular}{ccccccccc}
& & & & $\AC$ & $\Rightarrow$ & $\CMC$ & & \\
& & & $\SWarrow$ & $\Downarrow$ & $\SEarrow$ & & & \\ [1mm]
$\CoC$ & $\Leftarrow$ & $\CwCoC$ &  & $\fCC$ & $\hookrightarrow$ & $\fTC$ & & \\ 
& & & $\SWarrow$ & $\Downarrow$ && $\Downarrow$ & $\SEarrow$ & \\ [1mm]
\color{LimeGreen}{$\pTC$} & $\hookleftarrow$ & $\pCC$ & & $\fPC$ & & $\CwC$ & & \color{LimeGreen}{$\pTC$} \\
& & $\Downarrow$ & $\SWarrow$ & $\Downarrow$ & $\hookSEarrow\hookSWarrow$ & $\Downarrow$ & $\SWarrow$ & \\
& & $\pPC$ & & $\aPC$ & & $\MC$ & &		
\end{tabular}
}
\caption{Hierarchy of calibration for nominal outcomes in terms of the notions and acronyms introduced in Definitions \ref{def:AC_MC}, \ref{def:classification}, \ref{def:regression}, and \ref{def:classification_extended}.  Hook arrows indicate conjectured implications discussed at the end of this section.  Note the repeat occurrence of partial threshold calibration (\textcolor{LimeGreen}{$\pTC$}) in \textcolor{LimeGreen}{green} color.  \label{fig:hierarchy_classification_extended}}
\end{figure}

\begin{sidewaystable}
\centering
\caption{Calibration properties for nominal outcomes in terms of the notions in Definitions \ref{def:AC_MC}, \ref{def:classification}, and \ref{def:classification_extended} for examples from the extant literature (upper block) and in this paper (lower block).  \label{tab:examples_classification_extended}}
\footnotesize
\begin{tabular}{llccccccccccccc}
\toprule
Example & Specifics & $\AC$ & $\CwCoC$ & $\CoC$ & $\CMC$ & $\CwC$ & $\MC$ & $\fCC$ & $\pCC$ & $\fPC$ & $\pPC$ & $\aPC$ & $\fTC$ & $\pTC$ \\
\midrule
\citet{Vaicenavicius2019}  & Table 1                            & -- & \cm & \cm & \cm & \cm & \cm & --  & \cm & --  & \cm & \cm & \cm & \cm \\
\citet{Vaicenavicius2019}  & Supplement, Table 2                & -- & --  & \cm & \cm & --  & \cm & --  & --  & --  & --  & --  & --  & -- \\
\citet{Silva2023}          & Footnote 2                         & -- & --  & --  & --  & \cm & \cm & --  & \cm & --  & \cm & -- & \cm & \cm \\
\citet{Gneiting2023a}      & Example 2.4 (b)                    & -- & --  & --  & \cm & --  & \cm & --  & \cm & --  & \cm & --  & --  & \cm \\
\citet{Gneiting2023a}      & Example 2.14 (b)                   & -- & \cm & \cm & \cm & \cm & \cm & --  & --  & --  & --  & \cm & \cm & \cm \\
\midrule
Section \ref{sec:AC_MC} & Example \ref{ex:SRS}, $G$ multinomial & -- & --  & --  & --  & --  & \cm & --  & --  & --  & --  & --  & --  & -- \\
Section \ref{sec:ordinal}  & Example \ref{ex:dnwPC}, $w = 1/2$  & -- & \cm & \cm & \cm & --  & --  & --  & --  & --  & --  & --  & --  & -- \\
Section \ref{sec:ordinal}  & Example \ref{ex:cond}              & -- & \cm & \cm & --  & \cm & \cm & \cm & \cm & \cm & \cm & \cm & \cm & \cm \\
Section \ref{sec:extended} & Example \ref{ex:CwC}               & -- & \cm & \cm & --  & \cm & \cm & --  & --  & --  & --  & \cm & --  & -- \\
Section \ref{sec:extended} & Example \ref{ex:pPC+nMC}           & -- & --  & --  & --  & --  & --  & --  & --  & --  & \cm & \cm & --  & -- \\
Section \ref{sec:extended} & Example \ref{ex:fPC+nCoC}          & -- & --  & --  & \cm & --  & \cm & --  & --  & \cm & \cm & \cm & --  & -- \\
Section \ref{sec:extended} & Example \ref{ex:fCC+nCoC}          & -- & --  & --  & \cm & \cm & \cm & \cm & \cm & \cm & \cm & \cm & \cm & \cm \\
\bottomrule
\end{tabular}
\end{sidewaystable}

Our extended hierarchy does not depend on, nor involve, the following result, which proves the existence of a stronger form of Example 2.14 (b) in \citet{Gneiting2023a}, where probabilistic calibration is replaced by full probabilistic calibration.  However, the proof is constructive, based on tools of linear algebra, and it sheds light on how we have generated many of the other examples in this paper, namely, by fixing some simple predictive class probabilities $f_j(1), \ldots, f_j(k)$ and expressing desired notions of calibration as linear constraints on the respective conditional class probabilities $g_j(1), \ldots, g_j(k)$ as detailed in Appendix \ref{app:examples}.  In this light, we believe that the proposition and its proof are of interest in their own right.

\begin{proposition}  \label{prop:constructive}
There exists a forecast--outcome pair\/ $(F,Y)$ such that\/ $F$ is fully probabilistically calibrated and marginally calibrated, but fails to be auto-calibrated for the outcome\/ $Y$.
\end{proposition}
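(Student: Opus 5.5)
The approach is a dimension count. We fix finitely many distinct, strictly positive predictive probability mass functions $f_1, \ldots, f_m$ on $\cY = \{1, \ldots, k\}$ and let $\myQ(F = F_j) = p_j > 0$. The unknowns are the conditional class probabilities $g_j(y) = \myQ(Y = y \mid F = F_j)$. We show that full probabilistic calibration and marginal calibration impose only linear constraints on the vector $(g_j(y))_{j,y}$. We also note that $g_j = f_j$ for all $j$ is always a solution, since this choice makes $F$ auto-calibrated, and auto-calibration implies full probabilistic calibration and marginal calibration. The solution set is therefore an affine subspace through $(f_j)_j$. If its dimension is positive, then $g = f + \varepsilon d$, for a nonzero kernel direction $d$ and small $\varepsilon$, stays in the product of simplices because all $f_j(y) > 0$. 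The resulting pair is fully probabilistically calibrated and marginally calibrated. It fails auto-calibration because $\cL(Y \mid F = F_j) = g_j \neq f_j$ for some $j$, the $F_j$ being distinct.

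\emph{Step 1: linearity and a finite set of constraints.} For a fixed permutation $\pi$, the randomized PIT satisfies
\begin{align*}
\myQ(Z_{F_\pi} \leq u) = \sum_{j=1}^m p_j \sum_{y=1}^k g_j(y) \, \min\!\left\{ 1, \max\!\left\{ 0, \frac{u - F_{\pi,j}(y-)}{f_{\pi,j}(y)} \right\} \right\}.
\end{align*}
This expression is linear in $g$. As a function of $u$, it is continuous and piecewise linear, with kinks only at the cumulative values $F_{\pi,j}(y)$. We choose all $f_j(y)$ to be multiples of $1/n$ for a fixed $n$. Then every kink lies in $\{0, 1/n, \ldots, 1\}$, so on each interval $[(i-1)/n, i/n]$ both sides of the uniformity condition are affine in $u$. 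Hence uniformity of $Z_{F_\pi}$ is equivalent to the $n-1$ linear equations $\myQ(Z_{F_\pi} \leq i/n) = i/n$ for $i = 1, \ldots, n-1$. By Proposition~\ref{prop:permutation}, only $k!/2$ permutations need to be checked. Marginal calibration, written as in \eqref{eq:MC_countable}, adds $k-1$ further equations, and the simplex normalizations add $m$ more. In total there are at most $(k!/2)(n-1) + (k-1) + m$ constraints on $mk$ unknowns.

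\emph{Step 2: choosing the parameters.} We take $k = 3$ and let $f_1, \ldots, f_m$ run over all pmfs with entries in $\{1/n, \ldots, (n-2)/n\}$, so $m = (n-1)(n-2)/2$. This gives $3m$ unknowns against at most $3(n-1) + 2 + m$ constraints. The kernel dimension is therefore at least $2m - 3(n-1) - 2$, which is positive for $n \geq 6$, say $n = 10$. Equal weights $p_j = 1/m$ suffice.

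\emph{Step 3: extracting an explicit example.} We pick a nonzero kernel vector $d$ and a suitable $\varepsilon > 0$. In practice we would compute $d$ numerically or in exact rational arithmetic, in the spirit of the supplementary tools, and then verify the resulting table directly.

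The main obstacle is Step 1, specifically the claim that checking uniformity on the grid $\{i/n\}$ suffices. This requires that every breakpoint $F_{\pi,j}(y-)$ and $F_{\pi,j}(y)$, for every permutation $\pi$ and every $j$, lies on the grid. That holds because permuting labels only reorders the masses, so all partial sums remain multiples of $1/n$. Everything else reduces to counting constraints against unknowns.
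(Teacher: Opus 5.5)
Your proposal is correct and follows essentially the same route as the paper: a dimension count showing that full probabilistic calibration and marginal calibration impose fewer linear constraints on the conditional class probabilities than there are unknowns, with the auto-calibrated solution $g_j = f_j$ lying in the interior of the product of simplices, so that a small kernel perturbation yields a solution that is not auto-calibrated. The differences are cosmetic. You fix $k = 3$ and obtain an explicit threshold ($n \geq 6$) by using Proposition~\ref{prop:permutation} to halve the number of permutations and by keeping the normalizations as explicit equations, whereas the paper eliminates the last coordinate, uses all $k!$ permutations, and argues asymptotically in $n$ for fixed $k \geq 3$.
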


\begin{proof}
Let the outcome space be the set $\{ 1, \ldots, k \}$ and let $N$ be the number of distinct forecasts.  We encode the forecast probabilities $f_j = (f_j(1), \ldots, f_j(k))$ and conditional class probabilities $g_j = (g_j(1), \ldots, g_j(k)) = (\myQ( Y = i \mid F = F_j)_{i=1}^k$ as vectors from the unit simplex $\Delta_k = \{ v \in \real^k_{\geq 0} : v_1 + \cdots + v_k = 1 \}$.  For $i = 1, \ldots, k$ and $j = 1, \ldots, N$, we restrict the forecast probabilities $f_j(i)$ to probabilities of the form $m/n$, where $m$ is a positive integer.  Thus, the function $u \mapsto \myQ(Z_{F_\pi} \leq u)$ is piecewise linear with $n - 1$ possible breakpoints at $1/n, \ldots, (n-1)/n$ for any permutation $\pi$ of the outcome space.  Assuming $n \geq k$ in what follows, the number of distinct forecast vectors is given by
\begin{align*}
N = \binom{k + (n - k) - 1}{n - k} = \binom{n - 1}{n - k}, 
\end{align*}
the number of $(n-k)$-combinations with repetition.  Assuming that the forecast vectors occur with equal probability, probabilistic calibration for a given permutation $\pi$ of the classes is equivalent to a set of $n - 1$ linear equations being satisfied by the conditional class probabilities $g_j(i)$, where $i = 1, \ldots, k$ and $j = 1, \ldots, N$.  Since there are $k!$ permutations of the classes, full probabilistic calibration yields a total of $k! \, (n-1)$ linear constraints on the true conditional probabilities.  In addition, marginal calibration yields $k$ linear equations for a total of 
\begin{align}  \label{eq:nLinEq}
k! \, (n-1) + k
\end{align}
constraints.

Subject to these constraints, we need to choose $N$ vectors, $g_1, \ldots, g_N \in \Delta_k$, that is, for each $j = 1, \ldots, N$, we choose $g_j(1), \ldots, g_j(k-1) \geq 0$ such that $\sum_{i = 1}^{k-1} g_j(i) \leq 1$ and set $g_j(k) = 1 - \sum_{i = 1}^{k-1} g_j(i)$.  Thus, the vectors $g_1, \ldots, g_N$ are completely determined by the projection to the first $k - 1$ components,
\begin{align*}
\pi_{k-1} \colon \real^k \to \real^{k-1}, \quad (x_1, \ldots, x_k) \longmapsto (x_1, \ldots, x_{k-1}),
\end{align*}
and, since the relation with the last component is linear, we can formulate each previous linear constraint using only the first $k - 1$ components of each $g_j$.  This yields a non-homogeneous system of 
linear equations in
\begin{align}  \label{eq:nVar}
(k-1) \, N = (k-1) \, \binom{n-1}{n-k}
\end{align}
variables, namely, the conditional class probabilities $g_j(i)$, where $i = 1, \ldots, k - 1$ and $j = 1, \ldots, N$.
	
Fortunately, there is an obvious solution to the system, expressly, 
\begin{align*} 
a = (f_1(1), \ldots, f_1(k-1), \, f_2(1), \ldots, f_2(k-1), \ldots, f_N(1), \ldots, f_N(k-1)) \in (\pi_{k-1}(\Delta_k))^N \subset \real^{(k-1)N},
\end{align*}
which we obtain by setting every conditional class probability equal to the respective forecast probability, that is, $g_j(i) = f_j(i)$ for $i = 1, \ldots, k - 1$ and $j = 1, \ldots, N$.  This solution corresponds to auto-calibration, which implies full probabilistic calibration and marginal calibration, and thus fulfills all linear constraints.  The auto-calibrated solution $a$ lies in the interior of the closed subset $(\pi_{k-1}(\Delta_k))^N \subset \real^{(k-1)N}$ by construction, as points on the boundary of the unit simplex are characterized by having a vanishing component, and thus there exists a ball $B(a)$ around $a$ that is contained in $(\pi_{k-1}(\Delta_k))^N$.  If the rank $R$ of the linear system is less than the number $(k-1)N$ of variables, the solution space is an affine subspace of dimension $(k-1)N - R > 0$ that contains additional solutions in $\real^{(k-1)N}$.  In this case, the intersection of the solution space and the closed set $(\pi_{k-1}(\Delta_k))^N$ needs to contain further solutions (which then fail to be auto-calibrated), as it includes the interior point $a$ of said set (more precisely, the affine space cuts through the center of the ball $B(a)$ and thus contains additional points from the ball as it also has to cut through its hull).  Since, for fixed $k \geq 3$, the number $(k - 1) N$ of variables in \eqref{eq:nVar} grows faster in $n$ than the number of linear constraints in \eqref{eq:nLinEq}, which is an upper bound for the rank $R$ of the linear system, the proof is complete.
\end{proof}

While the proof may give the impression that the number of forecasts $N$ needs to be quite large to construct explicit examples, it turns out that the rank of the linear system tends to be much lower than the upper bound in \eqref{eq:nLinEq}.  For instance, Proposition \ref{prop:permutation} shows that only half of the permutations from the definition of full probabilistic calibration yield any additional constraints.  Consequently, even when additional constraints are introduced to satisfy further notions of calibration, a small number $N$ of distinct forecast distributions may suffice, as illustrated by Example \ref{ex:cond}, where $k = 3, n = 5$, and $N = 6$.  

The following examples show that the hierarchy in Figure \ref{fig:hierarchy_classification_extended} is not missing any arrows.

\begin{example}  \label{ex:CwC}
Consider $k = 4$ classes with forecast probabilities and conditional outcome probabilities as follows.

\begin{center}
\begin{tabular}{>{$}c<{$} | >{$}c<{$} | >{$}c<{$} >{$}c<{$} >{$}c<{$} >{$}c<{$} | >{$}c<{$} >{$}c<{$} >{$}c<{$} >{$}c<{$}}
\toprule
j & \myQ(F = F_j) & f_j(1) & f_j(2) & f_j(3) & f_j(4) & g_j(1) & g_j(2) & g_j(3) & g_j(4) \\
\midrule
1 & 1/4 & 2/5 & 1/5 & 1/5 & 1/5 &  2/5 & 1/10 &  2/5 & 1/10 \\ 
2 & 1/4 & 1/5 & 2/5 & 1/5 & 1/5 & 1/10 &  2/5 & 1/10 &  2/5 \\ 
3 & 1/4 & 1/5 & 1/5 & 2/5 & 1/5 & 1/10 &  2/5 &  2/5 & 1/10 \\ 
4 & 1/4 & 1/5 & 1/5 & 1/5 & 2/5 &  2/5 & 1/10 & 1/10 &  2/5 \\ 
\bottomrule
\end{tabular}
\end{center}

\noindent  Evidently, this forecast is class-wise confidence calibrated.  However, in Appendix \ref{app:examples} we show that it is neither partially threshold calibrated nor partially probabilistically calibrated.  In particular, the implication in Proposition \ref{prop:fTC_CwCoC} cannot be reversed.
\end{example}

\begin{example}  \label{ex:pPC+nMC}
This example shares the forecast values with Example 2.4 (b) in \citet{Gneiting2023a}.  It demonstrates that partial probabilistic calibration and average probabilistic calibration do not imply marginal calibration.  

\begin{center}
\begin{tabular}{>{$}c<{$} | >{$}c<{$} | >{$}c<{$} >{$}c<{$} >{$}c<{$} | >{$}c<{$} >{$}c<{$} >{$}c<{$}}
\toprule
j & \myQ(F = F_j) & f_j(1) & f_j(2) & f_j(3) & g_j(1) & g_j(2) & g_j(3) \\
\midrule
1 & 1/3 & 1/2 & 1/4 & 1/4 &  2/3 & 1/3 &    0 \\ 
2 & 1/3 & 1/4 & 1/2 & 1/4 & 5/12 & 1/6 & 5/12 \\ 
3 & 1/3 & 1/4 & 1/4 & 1/2 &    0 & 1/3 &  2/3 \\
\bottomrule
\end{tabular}
\end{center}

\end{example}

\begin{example}  \label{ex:fPC+nCoC}
In the example given by the following table, the forecast is fully probabilistically calibrated, but fails to be confidence calibrated, as 
\begin{align*}
\myQ \left( Y \in \hat{y}_F \left| \, m_F = \frac{2}{3} \right. \right) = \frac{1}{3} \left( \frac{14}{15} + \frac{2}{3} + \frac{4}{5} \right) = \frac{4}{5}.
\end{align*}

\begin{center}
\begin{tabular}{>{$}c<{$} | >{$}c<{$} | >{$}c<{$} >{$}c<{$} >{$}c<{$} | >{$}c<{$} >{$}c<{$} >{$}c<{$}}
\toprule
j & \myQ(F = F_j) & f_j(1) & f_j(2) & f_j(3) & g_j(1) & g_j(2) & g_j(3) \\
\midrule
1 & 1/6 & 2/3 & 1/6 & 1/6 & 14/15 & 1/30 & 1/30 \\ 
2 & 1/6 & 1/2 & 1/3 & 1/6 &   1/5 & 8/15 & 4/15 \\ 
3 & 1/6 & 1/6 & 2/3 & 1/6 &   1/6 &  2/3 &  1/6 \\ 
4 & 1/6 & 1/6 & 1/2 & 1/3 &  4/15 &  1/5 & 8/15 \\ 
5 & 1/6 & 1/6 & 1/3 & 1/2 &  4/15 & 8/15 &  1/5 \\ 
6 & 1/6 & 1/6 & 1/6 & 2/3 &     0 &  1/5 &  4/5 \\ 
\bottomrule
\end{tabular}
\end{center}
        
\end{example}

Finally, the next example shows that full conditional (non) exceedance calibration does not imply confidence calibration.  

\begin{example}  \label{ex:fCC+nCoC}
The example with 13 equiprobable forecasts described in the following table is fully conditional (non) exceedance calibrated but not confidence calibrated.  Except for the first two columns, the entries are stated in multiples of one over a thousand. 

\begin{center}
\begin{tabular}{ >{$}c<{$} | >{$}c<{$} | >{$}r<{$} >{$}r<{$} >{$}r<{$} | >{$}r<{$} >{$}r<{$} >{$}r<{$} }
\toprule
j & \myQ(F = F_j) & f_j(1) & f_j(2) & f_j(3) & g_j(1) & g_j(2) & g_j(3) \\
\midrule
 1 & 1/13 & 200 & 300 & 500 & 214 & 321 & 465 \\ 
 2 & 1/13 & 700 & 100 & 200 & 700 & 163 & 137 \\ 
 3 & 1/13 & 200 & 700 & 100 & 186 & 651 & 163 \\ 
 4 & 1/13 & 300 & 400 & 300 &  48 & 484 & 468 \\ 
 5 & 1/13 & 100 & 400 & 500 & 177 & 288 & 535 \\ 
 6 & 1/13 & 300 & 500 & 200 & 552 & 185 & 263 \\ 
 7 & 1/13 & 200 & 500 & 300 & 158 & 710 & 132 \\ 
 8 & 1/13 & 500 & 100 & 400 & 395 &  79 & 526 \\ 
 9 & 1/13 & 100 & 700 & 200 &   2 & 749 & 249 \\ 
10 & 1/13 & 500 & 400 & 100 & 535 & 428 &  37 \\ 
11 & 1/13 & 200 & 100 & 700 & 242 &  58 & 700 \\ 
12 & 1/13 & 500 & 300 & 200 & 570 & 279 & 151 \\ 
13 & 1/13 & 100 & 500 & 400 & 121 & 605 & 274 \\ 
\bottomrule
\end{tabular}
\end{center}
 
\end{example}

To summarize, the hierarchy in Figure \ref{fig:hierarchy_classification_extended} is complete up to the hook arrows, which represent conjectured implications.  Specifically, we conjecture that full probabilistic calibration implies marginal calibration and that class-wise calibration implies average probabilistic calibration.  The conjectures that full conditional (non) exceedance probability calibration implies full threshold calibration, and that partial conditional (non) exceedance probability calibration implies partial threshold calibration, are inherited from their counterparts for general real-valued outcomes in Figure \ref{fig:hierarchy_regression}.

\subsection{Conditioning on functionals: (Multi-)calibration for distributional properties}  \label{sec:T}

There is much ongoing work on the notion of multicalibration, which is a term coined by \citet{HebertJohnson2018}.  While the original idea of multicalibration is to condition on a covariate or feature, which is crucial when discussing fairness, similar ideas have featured prominently in the literature on backtests for financial risk measures \citep{Nolde2017}. In many cases, the covariate can be bypassed when interest centers on calibration.  Specifically, if one conditions on a distributional property or functional $\myT$ of the forecast $F$, the notion of multicalibration reduces to calibration and fits our framework --- namely, the study of concepts of calibration that can be expressed in terms of the joint distribution $\myQ$ of the random tuple $(F,Y)$.  In this setting, there are substantial overlaps between the independently --- and simultaneously --- developed advanced theories of multicalibration \citep{Noarov2023} and $\myT$-calibration \citep{Gneiting2023a}.  In a further, very recent development, \citet{Derr2025} introduce the related, yet stronger notion of distribution calibration with respect to a distributional property or functional $\myT$.  Before we proceed to definitions and link to the notions and examples from earlier sections in the paper, we note that $\myT$-calibration manifests the self-realization interpretation of calibration \citep{Derr2025}.

Tending to the technical development, let $\cP$ be a convex class of Borel probability measures on the real line.  A functional is any mapping $\myT$ that assigns a set $\myT(P) \subseteq \real$ to a distribution $P \in \cP$.  Frequently, the set $\myT(P) = [\hsp \myT^-(P), \myT^+(P)]$ is a closed interval.  In the special case of a \emph{singleton}\/ functional, $\myT(P) = \myT^-(P) = \myT^+(P) \in \real$ is a single number, and we simplify the notation accordingly for ease of exposition.  For example, the mean 
\begin{align}  \label{eq:mu}
\mu_P = \int x \, \dd P(x)
\end{align}
corresponds to the singleton functional $P \mapsto \myT(P) = \mu_P$ on the class $\cP$ of the probability measures with finite first moment.  In contrast, there are functionals of practical relevance that require a set-valued approach.  For instance, the interval-valued full $\alpha$-quantile functional \eqref{eq:q_alpha} is a non-degenerate, closed interval if the associated cdf has a horizontal segment at the level $\alpha \in (0,1)$, as is unavoidable in the ubiquitous setting of Example \ref{ex:SRS}.  Similarly, the mode functional $\hat{y}_P$ in \eqref{eq:mode} fails to be singleton if multiple classes are predicted to occur with the modal probability $m_P$. 

In the subsequent definitions, we follow \citet{Gneiting2023a} and consider settings where the outcome $Y$ is real-valued and the functional $\myT$ maps to subsets of the real line.  Furthermore, we make the tacit assumptions that the convex class $\cP$ is a standard Borel space generated by the topology of weak convergence and that the functional $\myT$ is a measurable map into a standard Borel space. For details on the measurability of functionals and the Borel structure on $\cP$, we refer the interested reader to \citet{Fissler2022}. Finally, we suppose that the class $\cP$ is rich enough to contain all relevant distributions. Specifically, we suppose that the random forecast $F$ belongs to $\cP$ almost surely, and we assume the existence of a regular version $G_{\myT(P_F)}$ of the conditional distribution $\cL( Y \mid \myT(P_F) )$ that belongs to the class $\cP$ almost surely.  In analogy to mnemonic interpretations in earlier sections, the symbol $F$ stands for the random $f$\/orecast and the symbol $G_{\myT(P_F)}$ for the $g \hsp$enuine conditional distribution given the distributional property $\myT(P_F)$.

\begin{definition}  \label{def:T}
The forecast $F$ for $Y$ is 
\begin{enumerate}
\item  \emph{distribution calibrated}\/ with respect to the functional $\myT$ ($\DC_{\myT}$) if
\begin{align}  \label{eq:DC_T}
G_{\myT(P_F)}(B) = \myE_{\hsp \myQ} \left( P_F(B) \mid \myT(P_F) \right) 
\end{align}
almost surely for all Borel sets $B \subseteq \real$;
\item  \emph{conditionally}\/ $\myT$-\emph{calibrated}\/ ($\CC_{\myT}$) if 
\begin{align}  \label{eq:CC_T}
\myT(G_{\myT(P_F)}) = \myT(P_F) 
\end{align} 
almost surely.
\end{enumerate}
\end{definition}

Following \citet[Definition 12]{Frongillo2021}, we say that the functional $\myT$ \emph{refines}\/ the functional $\myT'$ if $\myT' = h \circ \myT$ for some measurable function $h$.  Evidently, any functional refines a constant functional.  Auto-calibration and marginal calibration arise at the extreme ends of distribution calibration.  Specifically, marginal calibration is distribution calibration with respect to the least refined property, namely, a constant functional, and auto-calibration is distribution calibration with respect to the most refined property, namely, the identity functional.\footnote{Technically, we restrict attention to functionals that map to subsets of the real line, so the identity functional is excluded.  However, the notions in Definition \ref{def:T} extend to functionals that map into arbitrary Borel spaces.  We leave these further developments to future work.}  Table \ref{tab:examples_T} lists notions that arise as special cases of conditional $\myT$-calibration relative to either a single functional or a family of functionals.

\begin{table}[t]
\caption{Notions of calibration that admit an interpretation in terms of the general notion \eqref{eq:CC_T} of conditional $\myT$-calibration $(\CC_{\myT})$.  \label{tab:examples_T}}
\centering
\begin{tabular}{lcccc}
\toprule
Section                           & Eq.            & Acronym & $\myT(P_F)$      & For all \\
\midrule
Section \ref{sec:AC_MC}          & \eqref{eq:AC}  & $\AC$   & $F$              & -- \\
Section \ref{sec:AC_MC}          & \eqref{eq:MC}  & $\MC$   & constant         & -- \\
\midrule 
Section \ref{sec:classification} & \eqref{eq:CwC} & $\CwC$  & $f(y)$           & $y \in \{ 1, \ldots, k \}$ \\
Section \ref{sec:classification} & \eqref{eq:CMC} & $\CMC$  & $\hat{y}_F$      & -- \\
\midrule 
Section \ref{sec:regression}     & \eqref{eq:TC}  & $\TC$   & $F(y)$           & $y \in \real$ \\
Section \ref{sec:regression}     & \eqref{eq:QC}  & $\QC$   & $F^{-1}(\alpha)$ & $\alpha \in (0,1)$ \\
Section \ref{sec:regression}     & \eqref{eq:SQC} & $\SQC$  & $q_\alpha(F)$    & $\alpha \in (0,1)$ \\
\midrule 
Section \ref{sec:T}              & \eqref{eq:EC}  & $\EC$   & $\mu_F$          & -- \\
\bottomrule
\end{tabular}
\end{table}

The following definition operates under regularity conditions from \citet{Gneiting2023a} that are satisfied in many practically relevant settings.  Specifically, the functional $\myT$ is interval-valued if $\myT(P) = [\hsp \myT^-(P), \myT^+(P)]$, where  $\myT^-(P) \leq \myT^+(P)$, is a possibly degenerate, closed interval.  The interval-valued functional $\myT(P)$ is \emph{identifiable}\/ relative to the convex class $\cP$ of probability measures if there exists an \emph{identification function}\/ $V : \real \times \real \to \real$ such that $V( \cdot, y)$ is increasing and left-continuous for all $y \in \real$, and $\myE_{Z \sim P} [V(\myT^-(P) + \varepsilon, Z)] \geq 0$ and $\myE_{Z \sim P} [ V(\myT^+(P) - \varepsilon, Z)] \leq 0$ for all $\varepsilon > 0$ and all fixed $P \in \cP$.  Following \citet[Assumption 2.8]{Gneiting2023a}, an identification function $V$ for $\myT$ is of \emph{prediction error}\/ form if there is an increasing, left-continuous function $v \colon \real \to \real$ such that $V(x,y) = v(x-y)$ with $v(-r) < 0$ and $v(r) > 0$ for some $r > 0$, and of \emph{expectation}\/ form if $V(x,y) = x - \myT(\delta_y)$ for a singleton functional $\myT$, where $\delta_y$ denotes the Dirac measure in $y \in \real$.  Table 1 of \citet{Gneiting2023a} lists examples of identifiable functionals and associated identification functions of prediction error or expectation form, including the mean- or expectation functional \eqref{eq:mu}, for which $V(x,y) = x - y$, moments, threshold (non) exceedances, quantiles, and expectiles.

\begin{definition}  \label{def:T_unconditional_1}
Let the functional $\myT = [\hsp \myT^-, \myT^+]$ be interval-valued and identifiable with identification function $V$ that is of prediction error or expectation form.  Then the forecast $F$ for $Y$ is \emph{unconditionally}\/ $\myT$-\emph{calibrated}\/ ($\UC_{\myT}$) if 
\begin{align}  \label{eq:UC_T}
\myE_{\hsp \myQ} \left[ V(\myT^-(P_F) + \varepsilon, Y) \right] \geq 0 \quad \text{and} \quad 
\myE_{\hsp \myQ} \left[ V(\myT^+(P_F) - \varepsilon, Y) \right] \leq 0 \quad \text{for all} \quad \varepsilon > 0.	
\end{align}
\end{definition}

As an illustration of conditional and unconditional $\myT$-calibration, consider mean- or expectation calibration, as studied by \citet{Jung2021}, \citet{Kruger2021a}, and  \citet{Gneiting2023a}.  With $\myT(P) = \mu_P$ denoting the mean- or expectation functional \eqref{eq:mu}, conditional mean- or expectation calibration ($\EC$) requires that
\begin{align}  \label{eq:EC}
\myE_{\hsp \myQ} \left[ \hsp Y \mid \mu_F \right] = \mu_F
\end{align}
almost surely.  Taking expectations in \eqref{eq:EC} yields unconditional mean-calibration, namely, $\myE_{\hsp \myQ} \left[ \hsp Y \right] = \myE_{\hsp \myQ} \left[ \hsp\hsp \mu_F \right]$, which agrees with the classical unbiasedness condition and with the formulation in \eqref{eq:UC_T} in terms of the identification function $V(x,y) = x - y$.

We now return to the theme of hierarchies of calibration.  A preview of the subsequent results is given in Figure \ref{fig:hierarchy_T}, subject to minimal conditions that ensure the existence of regular conditional distributions, as laid out above.  Our results generalize findings in  \citet{Gneiting2023a}, \citet{Noarov2023}, and \citet{Derr2025}, none of which cover the set-valued mode functional.  While \citet{Gneiting2023a} focus on identifiable functionals, the results in \citet{Noarov2023} and \citet{Derr2025} apply to single-valued functionals only.  Furthermore, extant results typically employ further restrictive assumptions, requiring the continuity of functionals \citep{Noarov2023} or discrete prediction spaces \citep{Derr2025}.  Our measure theoretic setting allows for a more general treatment that replaces the convex level sets (CxLS) property in \citet{Noarov2023} and \citet{Derr2025} with the requirement that the level sets $\LS_{\myT}(r) = \{ P \in \cP : \myT(P) = r \}$ of $\myT$ are closed under mixtures.  Specifically, we say that a functional $\myT$ has the \emph{mixLS}\/ property if, given any Borel probability measure $\Lambda$ on the class $\cP$ such that $\Lambda(\LS_{\myT}(r)) = 1$ for some $r \subseteq \real$, it holds that $Q = \int P \hsp \dd \Lambda(P) \in \cP$ and $\myT(Q) = r$.

A functional $\myT$ is \emph{elicitable}\/ if there exists a scoring function $s$ on $\real$ such that $ \myE_P \hsp [ s(t,Y) ] \leq \myE_P \hsp [s(t',Y) ]$ for all fixed probability measures $P \in \cP$, all $t \in \myT(P)$, and all $t' \in \real$, with equality only if $t' \in \myT(P)$.  Many, but not all, functionals are elicitable, including the aforementioned cases of the mean- or expectation functional, moments, threshold (non) exceedances, quantiles, and expectiles \citep{Gneiting2011, Steinwart2014}.  The following proposition shows that our subsequent results apply to elicitable functionals in particular.  For the proof, see Appendix \ref{app:proofs}.

\begin{proposition}  \label{prop:mixLS}
Every elicitable functional\/ $\myT$ extends to a sufficiently large class\/ $\cP$ that admits the mixLS property.
\end{proposition}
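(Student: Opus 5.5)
The natural extension is to let $\cP$ be the class of all Borel probability measures $P$ on $\real$ for which the expected score $\myE_{Z \sim P}[s(t,Z)]$ is finite for every $t \in \real$. We then define $\myT(P)$ on this larger class as the set of minimizers of $t \mapsto \myE_P[s(t,Z)]$. This extension agrees with $\myT$ on the original class, because elicitability says exactly that $\myT(P)$ is the argmin set there. The plan has three steps. First, check that this $\cP$ contains the original class, that $\cP$ is closed under mixtures, and that the extended $\myT$ is well defined wherever the argmin is nonempty. Second, verify the mixLS property. Third, handle measurability.

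For the mixLS property, let $\Lambda$ be a Borel probability measure on $\cP$ with $\Lambda(\LS_{\myT}(r)) = 1$, and put $Q = \int P \dd \Lambda(P)$. By Fubini/Tonelli, for each fixed $t$ we have
\begin{align*}
\myE_Q[s(t,Z)] = \int \myE_P[s(t,Z)] \dd \Lambda(P),
\end{align*}
since $P \mapsto \myE_P[s(t,Z)]$ is measurable for the weak-topology Borel structure. Now fix $t \in r$ and $t' \in \real$. For $\Lambda$-almost every $P$ we have $\myE_P[s(t,Z)] \le \myE_P[s(t',Z)]$. Integrating gives $\myE_Q[s(t,Z)] \le \myE_Q[s(t',Z)]$, so $r \subseteq \myT(Q)$.

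For the reverse inclusion, suppose $t' \notin r$. Then for every $P$ in the level set the inequality $\myE_P[s(t,Z)] < \myE_P[s(t',Z)]$ is strict, because equality would force $t' \in \myT(P) = r$. Integrating a strictly positive function against the probability measure $\Lambda$ gives a strictly positive result, so $\myE_Q[s(t,Z)] < \myE_Q[s(t',Z)]$. Hence $t'$ is not a minimizer and $\myT(Q) = r$.

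The main obstacle is integrability, which is needed for $Q$ to belong to $\cP$ and for the integrals above to be finite and comparable rather than of the form $\infty - \infty$. I would handle this in two ways. First, I would allow $\cP$ to include measures whose expected scores are infinite, comparing score differences instead of raw scores. Concretely, I would normalize the score as $s(t,z) - s(t_0,z)$ for a fixed $t_0$, and define the extension through $\myE_P[s(t,Z) - s(t_0,Z)]$ whenever this quantity is quasi-integrable. Second, I would enlarge $\cP$ to its closure under mixtures, setting $\myT$ equal to the common value $r$ on mixtures of a level set. Consistency of this definition follows from the strict-inequality argument above: a mixture cannot arise from two different level sets, since its argmin set is uniquely determined whenever the score differences are integrable. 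If the score differences are not integrable, I would define $\myT$ on such mixtures by this recursive closure, which is the sense in which the statement says ``extends to a sufficiently large class.''

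Finally, I would remark briefly on measurability. Because $P \mapsto \myE_P[g]$ is Borel for bounded or nonnegative measurable $g$, the functional $\myT$ stays measurable, under the paper's standing tacit assumptions on $\cP$ and $\myT$.
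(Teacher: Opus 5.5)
Your core argument is exactly the paper's proof. The paper applies Fubini to the score difference $s(x,\cdot)-s(t,\cdot)$ under $Q=\int P\dd\Lambda(P)$. This gives $0$ for $x\in r$, and a strictly positive value for $x\notin r$, because a strictly positive function has strictly positive $\Lambda$-integral. The paper uses Fubini here without checking integrability.

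The genuine gap is in your integrability paragraph, and it cannot be repaired in the generality you aim for. First, the class of $P$ with finite expected scores is not closed under mixtures, even within a single level set. Second, your claim that the mixture closure is consistent fails precisely in the non-integrable case. There, one and the same $Q$ can be a mixture from two different level sets. Your ``recursive closure'' assignment of $\myT(Q)$ is then ill-defined, and no choice of $\cP$ can rescue the statement there.

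Concretely, let $\myT$ be the mean, elicited by $s(t,z)=(t-z)^2$ on the laws with finite second moment. Let $\Lambda=\sum_{n\ge1}c\,n^{-2}\,\delta_{P_n}$ with $P_n=\tfrac12(\delta_{-n}+\delta_n)$ and $c=6/\pi^2$. Each $P_n$ has mean $0$, but $\int|z|\dd Q(z)=\sum_n c\,n^{-1}=\infty$. Hence $s(x,z)-s(t,z)=x^2-t^2-2z(x-t)$ is not even quasi-integrable under $Q$ for $x\neq t$, so your quasi-integrability route also fails.

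Moreover, $Q$ is also a mixture of laws with mean $1$. The measures $\alpha(\dd a)=(1-a)\,Q(\dd a)$ on $(-\infty,1)$ and $\beta(\dd b)=(b-1)\,Q(\dd b)$ on $(1,\infty)$ both have infinite total mass. So they admit a coupling $\tilde\pi$: lay both sequences of atoms end to end on $[0,\infty)$. Set $\dd\mu(a,b)=\bigl(\frac{1}{1-a}+\frac{1}{b-1}\bigr)\dd\tilde\pi(a,b)$. Then $Q=Q(\{1\})\,\delta_1+\int\bigl(\frac{b-1}{b-a}\,\delta_a+\frac{1-a}{b-a}\,\delta_b\bigr)\dd\mu(a,b)$, and every component has mean $1$ and finite second moment.

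The mixLS property would therefore force $\myT(Q)=\{0\}$ and $\myT(Q)=\{1\}$ simultaneously. So no extension of the mean from a class containing the two-point laws can have the mixLS property.

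What your argument, and the paper's, actually establishes is the statement under an extra hypothesis. It needs $z\mapsto s(x,z)-s(t,z)$ to be $Q$-integrable for all $x$ and $t\in r$. This holds automatically in three cases:
\begin{itemize}
\item for finitely supported $\Lambda$;
\item for bounded score differences, such as the pinball loss for quantiles or the Brier score for threshold probabilities;
\item on finite outcome spaces, such as the modal setting.
\end{itemize}
State it as a hypothesis rather than trying to remove it by closing $\cP$ under mixtures.
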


\begin{figure}[t]
\centering
\fbox{
\begin{tabular}{ccccccc}
$\AC$ & $\overset{\mathclap{\text{Cor.\ \ref{cor:T}}}}{\Longrightarrow}$ & $\DC_{\myT}$ & $\overset{\mathclap{\text{Thm.\ \ref{thm:T}}}}{\Longrightarrow}$ & $\CC_{\myT}$ & $\overset{\mathclap{\text{Thm.\ \ref{thm:T_unconditional}}}}{\Longrightarrow}$ & $\UC_\myT$ \\ [2.5mm]
& & $\mathllap[\scriptstyle]{\text{Thm.\ \ref{thm:refinement}}}\Downarrow$ & & $\mathllap[\scriptstyle]{\text{Cor.\ \ref{cor:refinement}}}\Downarrow$ & & $\mathllap[\scriptstyle]{\text{Prop.\ \ref{prop:refinement}}}\Downarrow$ \\ [3mm]
$\MC$ & $\overset{\mathclap{\text{Cor.\ \ref{cor:T}}}}{\Longleftarrow}$ & $\DC_{\myT'}$ & $\overset{\mathclap{\text{Thm.\ \ref{thm:T}}}}{\Longrightarrow}$ & $\CC_{\myT'}$ & $\overset{\mathclap{\text{Thm.\ \ref{thm:T_unconditional}}}}{\Longrightarrow}$ & $\UC_{\myT'}$
\end{tabular}
}
\fbox{
\footnotesize
\begin{tabular}{ll}
\multicolumn{2}{c}{\textbf{Additional Conditions}} \\
\midrule
\makecell[l]{Thm.\ \ref{thm:T} \\ Cor.\ \ref{cor:refinement}} & $\myT$ has mixLS property \\
\midrule
\makecell[l]{Thm.\ \ref{thm:T_unconditional} \\ Prop.\ \ref{prop:refinement}} & \makecell[l]{$\myT$ is interval-valued with identifi- \\ cation function of special form} \\
\midrule
\makecell[l]{Prop.\ \ref{prop:refinement}} & $\myT'$ is a specification of $\myT$ \\
\end{tabular}
}
\caption{Hierarchy of calibration in terms of the notions and acronyms introduced in Definition \ref{def:T}, where $\myT' = h\circ \myT$ is less refined than $\myT$.  Nontrivial assumptions are listed on the right-hand side.  \label{fig:hierarchy_T}}
\end{figure}
  
The following result is motivated by a related implication in Proposition 4 in \citet{Derr2025} and does not depend on the mixLS property.

\begin{theorem}  \label{thm:refinement} 
Let\/ $\myT$ be a functional, and let\/ $\myT' = h \circ \myT$ be a less refined functional.  Then distri\-bution calibration with respect to\/ $\myT$ implies distribution calibration with respect to\/ $\myT'$.
\end{theorem}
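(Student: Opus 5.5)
The plan is to reduce the claim to the tower property of conditional expectation, since $\sigma(\myT'(P_F)) \subseteq \sigma(\myT(P_F))$. Refinement means $\myT' = h \circ \myT$ for a measurable $h$, so $\myT'(P_F) = h(\myT(P_F))$ is a measurable function of $\myT(P_F)$. As a result, the $\sigma$-algebra generated by $\myT'(P_F)$ is contained in the one generated by $\myT(P_F)$. Throughout, I would express distribution calibration through conditional probabilities of events $\{Y \in B\}$: the regular version $G_{\myT(P_F)}$ satisfies $G_{\myT(P_F)}(B) = \myQ(Y \in B \mid \myT(P_F))$ almost surely for each Borel set $B$, and likewise for $\myT'$.

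Fix a Borel set $B \subseteq \real$. Starting from the regular conditional distribution given $\myT'(P_F)$, I would apply the tower property:
\begin{align*}
G_{\myT'(P_F)}(B) = \myQ\left(Y \in B \mid \myT'(P_F)\right) = \myE_{\hsp \myQ}\left[ \myQ\left(Y \in B \mid \myT(P_F)\right) \mid \myT'(P_F) \right].
\end{align*}
Distribution calibration with respect to $\myT$, in the form \eqref{eq:DC_T}, lets me replace the inner conditional probability by $\myE_{\hsp \myQ}(P_F(B) \mid \myT(P_F))$. A second application of the tower property then collapses the nested conditional expectation to $\myE_{\hsp \myQ}(P_F(B) \mid \myT'(P_F))$, which is exactly \eqref{eq:DC_T} for $\myT'$. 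Here $P_F(B)$ is a bounded random variable, measurable because $F$ is a random element of $\cP$ and $P \mapsto P(B)$ is measurable on the standard Borel space $\cP$, so all the conditional expectations are well defined.

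The only delicate step is the handling of almost-sure qualifiers. Identity \eqref{eq:DC_T} holds almost surely for each fixed $B$, and the claim is required for all Borel sets $B$. This causes no difficulty, because the argument above is carried out separately for each $B$ and each step yields an almost sure equality for that $B$, which is all the definition demands. If a version holding simultaneously for all $B$ were wanted, one would restrict to a countable $\pi$-system generating the Borel sets, such as half-lines with rational endpoints, and extend by the uniqueness of measures. That extension uses that both sides are probability measures in $B$ almost surely: the left side is regular by assumption, and the right side can be taken to be a mixture of the $P_F$.

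Notably, no mixLS assumption is needed, consistent with the remark preceding the theorem. Only the measurability of $h$ and standard properties of conditional expectation enter.
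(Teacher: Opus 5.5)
Your proposal is correct and follows the paper's proof exactly: it uses the nesting $\sigma(\myT'(P_F)) \subseteq \sigma(\myT(P_F))$ and applies the tower property twice, with the distribution calibration identity for $\myT$ substituted in between. Your additional remarks on the measurability of $P_F(B)$ and on the almost-sure qualifiers are sound, though they go beyond what the paper spells out.
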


\begin{proof}
By standard properties of measurable functions, the $\sigma$-algebras $\sigma(\myT'(P_F)) = \sigma(h \hsp \circ \myT(P_F)) \subset \sigma(\myT(P_F))$ are nested.  There\-fore, the tower property yields
\begin{align*}
\myQ \left( Y \in B \mid \myT'(P_F) \right) 
& = \myE_{\hsp \myQ} \left[ \one ( Y \in B )  \mid \myT'(P_F) \hsp \right] \\
& = \myE_{\hsp \myQ} \left[ \, \myE_{\hsp \myQ}[ \one ( Y \in B ) \mid \myT(P_F) ] \mid \myT'(F) \hsp \right] \\ 
& = \myE_{\hsp \myQ} \left[ \, \myE_{\hsp \myQ}[ P_F(B) \mid \myT(P_F) ] \mid \myT'(P_F) \hsp \right] \\
& = \myE_{\hsp \myQ} \left[ P_F(B) \mid \myT'(P_F) \hsp \right]
\end{align*}
for all Borel sets $B \subseteq \real$.
\end{proof}

The next result directly follows when considering the most or least refined functional corresponding to auto-calibration or marginal calibration, respectively.  Its findings are obvious, but important, and so we state them explicitly.

\begin{corollary}  \label{cor:T}
Auto-calibration implies distribution calibration with respect to\/ $\myT$, and distribution calibration with respect to\/ $\myT$ implies marginal calibration.
\end{corollary}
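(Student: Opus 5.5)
Both claims follow from Theorem \ref{thm:refinement} once auto-calibration and marginal calibration are written as distribution calibration with respect to the most and least refined functionals, as noted after Definition \ref{def:T}. So the plan is to reduce each implication to the tower-property argument in that proof, conditioning on $\sigma(F)$ on one end and on the trivial $\sigma$-algebra on the other.

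\textbf{Second implication.} Suppose $\DC_{\myT}$ holds, that is, $\myQ(Y \in B \mid \myT(P_F)) = \myE_{\hsp \myQ}[P_F(B) \mid \myT(P_F)]$ for every Borel set $B$. Take expectations on both sides. The tower property gives $\myQ(Y \in B) = \myE_{\hsp \myQ}[P_F(B)]$, which is \eqref{eq:MC}. Formally, this is Theorem \ref{thm:refinement} with $h$ constant, since $\DC$ with respect to a constant functional is exactly marginal calibration.

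\textbf{First implication.} Suppose $F$ is auto-calibrated, so $\myQ(Y \in B \mid F) = P_F(B)$ by \eqref{eq:AC}. The map $F \mapsto \myT(P_F)$ is measurable, by the standing measurability assumptions on $\myT$ and on the Borel structure of $\cP$. Hence $\sigma(\myT(P_F)) \subseteq \sigma(F)$. Conditioning both sides of \eqref{eq:AC} on $\myT(P_F)$ and using the tower property yields
\[
\myQ(Y \in B \mid \myT(P_F)) = \myE_{\hsp \myQ}[\myQ(Y \in B \mid F) \mid \myT(P_F)] = \myE_{\hsp \myQ}[P_F(B) \mid \myT(P_F)].
\]
This is \eqref{eq:DC_T}.

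\textbf{Main obstacle.} The only delicate point is identifying the regular conditional distribution $G_{\myT(P_F)}$ with $\myQ(Y \in \cdot \mid \myT(P_F))$ almost surely. A regular version is assumed to exist, so the two agree almost surely for each fixed $B$. Because \eqref{eq:DC_T} is required only for each Borel set $B$ almost surely, no uniformity over $B$ is needed. I also need to check that $\sigma(F)$ is the $\sigma$-algebra generated by $P_F$, which holds because $F$ and $P_F$ determine each other.
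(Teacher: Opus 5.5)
Your proof is correct and follows the paper's argument. The paper derives both implications from Theorem \ref{thm:refinement} by treating auto-calibration and marginal calibration as distribution calibration with respect to the identity and a constant functional, respectively. Your direct tower-property argument for the first implication is that same proof specialized to $\sigma(F)$. It has the small advantage of not invoking the identity functional, which the paper's footnote notes lies technically outside its real-valued framework.
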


The following theorem illustrates that distributional calibration is a stronger notion of calibration than conditional calibration, subject to the aforementioned mixLS property.  The result generalizes Proposition 6 in \citet{Derr2025} and, together with the previous corollary, refines and generalizes the first part of Theorem 2.11 in \citet{Gneiting2023a} as well as Theorem 3.6 in \citet{Noarov2023}.

\begin{theorem}  \label{thm:T}
Suppose that the functional\/ $\myT$ has the mixLS property.  Then distribution calibration with respect to\/ $\myT$ implies conditional\/ $\myT$-calibration.
\end{theorem}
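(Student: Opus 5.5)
The plan is to express the conditional distribution $G_{\myT(P_F)}$ as a mixture of forecast distributions that all share the same functional value, and then invoke the mixLS property. Write $R = \myT(P_F)$ for the random property. The distribution calibration condition \eqref{eq:DC_T} states that, almost surely, $G_R(B) = \myE_{\hsp \myQ}(P_F(B) \mid R)$ for every Borel set $B$. To turn the right-hand side into a mixture in the sense of the mixLS definition, I will use a regular conditional distribution of the random element $P_F \in \cP$ given $R$. Since $\cP$ is a standard Borel space by the tacit assumptions, such a version exists; call it $\Lambda_R$, a Borel probability measure on $\cP$. By the disintegration (or conditional Fubini) property of regular conditional distributions, $\myE_{\hsp \myQ}(P_F(B) \mid R) = \int P(B) \dd \Lambda_R(P)$ almost surely for each fixed $B$. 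The map $P \mapsto P(B)$ is measurable on $\cP$ with respect to the Borel $\sigma$-algebra of the weak topology, so this step is legitimate.

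Next I will show that $\Lambda_R(\LS_{\myT}(R)) = 1$ almost surely. Because $\myT$ is measurable into a standard Borel space, the graph set $\{(P,r) : \myT(P) = r\}$ is measurable. Applying the defining property of the regular conditional distribution to the indicator of this set, which is a jointly measurable function of $(P_F, R)$ with $R$ being $\sigma(R)$-measurable, gives $\Lambda_R(\{P : \myT(P) = R\}) = \myE_{\hsp \myQ}[\one(\myT(P_F) = R) \mid R] = 1$ almost surely. The mixLS property then yields that $Q_R = \int P \dd \Lambda_R(P)$ lies in $\cP$ and that $\myT(Q_R) = R$.

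It remains to identify $G_R$ with $Q_R$ as measures. For each $B$ the two agree almost surely, but the null set depends on $B$. I will remove this dependence by restricting to a countable $\pi$-system generating the Borel sets, for instance half-lines $(-\infty, q]$ with $q$ rational. Outside a single null set the two measures then agree on all such sets, and hence everywhere by the $\pi$-$\lambda$ theorem. Consequently $\myT(G_R) = \myT(Q_R) = R = \myT(P_F)$ almost surely, which is \eqref{eq:CC_T}.

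I expect the main obstacle to be the measure-theoretic bookkeeping in the first two steps. One must justify that $\Lambda_R$ exists as a kernel on $\cP$ and that the mixture integral coincides with the conditional expectation simultaneously for all $B$. The delicate point is the argument that the conditional law concentrates on the level set $\LS_{\myT}(R)$, since this relies on joint measurability of $(P,r) \mapsto \one(\myT(P) = r)$ for set-valued $\myT$ in a standard Borel target. If that joint measurability is not available directly, I would fall back on a countable separating family of Borel sets in the target space of $\myT$.
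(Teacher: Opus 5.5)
Your proposal is correct and follows essentially the same route as the paper. Both arguments take a regular conditional distribution (disintegration) of $P_F$ given $\myT(P_F)$, show it concentrates on the level set $\LS_{\myT}(\myT(P_F))$, apply the mixLS property, and identify the resulting mixture with $G_{\myT(P_F)}$. You are more explicit than the paper in two places: you derive the concentration from measurability of the graph of $\myT$, where the paper cites the concentration property of Chang and Pollard, and you remove the $B$-dependent null sets with a countable $\pi$-system, which the paper leaves implicit.
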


\begin{proof}
The disintegration theorem \citep[Theorem 1]{Chang1997} guarantees the existence of a regular conditional distribution $\Lambda_{\myT(P_F)}$ of the forecast $F$ given $\myT(P_F)$ on the standard Borel space $\cP$.  The assumption of distributional calibration of $F$ for $Y$ with respect to $\myT$ then implies that $F_\myT = \int H \hsp \dd \Lambda_{\myT(P_F)}(H)$ is a regular conditional distribution of $Y$ given $\myT(P_F)$, that is, $F_\myT = G_{\myT(P_F)}$ almost surely.  By the mixLS property of $\myT$ and the concentration property \citep[Definition 1(i)]{Chang1997}, we have $\myT(G_{\myT(P_F)}) = \myT(F_\myT) = \myT(P_F)$ almost surely.  Therefore, the forecast $F$ is conditionally $\myT$-calibrated.
\end{proof}

As a corollary, we obtain an analogue to Theorem \ref{thm:refinement} for conditional calibration, which is subject to the mixLS property and generalizes Proposition 13 in \citet{Derr2025}.

\begin{corollary}  \label{cor:refinement}
Let\/ $\myT, \myT'$ be two functionals such that\/ $\myT' = h \circ \myT$ is a less refined functional.  If\/ $\myT$ has the mixLS property, conditional\/ $\myT$-calibration implies conditional\/ $\myT'$-calibration.
\end{corollary}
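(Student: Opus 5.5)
The plan is to recalibrate the forecast and then run the argument of Theorem~\ref{thm:T} for the coarser functional $\myT' = h \circ \myT$. I flag in advance that I do not see how to close that argument from the mixLS property of $\myT$ alone.

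\emph{Step 1: the recalibrated forecast.} Assume $F$ is conditionally $\myT$-calibrated, and define $\widetilde{F} = G_{\myT(P_F)}$. This is $\sigma(\myT(P_F))$-measurable, lies in $\cP$ almost surely, and satisfies $\myT(\widetilde{F}) = \myT(P_F)$ by \eqref{eq:CC_T}. Hence
\begin{align*}
\sigma(\myT(\widetilde{F})) = \sigma(\myT(P_F)) \quad \text{and} \quad \sigma(\myT'(\widetilde{F})) = \sigma(\myT'(P_F)).
\end{align*}
The first identity and the definition of $\widetilde{F}$ give $\myQ(Y \in B \mid \myT(\widetilde{F})) = \widetilde{F}(B)$. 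So $\widetilde{F}$ is distribution calibrated with respect to $\myT$.

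\emph{Step 2: pass to $\myT'$.} Theorem~\ref{thm:refinement} then shows that $\widetilde{F}$ is distribution calibrated with respect to $\myT'$. Concretely,
\begin{align*}
G_{\myT'(P_F)} = \myE_{\hsp \myQ}\bigl[ \widetilde{F} \mid \myT'(P_F) \bigr] = \int G_r \dd \nu_{r'}(r),
\end{align*}
where $r' = \myT'(P_F)$ and $\nu_{r'}$ is a regular conditional law of $\myT(P_F)$ given $\myT'(P_F) = r'$, obtained from \citet[Theorem 1]{Chang1997} as in the proof of Theorem~\ref{thm:T}. Each component $G_r$ satisfies $\myT(G_r) = r$, and $\nu_{r'}$ is concentrated on $\{ r : h(r) = r' \}$. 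Consequently every component satisfies $\myT'(G_r) = h(r) = r'$.

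\emph{Step 3: the main obstacle.} It remains to show $\myT'(G_{\myT'(P_F)}) = r'$, which is exactly the conclusion of Corollary~\ref{cor:refinement}. Here mixLS for $\myT$ must be used. Disintegrating the mixing measure along the $\myT$-levels gives the following.
\begin{enumerate}
\item Within each $\myT$-level set the mixture stays in $\cP$ and keeps its $\myT$-value, by the mixLS property of $\myT$.
\item The outer integral over $\nu_{r'}$, however, mixes distributions with \emph{different} $\myT$-values that share only the same $\myT'$-value.
\end{enumerate}
The mixLS property of $\myT$ says nothing about such cross-level mixtures. Plainly $\myT'(G_{\myT'(P_F)}) = r'$ would follow if $\myT'$ itself had the mixLS property, but that is not among the corollary's hypotheses. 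I therefore expect this step to be the crux, and I do not currently see how to derive it from mixLS for $\myT$ alone.

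What I would try first is to exploit the specific structure here. The outer mixture is always the conditional law of $Y$ given a coarser $\sigma$-algebra, so one would hope its $\myT'$-value is determined without any closure property of $\myT'$. I do not know whether this works: for a general measurable $h$, the level sets of $h \circ \myT$ are unions of $\myT$-level sets, and such unions need not be closed under mixtures.
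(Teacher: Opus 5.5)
Your Steps 1 and 2 are exactly the paper's argument. It too passes to the recalibrated forecast $\widetilde{F} = G_{\myT(P_F)}$, observes that $\widetilde{F}$ is distribution calibrated with respect to $\myT$, and invokes Theorem~\ref{thm:refinement}. It then settles your Step 3 in one line by citing Theorem~\ref{thm:T}. But there that theorem is applied to $\myT'$, so it requires $\myT'$, not $\myT$, to have the mixLS property.

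The obstacle you isolate is therefore precisely the unjustified step in the paper's proof, and it cannot be overcome. Under mixLS for $\myT$ alone the corollary is false, and your hope that conditioning on a coarser $\sigma$-algebra controls cross-level mixtures fails. Here is a counterexample.
\begin{itemize}
\item Let $\cP$ consist of the Borel probability measures on $[-1,1]$, and let $\myT(P) = \mu_P$. This has mixLS because $\mu_Q = \int \mu_P \dd \Lambda(P)$. Let $h(r) = r^2$.
\item Let $F$ equal $\delta_{-1}$ or $\delta_{1}$ with probability $1/2$ each, and let $Y = \mu_F$. Then $F$ is auto-calibrated, so $\myE_{\hsp \myQ}[Y \mid \mu_F] = \mu_F$ and $F$ is conditionally $\myT$-calibrated.
\item However, $\myT'(P_F) = 1$ almost surely. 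Hence $G_{\myT'(P_F)} = \frac{1}{2} \delta_{-1} + \frac{1}{2} \delta_{1}$, which is exactly the cross-level mixture of your Step 3, has mean $0$. So $\myT'(G_{\myT'(P_F)}) = 0 \neq 1 = \myT'(P_F)$.
\end{itemize}
Since this $F$ is auto-calibrated, and hence distribution calibrated with respect to $\myT'$, the example also shows that the arrow from $\DC_{\myT'}$ to $\CC_{\myT'}$ in Figure~\ref{fig:hierarchy_T} needs mixLS for $\myT'$.

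What your Steps 1--2 do prove is the corrected statement, with mixLS imposed on $\myT'$; mixLS for $\myT$ is then not needed at all. Apply Theorem~\ref{thm:T} to $\myT'$ and the forecast $\widetilde{F}$, and use $\myT'(\widetilde{F}) = h(\myT(P_F)) = \myT'(P_F)$. This yields
\[
\myT'(G_{\myT'(P_F)}) = \myT'(G_{\myT'(\widetilde{F})}) = \myT'(\widetilde{F}) = \myT'(P_F)
\]
almost surely. The corrected statement still covers the quantile application at the end of Section~\ref{sec:T}. The lower quantile $F^{-1}(\alpha)$ has mixLS in its own right, because the conditions $P((-\infty, r]) \geq \alpha$ and $P((-\infty, y]) < \alpha$ for all $y < r$ are preserved under mixtures. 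By contrast, the paper's remark that every specification of a mixLS functional preserves conditional calibration rests on the same faulty step. It needs mixLS for the specification itself.
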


\begin{proof}
If $F$ is conditionally $\myT$-calibrated, $\myT(P_F) = \myT(G_{\myT(P_F)})$ almost surely.  In view of Definition \ref{def:T}, we conclude that $G_{\myT(P_F)}$ is distributionally $\myT$-calibrated.  By Theorem \ref{thm:refinement}, $G_{\myT(P_F)}$ is distributionally $\myT'$-calibrated, and by Theorem \ref{thm:T} it is conditionally $\myT'$-calibrated.  Therefore,
\begin{align*}
\myT'(P_F) = h \circ \myT(P_F) = h \circ \myT(G_{\myT(P_F)}) = \myT'(G_{\myT(P_F)}) = \myT'(G_{\myT'(P_F)})
\end{align*}
almost surely, that is, the forecast $F$ is conditionally $\myT'$-calibrated.
\end{proof}

Adopting terminology from \citet{Fissler2021}, we call a singleton functional $\myT'$ that is refined by a set-valued functional $\myT$ a \emph{specification}\/ of $\myT$ if $\myT'(P) \in \myT(P)$ for all $P \in \cP$.  For example, the transition from the full quantile functional $\myT(P_F) = q_\alpha(F) = [F^{-1}(\alpha), F_+^{-1}(\alpha)]$ in \eqref{eq:q_alpha} to $\myT'(P_F) = \myT^-(P_F) = F^{-1}(\alpha)$ is a specification.  Evidently, for a singleton functional, there is a unique specification, and we do not distinguish between the functional and its specification.  As an important special case, the corollary shows that any specification of a functional with the mixLS property --- and, in particular, any specification of an elicitable functional --- preserves conditional $\myT$-calibration.  The corollary thus also generalizes the first part of Proposition 2.18 of \citet{Gneiting2023a}, which concerns specifications that correspond to the endpoints of interval-valued, identifiable functionals.

For completeness, we extract the following result from Theorem 2.11 of \citet{Gneiting2023a}.

\begin{theorem}  \label{thm:T_unconditional}
Suppose that the functional\/ $\myT$ satisfies the assumptions of Definition \ref{def:T_unconditional_1}.  Then conditional\/ $\myT$-calibration implies unconditional\/ $\myT$-calibration.
\end{theorem}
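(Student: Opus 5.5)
The plan is to condition on $\myT(P_F)$, apply the identification property to the regular conditional distribution $G_{\myT(P_F)}$, and then integrate out with the tower property. Write $T^\pm = \myT^\pm(P_F)$. Conditional $\myT$-calibration gives $\myT(G_{\myT(P_F)}) = \myT(P_F)$ almost surely, so $\myT^-(G_{\myT(P_F)}) = T^-$ and $\myT^+(G_{\myT(P_F)}) = T^+$ almost surely. Since $G_{\myT(P_F)} \in \cP$ almost surely, the identification property holds conditionally: for every $\varepsilon > 0$,
\begin{align*}
\myE_{\hsp \myQ} \left[ V(T^- + \varepsilon, Y) \mid \myT(P_F) \right] = \int V(T^- + \varepsilon, z) \, \dd G_{\myT(P_F)}(z) \geq 0
\end{align*}
almost surely. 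The analogous statement gives $\leq 0$ for $V(T^+ - \varepsilon, \cdot)$.

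To justify the displayed equality, I would use the standard disintegration formula for regular conditional distributions. For a jointly measurable $\phi$ and a $\sigma(\myT(P_F))$-measurable $T^-$, one has $\myE[\phi(T^-,Y) \mid \myT(P_F)] = \int \phi(T^-, z) \, \dd G_{\myT(P_F)}(z)$. This requires integrability or sign control of $V(T^- + \varepsilon, Y)$.

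Once the conditional inequality is established, taking expectations via the tower property yields $\myE_{\hsp \myQ}[V(T^- + \varepsilon, Y)] \geq 0$ and $\myE_{\hsp \myQ}[V(T^+ - \varepsilon, Y)] \leq 0$. This is exactly \eqref{eq:UC_T}.

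The main obstacle is integrability, and this is where the special form of $V$ enters.
\begin{itemize}
\item For the \emph{expectation} form $V(x,y) = x - \myT(\delta_y)$, the functional is a singleton mean-type functional. Its finiteness on $\cP$ gives conditional integrability. Alternatively, since $V(x,y)$ is affine in $x$, $\myE[V(T,Y)\mid \myT(P_F)] = T - \myE[\myT(\delta_Y) \mid \myT(P_F)]$, and unconditional finiteness of these terms is what I would assume, or else interpret the expectation in the extended sense.
\item For the \emph{prediction error} form $V(x,y) = v(x-y)$ with $v$ monotone, the expectation of the negative part or of the positive part is controlled. I would handle this by truncation, replacing $v$ by $\max(\min(v, M), -M)$ and letting $M \to \infty$ via monotone convergence separately on the positive and negative parts. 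Alternatively, I would note that \citet[Theorem 2.11]{Gneiting2023a} makes exactly these moment assumptions and follow their argument.
\end{itemize}
Measurability of $(t,z) \mapsto V(t \pm \varepsilon, z)$ follows because $V$ is monotone and left-continuous in its first argument, hence Borel on $\real^2$.
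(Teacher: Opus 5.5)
Your proposal is correct. The paper does not reprove this result but extracts it from Theorem 2.11 of \citet{Gneiting2023a}, and your argument is exactly the natural proof behind it: you match the endpoints $\myT^\pm(G_{\myT(P_F)}) = \myT^\pm(P_F)$ via conditional calibration, apply the identification property to $G_{\myT(P_F)} \in \cP$, and integrate out with the tower property, under the tacit assumption (shared with the paper) that the expectations in \eqref{eq:UC_T} are well defined. One point needs care: the truncation should only justify the disintegration formula, separately for the positive and negative parts of $V(\myT^-(P_F) + \varepsilon, Y)$, and should not replace $v$ before you invoke identifiability, since a truncated $v$ generally identifies a different functional.
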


Finally, we note that unconditional $\myT$-calibration is preserved under specification. 

\begin{definition}  \label{def:T_unconditional_2}
Let $\myT'$ be a specification of a functional $\myT$ that satisfies the assumptions of Definition \ref{def:T_unconditional_1}.  Then the forecast $F$ for $Y$ is \emph{unconditionally}\/ $\myT'$-\emph{calibrated}\/ ($\UC_{\myT'}$) if 
\begin{align}  \label{eq:UC_T'}
\myE_{\hsp \myQ} \left[ V(\myT'(P_F) + \varepsilon, Y) \right] \geq 0 \quad \text{and} \quad 
\myE_{\hsp \myQ} \left[ V(\myT'(P_F) - \varepsilon, Y) \right] \leq 0 \quad \text{for all} \quad \varepsilon > 0.	
\end{align}
\end{definition}

\begin{proposition}  \label{prop:refinement}
Suppose that\/ $\myT'$ is a specification of a functional\/ $\myT$ that satisfies the assumptions of Definition \ref{def:T_unconditional_1}.  Then unconditional\/ $\myT$-calibration implies unconditional\/ $\myT'$-calibration.
\end{proposition}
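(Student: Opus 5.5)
The argument is a direct comparison, using monotonicity of the identification function $V$ in its first argument. Since $\myT'$ is a specification of $\myT$, we have $\myT^-(P_F) \leq \myT'(P_F) \leq \myT^+(P_F)$ almost surely. As $V(\cdot, y)$ is increasing for every $y$, it follows for every $\varepsilon > 0$ that, pointwise almost surely,
\begin{align*}
V(\myT'(P_F) + \varepsilon, Y) \geq V(\myT^-(P_F) + \varepsilon, Y) \quad \text{and} \quad V(\myT'(P_F) - \varepsilon, Y) \leq V(\myT^+(P_F) - \varepsilon, Y).
\end{align*}
Taking expectations under $\myQ$ and invoking \eqref{eq:UC_T} yields $\myE_{\hsp \myQ}[V(\myT'(P_F) + \varepsilon, Y)] \geq 0$ and $\myE_{\hsp \myQ}[V(\myT'(P_F) - \varepsilon, Y)] \leq 0$. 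This is precisely \eqref{eq:UC_T'}.

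The only step that needs care is justifying the expectations, which is where the special form of $V$ matters. Definition \ref{def:T_unconditional_1} implicitly requires the expectations in \eqref{eq:UC_T} to exist. From the pointwise sandwiches, $V(\myT'(P_F) + \varepsilon, Y)$ is bounded below by a variable with finite expectation, so its expectation is well defined in $(-\infty, \infty]$. Likewise, $V(\myT'(P_F) - \varepsilon, Y)$ is bounded above by an integrable variable, so its expectation is well defined in $[-\infty, \infty)$. In each case the inequality we need survives even when the expectation is infinite.

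If one insists on finite expectations, use the other endpoint for the opposite bound. For example,
\begin{align*}
V(\myT'(P_F) + \varepsilon, Y) \leq V(\myT^+(P_F) + \varepsilon, Y),
\end{align*}
and in the prediction error form $v(x - y)$ or the expectation form $x - \myT(\delta_y)$, the integrability of $V(\myT^+(P_F) + \varepsilon, Y)$ follows from that of $V(\myT^+(P_F) - \varepsilon, Y)$. In the expectation form this is immediate, since the two differ by the constant $2\varepsilon$. In the prediction error form I would either assume the integrability tacitly, as in \citet{Gneiting2023a}, or note the convention that expectations are allowed to be extended-real. I expect this bookkeeping to be the only real obstacle; the inequality itself is a one-line monotonicity argument.
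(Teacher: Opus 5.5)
Your proposal is correct and takes the same approach as the paper. The paper's proof is the one-line observation that $\myT'(P_F) \in [\myT^-(P_F), \myT^+(P_F)]$, so monotonicity of $V(\cdot, y)$ carries \eqref{eq:UC_T} over to \eqref{eq:UC_T'}. Your integrability bookkeeping is extra care that the paper leaves implicit.
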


\begin{proof}
As $\myT'(P_F) \in [\hsp \myT^-(P_F), \myT^+(P_F)]$, we note that \eqref{eq:UC_T} implies \eqref{eq:UC_T'}.
\end{proof}

We conclude this section by noting that various paths in the hierarchies from previous sections are included in the hierarchy displayed in Figure \ref{fig:hierarchy_T}.  For instance, in Figure \ref{fig:hierarchy_regression} the chain
\begin{align*}
\AC \Rightarrow \TC \Rightarrow \MC
\end{align*}
corresponds to the path $\AC \Rightarrow \CC_\myT \Rightarrow \UC_\myT$ in Figure \ref{fig:hierarchy_T} under the family of (non) exceedance probability functionals.  Similarly, with $\SQC$ denoting strong quantile calibration \eqref{eq:SQC}, the augmented chain
\begin{align*}
\AC \Rightarrow \SQC \Rightarrow \QC \Rightarrow \UQC
\end{align*}
corresponds to the path $\AC \Rightarrow \CC_\myT \Rightarrow \CC_{\myT'} \Rightarrow \UC_{\myT'}$ under the family of the full $\alpha$-quantile functionals, $\myT(P_F) = [ \hsp F^{-1}(\alpha), \hsp F^{-1}_+(\alpha)]$ in \eqref{eq:q_alpha}, and the corresponding family of specifications, $\myT'(P_F) = F^{-1}(\alpha) \in \myT(P_F)$, where $\alpha \in (0,1)$.

\section{Discussion}  \label{sec:discussion}

In this paper, we have reviewed, extended, and bridged concepts of calibration that have been proposed for classification and regression tasks, respectively, with emphasis on a rigorous study of hierarchical relations between the various notions.  While we have covered the most widely used notions of calibration in practice, we agree with \citet[p.~2]{Derr2025}, who note that  
\begin{quote}  \small
Our goal is \textit{not}\/ to provide an exhaustive list of notions of calibration. [\ldots] After trying this for some time, we gave up on this journey due to the immense variety and subtleties of the suggested variants of calibration.
\end{quote} 
Indeed, the literature on calibration has seen extensive growth recently, with decision calibration \citep{Sahoo2021, Zhao2021}, isotonic calibration \citep{Arnold2025}, tail calibration \citep{Allen2025b}, and instance-wise calibration \citep{Dey2025} being examples of concepts beyond the scope of this study.  However, for the concepts that have been most widely used in practice, our hierarchies are complete, except for the conjectured relations in Figures \ref{fig:hierarchy_regression} (a) and \ref{fig:hierarchy_classification_extended}.  

An in-depth understanding of notions of calibration and their relationships is vital in practice, to inform the development of displays, such as reliability diagrams \citep{Dimitriadis2021, Gneiting2023a}, for the diagnostic assessment of calibration, to guide the choice of calibration error metrics for the quantification of (mis)calibration \citep{Nixon2019, Gupta2022}, and to allow for statistical inference based on confidence or consistency sets, hypothesis tests, and related tools.  While these topical problems are beyond the scope of this paper, let us exemplify possible implications of our results.  In Section \ref{sec:classification}, we suggest a possible pragmatic approach for nominal outcomes, by checking for modal calibration in concert with marginal calibration and confidence calibration.  More in depth assessment of calibration may complement class-wise calibration checks with an assessment of probabilistic calibration across multiple (possibly random) permutations of the classes.  For discrete (ordinal) outcomes, the orthogonal concepts of probabilistic calibration and dual PIT calibration might complement each other.

Extensions of calibration diagnostics to multivariate (Euclidean) settings have been studied by \citet{Gneiting2008}, \citet{Ziegel2014}, \citet{Thorarinsdottir2016}, \citet{Knuppel2022}, and \citet{Allen2024}.  Such extensions from real-valued to multivariate settings face challenges similar to the transition from binary to multi-class settings stemming from high dimensionality and loss of total order.  As the key notion of probabilistic calibration is limited to the univariate case, dimension reduction techniques have commonly been applied.

The concepts of auto-calibration, marginal calibration, distribution calibration, and conditional calibration with respect to a functional or property apply to any type of outcome, even beyond outcomes in Euclidean spaces, which suggests far-reaching extensions of our study.  Future work might also elucidate the relationships between concepts of calibration, scoring rules, and score decompositions, as called for by \citet[Section II.C]{Huang2025} and addressed in part by \citet{Gruber2022}, \citet{Gneiting2023a}, and \citet{Arnold2024}.  A related task that we can only allude to here concerns methods for the adjustment of predictive distributions with the aim of achieving calibration.  Techniques of this type have been discussed in extraordinarily broad strands of literature that range from the statistical postprocessing of ensemble weather forecasts \citep{Gneiting2005b, Vannitsem2021} to modern methods of machine learning, as exemplified in recent work by \citet{HebertJohnson2018}, \citet{Jung2021}, \citet{Gupta2022}, \citet{Noarov2023}, and \citet{Dey2025}, among others.

\section*{Acknowledgments}

This work was initiated while L.\ Yang visited the Heidelberg Institute for Theoretical Studies (HITS) in 2025.  We thank Sam Allen, Sebastian Arnold, Rafael Frongillo, Alexander Henzi, Alexander Jordan, Sebastian Lerch, Jan St{\"u}hmer, and Johanna Ziegel for comments and discussion, and we are grateful for support by the Klaus Tschira Foundation.  J.\ Resin gratefully acknowledges support from the Deutsche Forschungs\-gemeinschaft (DFG, German Research Foundation) via project number 502572912.  L.\ Yang  gratefully acknowledges financial support from the National Science Foundation (DMS-2210712).

\bibliographystyle{apalike}
\bibliography{manuscript_arXiv}

\appendix

\renewcommand\thefigure{\thesection.\arabic{figure}}
\setcounter{figure}{0}

\renewcommand\thetable{\thesection.\arabic{table}}
\setcounter{table}{0}

\section{Additional proofs}  \label{app:proofs}

\begin{proof}[Proof of Proposition \ref{prop:implications_regression}]
\mbox{}
\begin{enumerate}
\item  Suppose \eqref{eq:CC} holds and let $\alpha \in (0,1)$.  Then,
\begin{align*}
\alpha \hsp = \hsp \myQ(Z_F \leq \alpha \mid F^{-1}(\alpha)) \hsp \geq \hsp \myQ(F(Y) \leq \alpha \mid F^{-1}(\alpha)) \hsp \geq \hsp \myQ(Y < F^{-1}(\alpha) \mid F^{-1}(\alpha)),
\end{align*}
as $y < F^{-1}(\alpha)$ implies $F(y) \leq \alpha$ for any fixed $y \in \real$ and any fixed cdf $F$.  Therefore, the first inequality in \eqref{eq:QC} holds.  

To demonstrate the second inequality in \eqref{eq:QC}, we first show that
\begin{align}  \label{eq:aux}
\alpha = \myQ(Z_F \leq \alpha \mid F^{-1}(\alpha)) = \myQ(Z_F < \alpha \mid F^{-1}(\alpha))
\end{align}
almost surely.  The first equality is immediate from \eqref{eq:CC}.  In order to prove the second equality, let us assume for a contradiction that $\myQ(A) > 0$, where
\begin{align*}
A = \left\{ \myQ \left( Z_F < \alpha \mid F^{-1}(\alpha) \right) < \myQ \left( Z_F \leq \alpha \mid F^{-1}(\alpha) \right) \right\}.
\end{align*}
As conditional (non) exceedance calibration implies probabilistic calibration, $\myQ(Z_F < \alpha) = \alpha$, whereas the law of total probability yields
\begin{align*}
\myQ(Z_F < \alpha) = \myQ(A) \hsp \myQ(Z_F < \alpha \mid A) + \myQ(\Omega \setminus A) \hsp \myQ(Z_F < \alpha \mid \Omega \setminus A) < \alpha,
\end{align*}
for the desired contradiction.  With \eqref{eq:aux}, we obtain
\begin{align*}
\alpha \hsp = \hsp \myQ \left( Z_F < \alpha \mid F^{-1}(\alpha) \right) \leq \hsp \myQ \left( F(Y-) < \alpha \mid F^{-1}(\alpha) \right) \leq \hsp \myQ \left( Y \leq  F^{-1}(\alpha) \mid F^{-1}(\alpha) \right) \! ,
\end{align*}
as $F(y-) < \alpha$ implies $y \leq  F^{-1}(\alpha)$ for any fixed $y \in \real$ and any fixed cdf $F$.
\item  Immediate by taking expectations.
\item  As noted by \citet{Gneiting2023a} in text following their Eq.~(10) and in their Appendix A.5, probabilistic calibration ensures that $\myQ(Y < F^{-1}_+(\alpha)) \leq \alpha \leq \myQ(Y \leq  F^{-1}(\alpha))$ for all $\alpha \in (0,1)$, which implies unconditional quantile calibration \eqref{eq:UQC}.
\item
The equivalence 
\begin{align*}
y <  F^{-1}(\alpha) \quad \Leftrightarrow \quad F(y) < \alpha
\end{align*}
for fixed $F$ and $y \in \real$ is immediate from the definition of the quantile function \eqref{eq:F_inverse} and implies the equality of the terms on the left-hand side of \eqref{eq:UQC} and \eqref{eq:WPC}, respectively. 
        
As $y \leq F^{-1}(\alpha)$ implies $F(y-) \leq \alpha$ for fixed $F$ and $y \in \real$, we have $\myQ(Y \leq F^{-1}(\alpha)) \leq \myQ(F(Y-) \leq \alpha)$.  Thus, unconditional quantile calibration \eqref{eq:UQC} implies weak probabilistic calibration \eqref{eq:WPC}.  For the reverse implication, we note that $F(y-) \leq \alpha - \varepsilon$ implies $y \leq  F^{-1}(\alpha)$ for all $\varepsilon > 0$ and fixed $F$ and $y \in \real$.  Therefore, \eqref{eq:WPC} implies
\begin{align*}
\myQ \! \left( Y \leq F^{-1}(\alpha) \right) \geq \hsp \lim_{\varepsilon \downarrow 0} \myQ \! \left( F(Y-) \leq \alpha - \varepsilon \right) \hsp \geq \hsp \lim_{\varepsilon \downarrow 0} \left( \alpha - \varepsilon \right) = \alpha,
\end{align*}
which is the second inequality in \eqref{eq:UQC}.
\end{enumerate}
\end{proof}

\begin{proof}[Proof of Proposition \ref{prop:binary}]
Letting $f(1) = P_F(\{ 1 \})$ and $g(1) = g(1 \mid F) = \myQ(Y = 1 \mid F) = \myQ(Y = 1 \mid f(1))$, we obtain $F(y) = f(1) \one(y \geq 1) + (1 - f(1)) \one(y \geq 2)$ and $G(y) = g(1) \one(y \geq 1) + (1 - g(1)) \one(y \geq 2)$.  Then $\alpha < f(1)$ implies $F_+^{-1}(\alpha) = 1$ and $F(F_+^{-1}(\alpha)-) = F(1-) = 0$.  Similarly, $\alpha \geq f(1)$ implies $F_+^{-1}(\alpha) = 2$ and $F(F_+^{-1}(\alpha)-) = F(2-) = f(1)$. Analogously, we obtain $G(F_+^{-1}(\alpha)-) = 0$ for $\alpha < f(1)$ and $G(F_+^{-1}(\alpha)-) = g(1)$ otherwise.  Therefore, we have
\begin{align*}
H(\alpha) = \myE_{\hsp \myQ} \! \left[ F(F_+^{-1}(\alpha)-) \right]  = \int_{(0,\alpha]} p \hsp\hsp \dd M(\hsp\hsp p)
\end{align*} 
and
\begin{align*}
K(\alpha) = \myE_{\hsp \myQ} \! \left[ \hsp G(F_+^{-1}(\alpha)-) \right] = \int_{(0,\alpha]} g(\hsp\hsp 1 \mid f(1) = p) \hsp \dd M(\hsp\hsp p)
\end{align*}
for $\alpha \in (0,1)$, where $M$ denotes the marginal law of the random variable $f(1)$ under $\myQ$.  We conclude that double PIT calibration implies
\begin{align*}
\int_{(0,\alpha]} \left( \, p - g(\hsp\hsp 1 \mid f(1) = p) \right) \hsp \dd M(\hsp\hsp p) = 0
\end{align*}	
for $\alpha \in (0,1)$, which yields auto-calibration with the arguments in the proof of the respective implication for probabilistic calibration in Theorem 2.11 of \citet{Gneiting2013}.  The converse is stated in Proposition \ref{prop:AC_DC_PC} (a). 
\end{proof}

\begin{proof}[Proof of Proposition \ref{prop:DC}]
Let $k \geq 2$, $\myE_{\hsp \myQ} \left[ F(k-1) \right] = \gamma$, and $\myQ(Y \leq k - 1) = \gamma'$.  Let the random variable $Y_F$ be such that $Y_F \mid F \sim F$, and let  
\begin{align*}
\widetilde{H}(\alpha) = \myQ \left( F(Y_F) \leq \alpha \mid Y_F \leq k - 1 \right)
\quad \text{and} \quad    
\widetilde{K}(\alpha) = \myQ \left( F(Y) \leq \alpha \mid Y \leq k - 1 \right)
\end{align*}
for $\alpha \in (0,1)$.  It then holds that 
\begin{align}  \label{eq:gamma_relations}
H(\alpha) & 
= \myQ \left( Y_F \leq k - 1 \right) \, \myQ \left( F(Y_F) \leq \alpha \mid Y_F \leq k - 1 \right)
= \gamma \, \widetilde{H}(\alpha)
\quad \text{and} \quad  
K(\alpha) = \gamma' \, \widetilde{K}(\alpha)
\end{align}
for $\alpha \in (0,1)$.  As the law of $F(Y)$ given $Y \leq k - 1$ is continuous, the arguments in the proof of Proposition \ref{prop:AC_DC_PC} \ref{prop:DC_uniform} imply that
\begin{align}  \label{eq:proof_argument}
\widetilde{H}(\alpha) = \widetilde{K}(\alpha) \text{ for } \alpha \in (0,1) \text{ if, and only if, the law of } \widetilde{H}(F(Y)) \text{ given } Y \leq k - 1 \text{ is standard uniform.} 
\end{align}
\vspace{-1.2\baselineskip} \\
If $F$ is double PIT calibrated then
\begin{align*}
\textstyle
\gamma' = \lim_{\alpha \uparrow 1} K(\alpha) = \lim_{\alpha \uparrow 1} H(\alpha) = \gamma \, \myQ(F(k-1) < 1) = \gamma,
\end{align*}
where the first equality is implied by the continuity of $\widetilde{K}$, the second by double PIT calibration, and the last by the assumption that $\myQ(F(k-1) < 1) = 1$.  By \eqref{eq:gamma_relations}, $\widetilde{H}(\alpha) = \widetilde{K}(\alpha)$ for $\alpha \in (0,1)$, and by the equivalence \eqref{eq:proof_argument}, $\widetilde{H}(F(Y))$ given $Y \leq k - 1$ is uniform on $(0,1)$, which in view of \eqref{eq:gamma_relations} is equivalent to $H(F(Y))$ being uniform on $(0,\gamma)$ conditional on  $Y \leq k - 1$.

Conversely, suppose that $\gamma = \gamma'$ and let the conditional law of $H(F(Y))$ given $Y \leq k - 1$ be uniform on $(0,\gamma)$.  By \eqref{eq:gamma_relations}, $\widetilde{H}(F(Y))$ given $Y \leq k - 1$ is uniform on $(0,1)$, and by the equivalence \eqref{eq:proof_argument}, $\widetilde{H}(\alpha) = \widetilde{K}(\alpha)$ for $\alpha \in (0,1)$.  Again by \eqref{eq:gamma_relations}, $H(\alpha) = K(\alpha)$ for $\alpha \in (0,\gamma)$. 
\end{proof}

\begin{proof}[Proof of claims in Example \ref{ex:nDC}]
Let $W \sim \operatorname{Unif}(0,\frac{1}{2})$, and $Y \mid W \sim \operatorname{Bernoulli}(1-2W)$.  The predictive cdf is $F(y) = W \one(y \geq 1) + (1-W) \one(y \geq 2)$, whereas the auto-calibrated cdf is $G(y) = 2W \allowbreak\one(y \geq 1) + (1-2W) \one(y \geq 2)$.  Then $F(Y) = W \one (Y = 1) + \one (Y = 2)$.  Let $\alpha \in (0,1)$.  We have $F^{-1}_+(\alpha) = 1 + \one(\alpha \geq W)$, and it follows that
\begin{align*}
F \left( F^{-1}_+(\alpha)- \right) = \begin{cases} 0, & \text{if } \alpha < W, \\ W, & \text{otherwise}, \end{cases}
\qquad
G \left( F^{-1}_+(\alpha)- \right) = \begin{cases} 0, & \text{if }\alpha < W, \\ 2W, & \text{otherwise}. \end{cases}
\end{align*}
Then 
\begin{align*}
H(\alpha) = \myE_{\hsp \myQ} \left( F(F^{-1}_+(\alpha)-)\right) = 
\begin{cases}
\alpha^2,    & \text{if } \alpha \leq \frac{1}{2}, \\
\frac{1}{4}, & \text{otherwise},
\end{cases}
\end{align*}
whereas
\begin{align*}
K(\alpha) = \myQ \left( F(Y) \leq \alpha \right) = \myE_{\myQ} \left( G(F^{-1}(\alpha)-) \right) = 
\begin{cases}
2 \alpha^2,  & \text{if }\alpha \leq \frac{1}{2}, \\
\frac{1}{2}, & \text{otherwise}.
\end{cases}
\end{align*}
Thus $K \neq H$, despite $\myQ(H(F(Y)) \leq \alpha \mid Y < 2) = 4 \alpha$ for $0 \leq \alpha \leq \frac{1}{4} = \myE_{\myQ} [ F(1)]$.
\end{proof}

\begin{proof}[Proof of Proposition \ref{prop:permutation}]  With $\rho \in S_k$ denoting the permutation that inverts the order of the classes, we have
\begin{align*}
F_\rho(k+1-t) = P_{F_\rho}(\{ y : y \leq k + 1 - t \}) = P_F(\{ y : y \geq t \}) = 1 - P_F(\{ y : y < t \}) = 1 - F(t-)
\end{align*}
for $t \in \real$ and
\begin{align*}
k + 1 - F_\rho^{-1}(\alpha) & = k + 1 - \inf \{ y : F_\rho(y) \geq \alpha \} = \sup \{ k + 1 - (k + 1 - y) : F_\rho(k + 1 - y) \geq \alpha \} \\
& = \sup \{ y : F(y-) \leq 1 - \alpha \} = \sup \{ y : F(y) \leq 1 - \alpha \} = F_+^{-1}(1 - \alpha)
\end{align*}
for $\alpha \in (0,1)$.  Furthermore, $F(y-) = F(y - 1)$ and $F(y) = 1 - F_\rho(k - y)$ for $y \in \{ 1, \ldots, k \}$ and, therefore,
\begin{align*}
Z_F & = F(Y-1) + V(F(Y) - F(Y-1)) \\
	& = 1 - F_\rho(k - Y + 1) + V \! \left(1 - F_\rho(k - Y) - \left( 1 - F_\rho(k - Y + 1) \right) \right) \\
	& = 1 - F_\rho(Y_\rho) - V \! F_\rho(Y_\rho - 1) + V F_\rho(Y_\rho) + [F_\rho(Y_\rho - 1) - F_\rho(Y_\rho - 1)] \\
    & =  1 - \left[F_\rho(Y_\rho - 1) + (1 - V)(F_\rho(Y_\rho) - F_\rho(Y_\rho - 1)\right] \\
    & \overset{d}{=} 1 - Z_{F_\rho}
\end{align*}
where the last equality is in distribution. 

\medskip

\noindent
Case $\NC = \PC$: As a direct consequence of the preceding observation, $F$ is probabilistically calibrated for $Y$ if, and only if, $F_\rho$ is probabilistically calibrated for $Y_\rho$.

\medskip

\noindent
Case $\NC = \CC$:  Let $\alpha \in (0,1)$ and $q \in \real$.  Then
\begin{align}  \label{eq:proof1}
\myQ \left( Z_{F_\rho} \leq \alpha \mid F_{\rho}^{-1}(\alpha) \geq q \right) 
& = 1 - \myQ \left( Z_F < 1 - \alpha \mid F_+^{-1}(1 - \alpha) \leq k + 1 - q \right) \nonumber \\ 
& = 1 - {\textstyle \lim_{\varepsilon \downarrow 0}} \: \myQ \left( Z_F < 1 - \alpha \mid F^{-1}(1 - \alpha + \varepsilon) \leq k + 1 - q \right) \nonumber \\
& \geq 1 - {\textstyle \lim_{\varepsilon \downarrow 0}} \: \myQ \left( Z_F \leq 1 - \alpha + \varepsilon \mid F^{-1}(1 - \alpha + \varepsilon) \leq k + 1 - q \right) \nonumber \\
& = 1 - {\textstyle \lim_{\varepsilon \downarrow 0}} \left( 1 - \alpha + \varepsilon \right) = \alpha
\end{align}
almost surely, where we obtain the second equality as follows.  Let $A = \{ Z_F \leq 1 - \alpha \}, B = \allowbreak \{ F_+^{-1}(1 - \alpha) \leq k + 1 - q \}$, and $B_\varepsilon = \{ F^{-1}(1 - \alpha + \varepsilon) \leq k + 1 - q \}$. As 
\begin{align*}
F_+^{-1}(1 - \alpha) \leq F^{-1}(1 - \alpha + \varepsilon) \leq F_+^{-1}(1 - \alpha + \varepsilon) \overset{\varepsilon \downarrow 0}{\longrightarrow} F_+^{-1}(1 - \alpha)
\end{align*}
by the right-continuity of $F_+^{-1}$, we have $\lim_{\varepsilon\downarrow 0} F^{-1}(1 - \alpha + \varepsilon) = F_+^{-1}(1 - \alpha)$.  For each $\omega \in \Omega$ there exists $\varepsilon_0(\omega)$ such that $F(\omega)^{-1}(1-\alpha + \varepsilon) < F(\omega)^{-1}_+(1-\alpha) + 1$ for all $\varepsilon < \varepsilon_0(\omega)$, and thus $F(\omega)^{-1}(1-\alpha + \varepsilon) = F(\omega)^{-1}_+(1-\alpha)$ for all $\varepsilon < \varepsilon_0(\omega)$ because of the discreteness of the quantile functions, which only take integer values.  Therefore the indicator $I_{B_\varepsilon}(\omega) = \one(\omega \in B_\varepsilon)$ converges pointwise to $I_B(\omega) = \one(\omega \in B)$.  With $I_A(\omega) = \one(\omega \in A)$, we obtain the desired equality 
\begin{align*}
\myQ(A\mid B_\varepsilon) = \frac{\myE_{\myQ}[I_AI_{B_\varepsilon}]}{\myE_{\myQ}[I_{B_\varepsilon}]} \overset{\varepsilon\downarrow 0}{\longrightarrow} \frac{\myE_{\myQ}[I_AI_{B}]}{\myE_{\myQ}[I_{B}]} = \myQ(A\mid B)
\end{align*}
as long as $\myQ(B) > 0$, where we have used the dominated convergence theorem.

In an analogous fashion, we obtain
\begin{align}  \label{eq:proof2}
\myQ \left( Z_{F_\rho} \leq \alpha \mid F_{\rho}^{-1}(\alpha) < q \right) 
& = 1 - \myQ \left( Z_F < 1 - \alpha \mid F_+^{-1}(1 - \alpha) > k + 1 - q \right) \geq \alpha.
\end{align}
As conditional (non) exceedance probability implies probabilistic calibration, and probabilistic calibration of $F$ for $Y$ implies probabilistic calibration of $F_\rho$ for $Y_\rho$ by the previous case, the law of total probability yields
\begin{align*}
\alpha = \myQ \left( Z_{F_\rho} \leq \alpha \right)
& = \myQ \left( F_{\rho}^{-1}(\alpha) < q \right) \myQ \left( Z_{F_\rho} \leq \alpha \mid F_{\rho}^{-1}(\alpha) < q \right) \\
& \mbox{} \qquad + \, \myQ \left( F_{\rho}^{-1}(\alpha) \geq q \right) \myQ \left( Z_{F_\rho} \leq \alpha \mid F_{\rho}^{-1}(\alpha) \geq q \right) .
\end{align*}
Owing to \eqref{eq:proof2} we deduce that 
\begin{align}  \label{eq:proof3}
\myQ(Z_{F_\rho} \leq \alpha \mid F_{\rho}^{-1}(\alpha) \geq q) = \frac{\alpha - (1 - \myQ(F_{\rho}^{-1}(\alpha) \geq q)) \: \myQ(Z_{F_\rho} \leq \alpha \mid F_{\rho}^{-1}(\alpha) < q)}{\myQ(F_{\rho}^{-1}(\alpha) \geq q)} \leq \alpha
\end{align}
almost surely whenever $\myQ(F_{\rho}^{-1}(\alpha) \geq q) > 0$.  As the family of events $A_q = \{ F_{\rho}^{-1}(\alpha) \geq q \}$ is closed under intersections and generates the $\sigma$-algebra $\sigma(F_{\rho}^{-1}(\alpha))$ generated by $F_{\rho}^{-1}(\alpha)$, Theorem 33.1 of \citet{Billingsley2013} applies, and we conclude from the inequalities \eqref{eq:proof1} and \eqref{eq:proof3} that $\myQ(Z_{F_\rho} \leq \alpha \mid F_{\rho}^{-1}(\alpha)) = \alpha$ almost surely.

\medskip

\noindent
Case $\NC = \TC$:  If $t \in \real \setminus \{ 1, \ldots, k \}$ then $1 - F_\rho(k + 1 - t) = F(t-) = F(t)$ and $\myQ(Y < t) = \myQ(Y \leq t)$ and, therefore, 
\begin{align*}
\myQ \left( Y_\rho \leq k + 1 - t \mid F_\rho(k+1 - t) \right) & = 1 - \myQ \left( Y < t \mid F(t) \right) \\ 
& = 1 - \myQ \left( Y \leq t \mid F(t) \right) = 1 - F(t) = F_\rho(k + 1 -t).
\end{align*}
If $t \in \{ 1, \ldots, k \}$ then $F_\rho(k + 1 - t) = 1 - F(t - 1)$ and $\myQ(Y < t) = \myQ(Y \leq t - 1)$ and, therefore,
\begin{align*}
\myQ \left( Y_\rho \leq k + 1 - t \mid F_\rho(k + 1 - t) \right) & = 1 - \myQ \left( Y < t \mid F(t - 1) \right) \\ 
& = 1 - \myQ \left( Y \leq t - 1 \mid F(t - 1) \right) = 1 - F(t - 1) = F_\rho(k + 1 - t).
\end{align*}
This establishes threshold calibration of $F_\rho$ for $Y_\rho$, as we have shown that $\myQ( Y_\rho \leq u \mid F_\rho(u) ) = F_\rho(u)$ for all $u \in \real$.
\end{proof}

\begin{proof}[Proof of Proposition \ref{prop:mixLS}.]
Let $\Lambda$ be a probability measure on $\cP$ with $\Lambda(\LS_\myT(s)) = 1$ for some $s \subseteq \real$, and let $Q = \int P \dd \Lambda(P)$.  Let $x \in \cT$ and $t \in s$.  Following the proof of Proposition 3.4 in \citet{Fissler2021}, we note that
\begin{align*}
\myE_{Y \sim P}[\myS(x,Y) - \myS(t,Y)] 
\begin{cases}
= 0, & \text{if } x \in s, \\
> 0, & \text{otherwise},
\end{cases}
\end{align*}
for $P \in \LS_\myT(s)$.  Therefore,
\begin{align*}
\myE_{Y\sim Q}[\myS(x,Y) - \myS(t,Y)]
& = \int [\myS(x,y) - \myS(t,y)] \dd Q(y) \\
& = \int \int [\myS(x,y) - \myS(t,y)] \dd P(y) \dd \Lambda(P) \\ 
& = \int \myE_{Y \sim P}[\myS(x,Y) - \myS(t,Y)] \dd \Lambda(P) 
\begin{cases}
= 0, & \text{if } x \in s, \\ 
> 0, & \text{otherwise}.
\end{cases}
\end{align*}
Thus, the expected score function $\myE_{Y \sim Q} [\myS(x,Y)]$ attains its minimum if, and only if, $x \in s$, which demonstrates that the functional is also elicitable under $Q$ with $\myT(Q) = s$.
\end{proof}

\section{Proofs of the claims in the tables}  \label{app:examples}

In this appendix, we establish the properties stated in Tables \ref{tab:examples_classification}, \ref{tab:examples_regression}, \ref{tab:examples_ordinal}, and \ref{tab:examples_classification_extended}.  The following claims are easy to establish with paper and pencil, and we leave the straightforward arguments to the reader.
\begin{itemize} 
\item
All claims in Table \ref{tab:examples_classification}.
\item 
The claims for the examples from \citet{Gneiting2023a} in Table \ref{tab:examples_regression}.  While we use a weaker notion of quantile calibration in this paper than in \citet{Gneiting2023a}, these examples are not affected, as they feature continuous distributions with unique quantiles.
\item 
The claims for Example \ref{ex:SRS} in Tables \ref{tab:examples_regression}, \ref{tab:examples_ordinal}, and \ref{tab:examples_classification_extended}.
\item 
All claims for auto-calibration ($\AC$), class-wise confidence calibration ($\CwCoC$), confidence calibration ($\CoC$), and modal calibration ($\CMC$). 
\end{itemize}

Generally, in view of the hierarchical relations summarized in Figures \ref{fig:hierarchy_classification}, \ref{fig:hierarchy_regression}, and \ref{fig:hierarchy_classification_extended}, it suffices to establish a proper subset of the claims in Tables \ref{tab:examples_regression}, \ref{tab:examples_ordinal}, and \ref{tab:examples_classification_extended}.  Table \ref{tab:app_examples} collects the properties to be demonstrated, using the acronyms from the figures, where we overline the acronym if it is to be shown that a notion does not hold.  

\begin{table}[t]
\centering
\caption{Notions of calibration to be verified ($\NC$) or falsified ($\overline{\NC}$) in order to establish the claims in Tables \ref{tab:examples_regression}, \ref{tab:examples_ordinal}, and \ref{tab:examples_classification_extended}, up to the straightforward cases noted in the text.  Acronyms used are from Definitions \ref{def:AC_MC}, \ref{def:classification}, \ref{def:regression}, \ref{def:DC}, and \ref{def:classification_extended}.  \label{tab:app_examples}}
\begin{tabular}{lll}
\toprule
Table & Example & To be Demonstrated \\
\toprule
Table \ref{tab:examples_regression} & Example \ref{ex:dPC} & $\QC$, $\overline{\PC}$, $\overline{\MC}$ \\
Table \ref{tab:examples_regression} & Example \ref{ex:TC+nUQC} & $\TC$, $\overline{\UQC}$ \\
\midrule
Tables \ref{tab:examples_ordinal} and \ref{tab:examples_classification_extended} & Example \ref{ex:dnwPC} & $\overline{\MC}$, $\overline{\UQC}$, $\DC$; $\overline{\pPC}$, $\overline{\aPC}$ \\ 
Tables \ref{tab:examples_ordinal} and \ref{tab:examples_classification_extended} & Example \ref{ex:cond} & $\overline{\DC}$; $\fCC$, $\fTC$ \\
\midrule
Table \ref{tab:examples_classification_extended} & \citet[Table 1]{Vaicenavicius2019} & $\pCC$, $\overline{\fPC}$, $\aPC$, $\fTC$ \\
Table \ref{tab:examples_classification_extended} & \citet[Supplement, Table 2]{Vaicenavicius2019} &  $\overline{\CwC}$, $\MC$, $\overline{\pPC}$, $\overline{\aPC}$, $\overline{\pTC}$ \\
Table \ref{tab:examples_classification_extended} & \citet[Footnote 2]{Silva2023} & $\pCC$, $\overline{\aPC}$, $\fTC$ \\
Table \ref{tab:examples_classification_extended} & \citet[Example 2.4 (b)]{Gneiting2023a} & $\overline{\CwC}$, $\MC$, $\pCC$, $\overline{\aPC}$, $\pTC$ \\
Table \ref{tab:examples_classification_extended} & \citet[Example 2.14 (b)]{Gneiting2023a} & $\overline{\pPC}$, $\aPC$, $\fTC$ \\
Table \ref{tab:examples_classification_extended} & Example \ref{ex:CwC} & $\CwC$, $\overline{\pPC}$, $\aPC$, $\overline{\pTC}$ \\
Table \ref{tab:examples_classification_extended} & Example \ref{ex:pPC+nMC} & $\overline{\MC}$, $\overline{\pCC}$, $\overline{\fPC}$, $\pPC$, $\aPC$ \\
Table \ref{tab:examples_classification_extended} & Example \ref{ex:fPC+nCoC} & $\overline{\CwC}$, $\MC$, $\overline{\pCC}$, $ \fPC$, $\overline{\pTC}$ \\
Table \ref{tab:examples_classification_extended} & Example \ref{ex:fCC+nCoC} & $\fCC$, $\fTC$ \\
\bottomrule
\end{tabular}
\end{table}

We now explain how we ensure calibration properties by construction, taking Example 2.4 (b) from \citet{Gneiting2023a} as a blue print.  In this example, we have $k = 3$ classes and $N = 3$ equiprobable predictive distributions, which we prescribe as follows.

\begin{center}
\begin{tabular}{>{$}c<{$} | >{$}c<{$} | >{$}c<{$} >{$}c<{$} >{$}c<{$}}
\toprule
j & \myQ(F = F_j) & f_j(1) & f_j(2) & f_j(3) \\
\midrule
1 & 1/3 & 1/2 & 1/4 & 1/4 \\ 
2 & 1/3 & 1/4 & 1/2 & 1/4 \\ 
3 & 1/3 & 1/4 & 1/4 & 1/2 \\ 
\bottomrule
\end{tabular}
\end{center}

We follow the flow of ideas in the proof of Proposition \ref{prop:constructive} in constructing the linear constraints for marginal calibration and probabilistic calibration in the general setting, and then we return to our example and showcase the particular linear system used in its construction.  In the general setting, we assume forecast probabilities of the from $m/n$ for positive integers $n \geq k$ and $m \in \{ 1, \ldots, n - k + 1 \}$.  We use the shorthand $p_j = \myQ(F = F_j)$ to denote the probability mass of forecast $F_j$.  The goal is to find conditional outcome probabilities $g_j(i) = \myQ(Y = i \mid F = F_j)$, where $i = 1, \ldots, k$ and $j = 1, \ldots, N$, such that the random forecast $F$ is both marginally calibrated and probabilistically calibrated. 
To begin with, we ensure that the conditional outcome probabilities sum to one by invoking the linear constraint
\begin{align}  \label{eq:linProbSumtoOne}
\sum_{i = 1}^k g_j(i) = 1
\end{align}
for $j = 1, \ldots, N$.

Marginal calibration corresponds to the expected forecast probability matching the marginal class probability 
for each class, which yields the linear constraint
\begin{align}  \label{eq:linMC}
\sum_{j = 1}^N p_j \hsp \hsp g_j(i) = \sum_{j = 1}^N p_j \hsp f_j(i)
\end{align}
for $i = 1, \ldots, k$.  To formalize probabilistic calibration, we consider the conditional distribution of the randomized PIT $Z_F$ given $F = F_j$, where $j = 1, \ldots, N$.  This law is described by a piecewise linear cdf with $k - 1$ possible breakpoints at $F_j(i)$, where $i = 1, \ldots, k - 1$, namely,  
\begin{align}  \label{eq:condCDFofPIT}
\begin{multlined}[t]
u \longmapsto \myQ(Z_F \leq u \mid F = F_j) = \sum_{i=1}^k q_{i,j}(u) \, g_j(i) \\
\text{with} \qquad q_{i,j}(u) = \myQ(Z_F \leq u \mid Y = i, F = F_j) = 
\begin{cases}
0,                         & \text{if } u < F_j(i-1), \\
\frac{u-F_j(i-1)}{f_j(i)}, & \text{if } F_j(i-1) \leq u < F_j(i), \\
1,                         & \text{if } u \geq F_j(i).
\end{cases}
\end{multlined}
\end{align}
Note that $q_{i,j}$ is a conditional version of the nonrandomized PIT from \citet[][Eq.\ (2)]{Czado2009}, which is an alternative to the randomized PIT for count data.  As $F$ itself follows a discrete distribution, the (unconditional) cdf of $Z_F$ is simply the (probability-weighted) pointwise average of the conditional cdfs of $Z_F$, that is,
\begin{align}  \label{eq:cdf_PIT}
u \longmapsto \myQ(Z_F \leq u) = \sum_{j = 1}^N p_j \hsp \hsp \myQ(Z_F \leq u \mid F = F_j) = \sum_{i,j} p_j \hsp \hsp q_{i,j}(u) \hsp \hsp g_j(i),
\end{align}
which is a piecewise linear function with possible breakpoints at $u = F_j(i)$ for $i = 1, \ldots, k - 1$ and $j = 1, \ldots, N$.  The forecast $F$ is probabilistically calibrated if the cdf in \eqref{eq:cdf_PIT} is the identity function on the unit interval, which is equivalent to the breakpoints being preserved.  Thus, we obtain the linear constraint
\begin{align}  \label{eq:linPC}
\sum_{i,j} p_j \hsp \hsp q_{i,j}(u) \hsp \hsp g_j(i) = u
\end{align}
at $u = 1/n, \ldots, (n-1)/n$.  

In the above case of Example 2.4 (b) from \citet{Gneiting2023a}, where we have $k = 3$ classes, $N = 3$ distinct predictive distributions, and $n = 4$, the constraints from \eqref{eq:linProbSumtoOne}, \eqref{eq:linMC}, and \eqref{eq:linPC} yield the linear system
\begin{align*}
\left(\begin{array}{ccccccccc}
    1 & 0 & 0 & 1 & 0 & 0 & 1 & 0 & 0 \\ 
    0 & 1 & 0 & 0 & 1 & 0 & 0 & 1 & 0 \\ 
    0 & 0 & 1 & 0 & 0 & 1 & 0 & 0 & 1 \\ 
    1/3 & 1/3 & 1/3 & 0 & 0 & 0 & 0 & 0 & 0 \\ 
    0 & 0 & 0 & 1/3 & 1/3 & 1/3 & 0 & 0 & 0 \\ 
    0 & 0 & 0 & 0 & 0 & 0 & 1/3 & 1/3 & 1/3 \\ 
    1/6 & 1/3 & 1/3 & 0 & 0 & 0 & 0 & 0 & 0 \\ 
    1/3 & 1/3 & 1/3 & 0 & 1/6 & 1/3 & 0 & 0 & 0 \\ 
    1/3 & 1/3 & 1/3 & 1/3 & 1/3 & 1/3 & 0 & 0 & 1/6 \\ 
\end{array} \right)
\begin{pmatrix}
    g_1(1) \\
    g_1(2) \\
    g_1(3) \\
    g_2(1) \\
    g_2(2) \\
    g_2(3) \\
    g_3(1) \\
    g_3(2) \\
    g_3(3) \\
\end{pmatrix}
= \begin{pmatrix}
    1 \\
    1 \\
    1 \\
    1/3 \\
    1/3 \\
    1/3 \\
    1/4 \\
    2/4 \\
    3/4 \\
\end{pmatrix},
\end{align*}
which has the particular solution $g^{(\hsp p)} = (f_1(1), f_1(2), f_1(3), f_2(1), f_2(2), f_2(3), f_3(1), f_3(2), f_3(3))^T$ and the homogeneous solution $g^{(h)} = (0, 1, {-1}, 1, {-2}, 1, {-1}, 1, 0)^T$, and thus general solutions of the form $g^{(\hsp p)} + c\cdot g^{(h)}$.  By choosing $c$ sufficiently close to zero, we ensure that the posited conditional outcome probabilities are nonnegative.  In particular, the choice $c = - 3/20$ recovers Example 2.4 (b) from \citet{Gneiting2023a}, as follows.
\begin{center}
\begin{tabular}{>{$}c<{$} | >{$}c<{$} | >{$}c<{$} >{$}c<{$} >{$}c<{$} | >{$}c<{$} >{$}c<{$} >{$}c<{$}}
\toprule
j & \myQ(F = F_j) & f_j(1) & f_j(2) & f_j(3) & g_j(1) & g_j(2) & g_j(3) \\
\midrule
1 & 1/3 & 1/2 & 1/4 & 1/4 & 5/10 & 1/10 & 4/10 \\ 
2 & 1/3 & 1/4 & 1/2 & 1/4 & 1/10 & 8/10 & 1/10 \\ 
3 & 1/3 & 1/4 & 1/4 & 1/2 & 4/10 & 1/10 & 5/10 \\ 
\bottomrule
\end{tabular}
\end{center}

Beyond marginal calibration and probabilistic calibration, many other notions of calibration can be encoded in terms of linear constraints on the conditional outcome probabilities.  We leverage this approach in the supplemental code in order to construct examples with particular properties.  In what follows, we collect the respective constraints, as summarized in Table \ref{tab:linNC} for notions of calibration from Definitions \ref{def:AC_MC}, \ref{def:classification}, \ref{def:regression}, and \ref{def:DC}.  The full notions from Definition \ref{def:classification_extended} are covered by applying the constraints to all permutations of the classes.  For average probabilistic calibration, we average the constraints from full probabilistic calibration across the permutations.

\begin{table}[t]
\centering
\caption{Notions of calibration that we encode in the form of linear constraints on the conditional outcome probabilities, in the order of appearance in, and using acronyms from, Definitions \ref{def:AC_MC}, \ref{def:classification}, \ref{def:regression}, and \ref{def:DC}.  \label{tab:linNC}}
\begin{tabular}{l|cccccc}
\toprule
Notion & $\MC$ & $\CwC$ & $\PC$ & $\CC$ & $\TC$ & $\DC$ \\ 
\midrule
Linear constraints & \eqref{eq:linMC} & \eqref{eq:linCwC} & \eqref{eq:linPC} & \eqref{eq:linCC} & \eqref{eq:linTC} & \eqref{eq:linDC} \\ 
\bottomrule
\end{tabular}
\end{table}

Conditional (non) exceedance probability calibration ($\CC$) splits the linear constraint in \eqref{eq:linPC} into multiple constraints conditional on the predicted $u$-quantiles $F^{-1}(u)$.  For each $u\in (0,1)$, the conditioning gives rise to a partition of the forecast cases $1, \ldots, N$ into subsets $P^{\CC}(u,y) = \{ \hsp j : F_j^{-1}(u) = y \}$, where $y \in \{ 1, \ldots, k \}$.  The subsets $P^{\CC}(u,y)$ are constant in $u$ on the half-open interval $((m-1)/n,m/n]$ for any $m = 1, \ldots, n$, by construction of the probability mass functions $f_j$ and the left-continuity of the (lower) quantile function.  Owing to the piecewise linear nature of the conditional cdf at \eqref{eq:condCDFofPIT}, it suffices to ensure the $\CC$ condition \eqref{eq:CC} for two distinct values from these intervals, such as $u = (2m-1)/(2n)$ and $u = m/n$.  With 
\begin{align*}
\overline{p}_j^{\hsp \CC}(u,y) = \begin{cases}
p_j / \left( \sum_{r \in P^{\CC}(u,y)} p_r \right), & \text{if } j \in P^{\CC}(u,y), \\
0,                                                  & \text{otherwise},
\end{cases}
\end{align*}
the condition reduces to the linear constraint
\begin{align}  \label{eq:linCC}
\sum_{i,j} \overline{p}_j^{\hsp \CC}(u,y) \hsp \hsp q_{i,j}(u) \hsp \hsp g_j(i) = u
\end{align}
for $u = 1/(2n), 2/(2n), \ldots, (2n-1)/(2n)$ and $y = 1, \ldots, k$ such that $P^{\CC}(u,y)$ is non-empty.

\begin{figure}[t]
\centering
\includegraphics[width=0.49\linewidth]{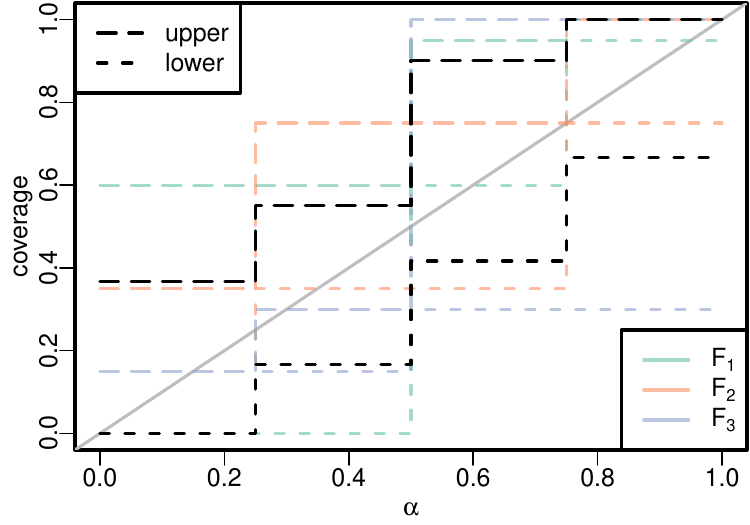}
\includegraphics[width=0.49\linewidth]{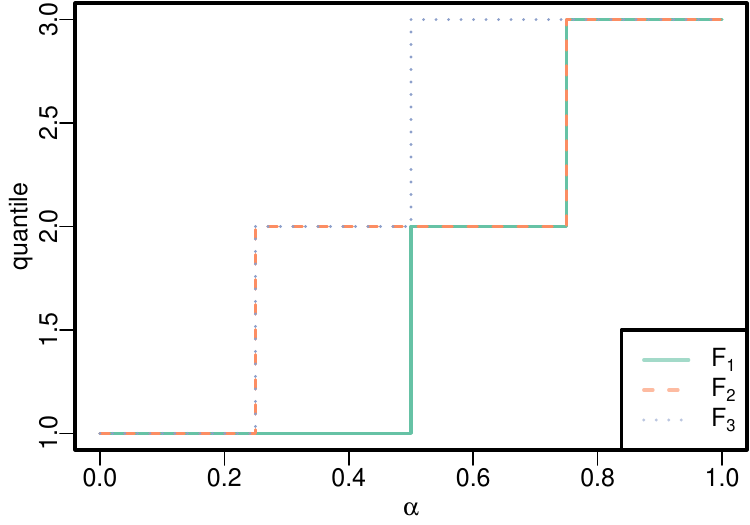}
\caption{Conditional and unconditional quantile coverage (left) and predictive quantile functions (right) in Example \ref{ex:dPC}.  The colored lines in the left panel show conditional coverage, the black lines show unconditional lower coverage and upper coverage, respectively.  \label{fig:cond_cov}}
\end{figure}

In a similar fashion, threshold calibration ($\TC$) reduces to linear constraints on the conditional outcome probabilities.  Here, it suffices to consider thresholds $t = 1, \ldots, k - 1$ at which the predicted cdfs have jumps and do not all attain the value one.  Each such threshold $t$ yields a partition into subsets $P^{\TC}(u,t) = \{ j : F_j(t) = u \}$, where $u = 1/n, \ldots, (n-1)/n$.  With
\begin{align*}
\overline{p}_j^{\hsp \TC}(u,t) = 
\begin{cases}
p_j / (\sum_{r \in P^{\TC}(u,t)} p_r), & \text{if } j \in P^{\TC}(u,t), \\
0,                                     & \text{otherwise},
\end{cases}
\end{align*}
we obtain the linear constraint
\begin{align}  \label{eq:linTC}
\sum_{i,j} \overline{p}_j^{\hsp \TC}(u,t) \hsp \hsp \one(i \leq t) \, g_j(i) = u
\end{align}
for $u = 1/n, \ldots, (n-1)/n$ and $t = 1, \ldots, k-1$ such that $P^{\TC}(u,t)$ is non-empty.

To formulate double PIT calibration ($\DC$) in terms of linear constraints, we note that the cdf of the simplified PIT $F(Y)$ is given by $K(u) = \sum_{i,j} p_j \, \widetilde{q}_{i,j}(u) \, g_j(i)$ with $\widetilde{q}_{i,j}(u) = \one(F_j(i) \leq u)$, which is obtained by replacing $q_{i,j}$ with $\widetilde{q}_{i,j}$ in \eqref{eq:condCDFofPIT} and \eqref{eq:cdf_PIT}, whereas the hypothesized cdf is given by $H(u) = \sum_{i,j} p_j \, \widetilde{q}_{i,j}(u) \hsp \hsp f_j(i)$.  In the setting of our examples, the functions $H$ and $K$ are constant on intervals of the form $[(m-1)/n,m/n)$, and thus the $\DC$ condition $K = H$ reduces to
\begin{align}  \label{eq:linDC}
\sum_{i,j} p_j \hsp \hsp \widetilde{q}_{i,j}(u) \hsp \hsp g_j(i) = \sum_{i,j} p_j \hsp \hsp \widetilde{q}_{i,j}(u) \hsp \hsp f_j(i)
\end{align}
for $u = 1/n, \ldots, (n-1)/n$.

To address class-wise calibration ($\CwC$), we refine the linear constraints in \eqref{eq:linMC} by conditioning on the predicted class probability $f_j(i)$.  For $i = 1, \ldots, k$, the conditioning gives rise to a partition of the forecast cases $1, \ldots, N$ into subsets $P^{\hsp \CwC}(u,i) = \{ j : f_j(i) = u \}$, where $u \in (0,1)$.  Letting 
\begin{align*}
\overline{p}_j^{\hsp \CwC}(u,i) = 
\begin{cases}
p_j / (\sum_{r \in P^{\CwC}(u,i)} p_r), & \text{if } j \in P^{\CwC}(u,i), \\
0,                                      & \text{otherwise},
\end{cases}
\end{align*}
we obtain the linear constraint
\begin{align}  \label{eq:linCwC}
\sum_j \overline{p}^{\hsp \CwC}_j(u,i) \hsp \hsp g_j(i) = u
\end{align}
for $i = 1, \ldots, k$ and $u = 1/n, \ldots, (n+1-k)/n$ such that $P^{\hsp \CwC}(u,i)$ is non-empty.

The supplementary code provides implementations in \textsf{R} \citep{R} of the example generation and the above linear conditions, which we use to verify or falsify the notions in Table \ref{tab:linNC} and the respective versions from Definition \ref{def:classification_extended} as listed in Table \ref{tab:app_examples}.

\begin{figure}[t]
\centering
\includegraphics[width=0.49\linewidth]{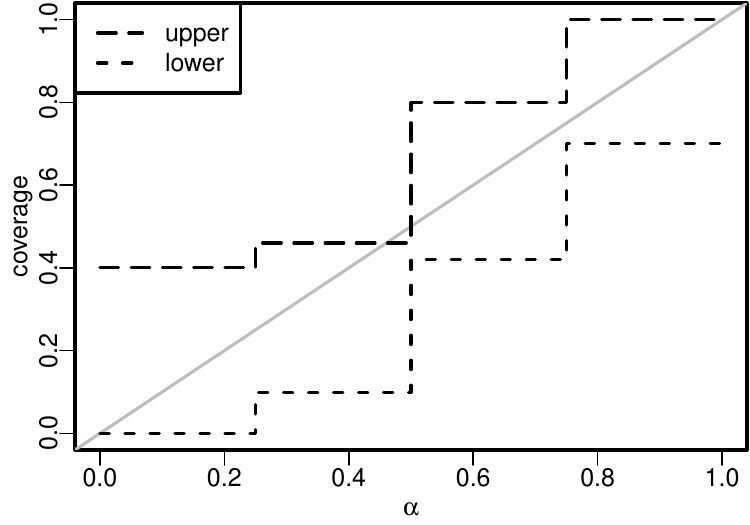}
\includegraphics[width=0.49\linewidth]{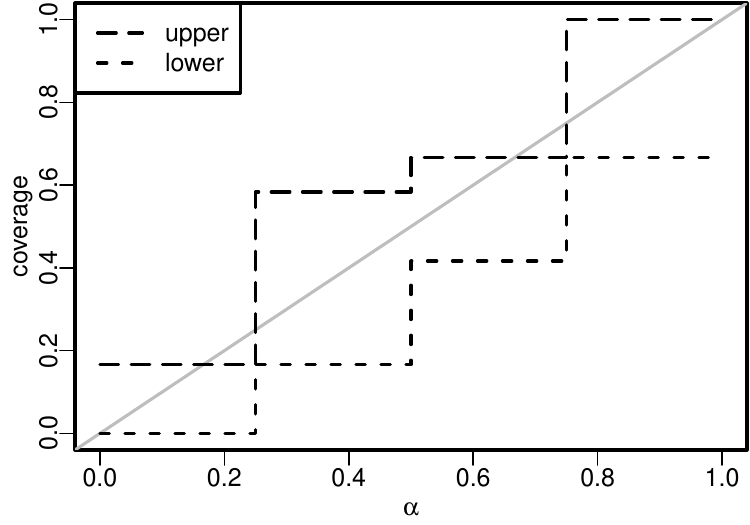}
\caption{Unconditional lower and upper quantile coverage in Example \ref{ex:TC+nUQC} (left) and Example \ref{ex:dnwPC} (right).  \label{fig:ucond_cov}}
\end{figure}

Finally, we verify the claims about the quantile-based notions $\QC$ and $\UQC$ in Table \ref{tab:app_examples}.  Essentially, the conditions in \eqref{eq:QC} and \eqref{eq:UQC} represent conditional and unconditional coverage, as discussed in detail by \citet{Gneiting2023b}.  Coverage can be computed for each forecast case $1, \ldots, N$ individually and can then be averaged conditionally on the forecast quantile, or unconditionally.  Figure \ref{fig:cond_cov} shows upper and lower coverage for each forecast case along with the corresponding quantile functions in Example \ref{ex:dPC}, which facilitate visual verification of quantile calibration: At each level $\alpha \in (0,1)$, we group forecasts according to the predicted $\alpha$-quantile and verify that within each group the average (probability-weighted) lower coverage at level $\alpha$ does not exceed $\alpha$, while the average upper coverage does not fall below $\alpha$.  Leveraging the piecewise constancy, it suffices to verify the $\QC$ constraints in \eqref{eq:QC} near the jump points of these functions.  Lastly, Figure \ref{fig:ucond_cov} shows unconditional lower and upper coverage in Examples \ref{ex:TC+nUQC} and \ref{ex:dnwPC}, where the $\UQC$ condition \eqref{eq:UQC} is notably violated.

\end{document}